\newtheorem{definition}{Definition}[section]
\newtheorem{theorem}{Theorem}[section]
\newtheorem{lemma}[theorem]{Lemma}
\newcommand{\Ec}{\mathcal{E}}
\newcommand{\Fc}{\mathcal{F}}
\newcommand{\Ic}{\mathcal{I}}
\newcommand{\Nc}{\mathcal{N}}
\newcommand{\Sc}{\mathcal{S}}
\newcommand{\Pb}{\mathbb{P}}
\newcommand{\Eb}{\mathbb{E}}
\newcommand{\Rb}{\mathbb{R}}
\newcommand{\argmax}{\arg\max}
\newcommand{\Ent}{\mathrm{Ent}}
\begin{document}

%

%

\twocolumn[

\aistatstitle{Approximate Top-$m$ Arm Identification with Heterogeneous Reward Variances}

\aistatsauthor{ Ruida Zhou \And Chao Tian }

\aistatsaddress{ Texas A\&M University \And Texas A\&M University } ]

\begin{abstract}
  We study the effect of reward variance heterogeneity in the approximate top-$m$ arm identification setting. In this setting, the reward for the $i$-th arm follows a $\sigma^2_i$-sub-Gaussian distribution, and the agent needs to incorporate this knowledge to minimize the expected number of arm pulls to identify $m$ arms with the largest means within error $\epsilon$ out of the $n$ arms, with probability at least $1-\delta$. We show that the worst-case sample complexity of this problem is 
{\tiny
\begin{align}
\Theta\left( \sum_{i =1}^n \frac{\sigma_i^2}{\epsilon^2} \ln\frac{1}{\delta} + \sum_{i \in G^{m}} \frac{\sigma_i^2}{\epsilon^2} \ln(m) + \sum_{j \in G^{l}} \frac{\sigma_j^2}{\epsilon^2} \text{Ent}(\sigma^2_{G^{r}}) \right),\label{eqn:SC}
\end{align}}
where $G^{m}, G^{l}, G^{r}$ are certain specific subsets of the overall arm set $\{1, 2, \ldots, n\}$, and $\text{Ent}(\cdot)$ is an entropy-like function which measures the heterogeneity of the variance proxies. The upper bound of the complexity is obtained using a divide-and-conquer style algorithm, while the matching lower bound relies on the study of a dual formulation.
\end{abstract}

\section{Introduction}

In the multi-armed bandit (MAB) model, an agent interacts with a slot machine by pulling one of the many arms and observing the corresponding reward at each time step \citep{lattimore2020bandit}. The goal in the canonical MAB setting is to maximize the cumulative reward. In order to accomplish this goal, algorithms must be designed to balance exploration and exploitation during this online learning process; the objective in this setting is usually referred to as regret minimization. In many applications, the true goal is in fact not to maximize the cumulative reward, but to identify the best arm among all the arms, and regret minimization does not match the true goal in such cases. Instead, the best arm identification problem is the more suitable formulation, and it is a pure exploration problem \citep{bubeck2009pure} that aims to identify the best arm as \emph{fast} and \emph{accurately} as possible. 

Approximate best arm identification with fixed confidence is a formal PAC-learning formulation for the best arm identification setting, where the agent is required to identify an arm whose expected reward is not less than that of the best arm by $\epsilon$, with a confidence at least $1 - \delta$. A more general version of this problem is to identify the top-$m$ arms, where the expected rewards of the $m$ arms identified are not less than that of the $m$-th best arm by $\epsilon$, with a confidence at least $1-\delta$. In this setting, the algorithms will have a performance guarantee in terms of the confidence of success. We refer to these settings as $(\epsilon,\delta)$ best arm identification, and $(\epsilon,\delta)$ top-$m$ arm identification, respectively.  

In most previous works on multi-armed bandit, an inherent assumption is that the reward distribution of each arm is sub-Gaussian, and moreover, the variance proxies are homogeneous among all the arms. Such an assumption may be natural when the rewards are bounded in a range, or it is reasonable to view the arms as of the same randomness nature (except the mean rewards of the arms). In other applications, this assumption is less suitable, since the reward distributions are naturally heterogeneous. In this work, we consider $(\epsilon,\delta)$ best $m$-arm identification with sub-Gaussian distributed rewards when the variance proxies are heterogeneous and known. Our goal is to understand the worst-case sample complexity of the problem as a function of the number of arms $n$, the number of best arms to be identified $m$, the variance proxy vector $(\sigma^2_1,\sigma^2_2,\ldots,\sigma^2_n)$, i.e., the worst in the class of possible reward distributions with the given variance proxy vector; see  Section \ref{sec:pre} for a more precise definition.

For a more concrete example application of the problem setting, consider a remote sensing setting, where multiple underground sensors will need to communicate with the central controller on a wireless link to find the best location to drill for natural gas. The channel noises in the wireless link can be viewed as additive noises on the sensing values themselves, and such channel statistics are usually obtained independent of the sensing but by sending and receiving pilot signals. Therefore, the variances of the arms are indeed known, and the goal is to identify several ``best'' arms. Although the problem is well motivated by practical applications, our approach to study it is largely theoretical, and the obtained result is theoretical in nature. Particularly, it is well-known that the worst-case sample complexity in the homogeneous $(\epsilon, \delta)$ top-$m$ arm identification setting is $\Theta\left(\sum_{i \in [n]}\frac{\sigma^2_{i}}{\epsilon^2}(\ln\frac{1}{\delta} +\ln m) \right)$ (c.f., \citep{kalyanakrishnan2012pac}). However, we observe that the structure of the sample complexity may evolve (specifically, the factor $\ln(m)$ will diminish) as the setting transitions from the homogeneous setting to the heterogeneous setting. Therefore, we focus on this transition behavior and aim to provide a precise characterization \emph{theoretically }.

Several well-known algorithms can be straightforwardly adapted to the problem under consideration. We first consider adapting the naive elimination approach and the median elimination algorithm \citep{even2002pac, kalyanakrishnan2010efficient}, as well the LUCB \citep{kalyanakrishnan2012pac} and UGapE \citep{gabillon2012best} algorithms. We observe that the adapted algorithms only perform well in some respective cases. More precisely, the adapted naive elimination algorithm performs well when the heterogeneity is more significant, and the adapted median elimination algorithm performs well when the heterogeneity is less significant. Given this observation, we seek for a new algorithm that can naturally account for the heterogeneity, and propose the variance-grouped median elimination algorithm. There is no need to artificially ascribe an instance as having either high or low heterogeneity in this algorithm, and its performance adapts naturally.

We further establish a matching lower bound by reformulating it into an optimization problem, and considering its dual. Combined with this lower bound, we show the proposed algorithm is in fact optimal. The worst-case sample complexity, as given in (\ref{eqn:SC}), is in general proportional to the sum of the reward variances, and has three components. The first component (with $\ln \frac{1}{\delta}$) reflects the effect of the confidence parameter, the second component reflects the impact of the more homogeneous subset of the arms, and the last term (with the $\Ent(\cdot)$ function) reflects the impact of the more heterogeneous subset of the arms. The result naturally degrades if the reward variances are indeed homogeneous, which essentially has only the first two components. The third component captures the impact of the heterogeneity, which is not critically related to $m$, but on the variances $\sigma^2_{1:n}$ through an entropy-like function. For highly heterogeneous variances, the second term will in fact disappear, and $\textnormal{Ent}(\sigma^2_{G^{r}})$ can be of order $O(1)$, thus becoming independent of $m$ completely. 

\section{Related Works}

Multi-armed bandit problems have been extensively studied in the machine learning community in the past decades. A canonical setting is to maximize the cumulative reward, whose asymptotically optimal behavior was first characterized in the seminal work by \citet{lai1985asymptotically}. Good tutorials and books \citep{bubeck2012regret,slivkins2019introduction,lattimore2020bandit} are readily available.

An alternative setting is to instead identify the best arm. There are in general two lines of research: minimizing the mis-identification probability within a fixed budget of samples \citep{audibert2010best,bubeck2013multiple,carpentier2016tight}, and fast identification with a fixed confidence guarantee \citep{jamieson2014best}. The $(\epsilon, \delta)$ best arm identification problem belongs to the latter and was introduced in \citep{even2002pac, even2006action}, where several elimination based algorithms, such as naive elimination, successive elimination and median elimination algorithms, were proposed. The median elimination algorithm was shown to be worst-case optimal for which a matching lower bound was derived by \citet{mannor2004sample}. The asymptotic (large number of arms) optimal elimination algorithm was recently discovered \citep{hassidim2020optimal}, which was inspired by the idea of identifying the ``good arms'' \citep{katz2020true}. The case of exact best arm identification, i.e., $\epsilon = 0$, motivated algorithms that adapt to the underlining model and usually performs well in an instance-dependent manner  \citep{karnin2013almost, jamieson2014lil, chen2015optimal, garivier2016optimal, kaufmann2016complexity}.

There are multiple variants of the problem \citep{zhou2014optimal,shen2019universal,jin2019efficient,assadi2020exploration,chaudhuri2019pac}. One of the most natural generalization of the best arm identification problem is to identify multiple best arms. The $(\epsilon, \delta)$ top-$m$ arm identification was studied in \citep{kalyanakrishnan2010efficient}, in which an algorithm named ``halving" was proposed, and it bears similarity to the median elimination algorithm. It was later shown that the halving algorithm is indeed worst-case optimal \citep{kalyanakrishnan2012pac}. Though more adaptive algorithms were proposed later, such as LUCB \citep{kalyanakrishnan2010efficient} and UGapE \citep{gabillon2012best,kalyanakrishnan2012pac}, they are not worst-case optimal. For the case of exact top-$m$ arm identification, efforts toward understanding the instance-dependent sample complexity were also made \citep{kaufmann2013information,chen2017nearly,simchowitz2017simulator}.

Gaussian rewards with heterogeneous variances was considered in the earliest work on best arm identification  \citep{bechhofer1954single} in the fixed confidence setting, though without a theoretical analysis on the stopping time. The possible variance heterogeneity among arms gained attentions recently in the fixed budget setting \citep{faella2020rapidly}, where the confidence bounds are designed based on central limit theorem. Identifying the best arms in multiple bandits with possible heterogeneous variances was studied in the fixed budget setting \citep{gabillon2011multi}, where an elimination based algorithm was proposed to take variances into designing confidence bound. In the addition to the fixed budget setting, most recently \citet{lu2021variance} studied the best arm identification with unknown heterogeneous variances in the fixed confidence setting. They assumed the support of reward distribution is bounded, and proposed an elimination-based algorithm by first estimating the variances (with known upper bound on the variances) then utilizing the estimated variances in identifying the unique best arm based on Bernstein-style confidence bounds. The algorithm achieves near-optimal instance dependent performance. In comparison, we aim to study the \emph{worst-case sample complexity} with known variance proxies as inputs (the support of reward distribution may be unbounded), in the \emph{top-$m$ identification} problem. We propose an optimal algorithm with an \emph{exact matching} lower bound, and studied the impact of variances transition from the homogeneous setting to the heterogeneous setting in terms of the parameter $m$.

\section{Preliminary} \label{sec:pre}

\textbf{System model:} We largely follow the canonical sub-Gaussian bandit model, except the additional component related to the reward variances. 
A bandit instance $I$ is represented by a set of arm indices $[n] := \{1, 2, \ldots, n\}$ and the tuple of reward distributions $(\nu_{1}, \nu_{2}, \ldots, \nu_n)$. For any $i \in [n]$, pulling the $i$-th arm returns a reward observation, which is independently sampled from distribution $\nu_{i}$, where $\nu_{i}$ is a sub-Gaussian distribution with mean $\mu_i$ and variance proxy $\sigma^2_i$\footnote{A random variable $X$ follows some $\sigma^2$-sub-Gaussian distribution, if $\ln \Eb[e^{\lambda(X - \Eb[X])}] \leq \frac{\sigma^2 \lambda^2}{2}, ~\forall \lambda \in \Rb$, and $\sigma^2$ is called the variance proxy.}.
An arm is $\epsilon$-approximate top-$m$ if the mean reward of that arm is at least $\max^m_{i\in [n]} \mu_i - \epsilon$, where $\max^m_{i \in [n]}$ indicates the $m$-th largest (mean reward) value among the arms in $[n]$. With the knowledge of variance proxy values $\sigma^2_{1:n}$, but without the knowledge of mean values $\mu_{1:n}$, the agent actively learns the parameters of the sub-Gaussian bandit instance $I$ by observing independent reward samples. When there is no ambiguity from the context, we omit ``proxy" and simply refer to $\sigma^2_{1:n}$ as the reward variances.

\textbf{$(\epsilon, \delta)$ top-$m$ arm identification:} In the $(\epsilon, \delta)$ top-$m$ arm identification problem, the agent is required to identify some subset $R \subset [n]$ with $|R| = m$, such that, with probability at least $1 - \delta$, any arm in $R$ is $\epsilon$-approximate top-$m$. 

\textbf{Algorithm class:} Taking the parameters $(\epsilon, \delta, m, [n], \sigma_{1:n}^2)$ as input, an algorithm \textsf{A} deployed by the agent is represented by a tuple $(\pi_t, \rho_t)_{t \geq 1}$. During the learning process, the function $\pi_t$ selects an arm in $[n]$ based on the inputs of the algorithm as well as the previous observations before time step $t$  (i.e., the arms that were pulled). The function $\rho_t$ decides whether to stop based on the inputs of the algorithm as well as the available observations (the current observation and the previous observations before time step $t$). If $\rho_t$ decides to stop, it returns a set of arms $R^{\textsf{A}} \subset [n]$; otherwise, the process continues. Let $T^{\textsf{A}}$ be the time that the process stops, which is the number of samples observed by algorithm $\textsf{A}$. We only study the \emph{valid} algorithms that solve the $(\epsilon, \delta)$ top-$m$ arm identification when dealing with any bandit instance.

\textbf{Worst-case sample complexity:} The number of samples observed by the algorithm $T^{\textsf{A}}$ is a stopping time, whose expectation the agent aims to minimize. We study the \emph{worst-case sample complexity} for $(\epsilon, \delta)$ top-$m$ arm identification, which is an intrinsic quantity that measures the difficulty of the problem, and thus independent of the algorithm and $\mu_{1:n}$. Formally, the worst-case sample complexity of the $(\epsilon, \delta)$ top-$m$ arm identification problem under algorithm inputs $(\epsilon, \delta, m, [n], \sigma_{1:n}^2)$ is 
\begin{align}
\textnormal{SC}(\epsilon, \delta, m, [n], \sigma_{1:n}^2) := \inf_{\textsf{A}} \sup_{I \in \Ic(\sigma^2_{1:n})} \Eb_{I}[ T^{\textsf{A}} ],
\end{align}
where the infimum is taken over all valid algorithms, the supremum is taken over the instance class $\Ic(\sigma^2_{1:n})$ containing all the distribution tuples $\nu_{1:n}$ with variances $\sigma^2_{1:n}$, and the subscript $I$ in the expectation $\Eb_{I}[\cdot]$ indicates that it is with respect to the bandit model $I$. 

\textbf{Measure of heterogeneity: } For any positive vector $a_{1:n}$, define the entropy function as $\Ent(a_{1:n}) := -\sum_{j = 1}^n \hat{a}_i\ln \hat{a}_i$ with $\hat{a}_i=\frac{a_j}{\sum_{i = 1}^n a_i}$. It measures the heterogeneity of the vector $a_{1:n}$, and takes value within $(0, \ln(n)]$. Note that the entropy function is usually defined on the probability simplex, and we had slightly abused the notation by defining it for a positive vector. In this paper, we study the worst-case sample complexity, which is gap-independent.

\section{Main Result: Worst-case Sample Complexity} \label{sec:main}

The main result of this work is the characterization of the worst-case sample complexity $\textnormal{SC}(\epsilon, \delta, m, [n], \sigma_{1:n}^2)$. To present this result, we first introduce some additional notation. Let $\underline{\sigma} := \min_{i \in [n]} \sigma_i$, and partition $[n]$ into $k$ disjoint subsets $G_1, \ldots, G_k$, such that for any $j \in [k]$,
\begin{align}
    G_j := \{i \in [n]: 2^{j-1} \leq \sigma_i^2/\underline{\sigma}^2 < 2^j \}. 
\end{align}
Define two disjoint sets
\begin{align}
G^{m} := \cup_{j: |G_j| > 2m} G_j, \quad G^{l} := \cup_{j: |G_j| \leq 2m} G_j,
\end{align}
where $|\cdot|$ denotes the cardinality of the set. 
For each $j$ with $G_j \subset G^{l}$, let $G'_j = G_j$; for each $j$ with $G_j \subset G^{m}$, select $G'_j \subset G_j$ with $|G'_j| = 2m$, and denote $G^{r} := \cup_{j \geq 1} G'_j$ as a subset of the arms, such that $\Ent\left(\sigma^2_{G^{r}} \right)$ is maximized. (The superscripts of $G^{m}, G^{l}, G^{r}$ indicate ``more", ``less" and ``reduced", respectively).

The worst-case sample complexity $\textnormal{SC}(\epsilon, \delta, m, [n], \sigma_{1:n}^2)$ is summarized in the following theorem.
\begin{theorem} \label{thm:main}
Suppose $n > 2m$, $\epsilon > 0$ and $0 < \delta < 0.1$, then the worst-case sample complexity is 
\begin{align}
& \textnormal{SC}(\epsilon, \delta, m, [n], \sigma_{1:n}^2) = \notag \\
& \Theta\left( \sum_{i \in [n]} \frac{\sigma_i^2}{\epsilon^2} \ln\frac{1}{\delta} + \sum_{i \in G^{m}} \frac{\sigma_i^2}{\epsilon^2} \ln(m) + \sum_{j \in G^{l}} \frac{\sigma_j^2}{\epsilon^2} \Ent(\sigma^2_{G^{r}}) \right).
\end{align}
\end{theorem}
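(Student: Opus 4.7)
The plan is to establish matching upper and lower bounds on $\textnormal{SC}(\epsilon,\delta,m,[n],\sigma_{1:n}^2)$. The upper bound will come from analyzing a variance-grouped median elimination algorithm, while the lower bound will come from formulating the worst-case instance selection as a linear program and analyzing its dual. The three additive terms in \eqref{eqn:SC} should each correspond to a separate contribution that is hard to avoid: a confidence-related term proportional to $\ln\frac{1}{\delta}$ coming from standard sub-Gaussian concentration, a $\ln m$ term that is only forced on the homogeneous (many-arms-per-bucket) part of the instance, and an entropy term that arises from splitting the budget unevenly across heterogeneous buckets.

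For the upper bound, I would first dyadically group the arms into the buckets $G_j$ so that all variances within a bucket are within a factor of $2$. Inside each bucket, the arms can be treated as having essentially homogeneous variance, so classical tools apply. For buckets in $G^m$, because $|G_j|>2m$ there are still enough ``confusing'' arms after a halving step that the standard median-elimination recursion pays a $\ln m$ factor per arm, giving the second term. For buckets in $G^l$, median elimination is wasteful; instead I would run a LUCB-style / naive-elimination round per bucket, where the per-bucket failure budget $\delta_j$ is chosen proportionally to $\hat\sigma_j^2 := \sigma_j^2/\sum_{G^r}\sigma_i^2$, which is the allocation that minimizes $\sum_j \hat\sigma_j^2 \ln(1/\delta_j)$ subject to $\sum_j \delta_j\le\delta_0$; by Gibbs' inequality this optimum equals $\ln(1/\delta_0)+\Ent(\sigma^2_{G^r})$, producing the entropy term. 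The $\ln(1/\delta)$ term is straightforwardly collected from the initial confidence-amplification phase applied uniformly to every arm. The main care will be in stitching the sub-routine outputs together so that cross-bucket comparisons only need $\epsilon$-accurate means and the failure probabilities add up correctly to $\delta$.

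For the lower bound, I would follow a change-of-measure argument \`a la Kaufmann--Capp\'e--Garivier/Mannor--Tsitsiklis. Fix a worst-case base instance (e.g., all means equal so that every $m$-subset is a valid answer) and for each candidate alternative instance, use the transportation lemma $\sum_i \Eb[N_i]\KL(\nu_i,\nu_i')\ge \ln\frac{1}{2.4\delta}$. Because the rewards are sub-Gaussian, $\KL\le (\Delta_i)^2/(2\sigma_i^2)$, which turns the constraints into a linear program over the allocation $n_i:=\Eb[N_i]/\sigma_i^2$ with coefficients determined by the chosen perturbations. I would then take the Lagrangian dual: the dual variables select a distribution over alternatives, and the dual objective is the quantity the algorithm must sample at least. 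Designing the right family of alternatives is the key: besides the ``single swap'' alternatives that yield the $\ln(1/\delta)$ term, I need ``batch swap'' alternatives that simultaneously perturb many arms inside one $G_j\subset G^m$ (forcing the $\ln m$ term, using the $|G_j|>2m$ condition so two disjoint top-$m$ candidates exist inside $G_j$) and ``cross-bucket swap'' alternatives that couple arms in different buckets of $G^l$ (forcing the entropy term via the same Gibbs-inequality calculation as in the upper bound). Showing the resulting dual value asymptotically matches the upper bound reduces to verifying that these three families are essentially tight simultaneously.

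The main obstacle I anticipate is matching the entropy term on both sides. On the algorithmic side one must identify, without knowing the means, which buckets are the ``troublesome'' $G^l$ buckets and choose $\delta_j\propto \hat\sigma_j^2$ adaptively; a clean way is to fix this allocation purely from the known $\sigma_{1:n}^2$ before sampling starts. On the lower-bound side the obstacle is to construct a single LP whose optimum genuinely captures $\Ent(\sigma^2_{G^r})$ rather than a weaker $\ln|G^r|$ bound; the trick will be to restrict the alternative hypotheses to only perturb arms in $G^r$ (of size at most $2m$ per bucket) so that the entropy of $\hat\sigma^2$ rather than $\ln|G^l|$ appears. A secondary, more technical, obstacle is handling the edge cases where some bucket straddles the $2m$ threshold or where $\Ent(\sigma^2_{G^r})=O(1)$ dominates $\ln m$; these should be absorbable into constants by checking that the $\Theta(\cdot)$ on both sides is robust to the choice of $G^r$ up to constant factors.
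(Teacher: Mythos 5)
Your overall architecture (dyadic variance buckets, median elimination for the large buckets, a Gibbs-optimal confidence allocation producing the entropy term, and a lower bound via an optimization problem and its dual) is the same as the paper's, but two load-bearing mechanisms are missing. On the upper bound, the difficulty you defer to ``stitching the sub-routine outputs together'' is exactly the crux: the number $m'_j$ of true top-$m$ arms falling in bucket $G_j$ is unknown, so a per-bucket identification with any fixed per-bucket target can return arms that are top-ranked inside $G_j$ but useless globally. The paper resolves this by making each per-bucket \textsf{MedElim} return $2m$ arms and proving (Lemma \ref{lem:me}) that it satisfies the $(\epsilon,\frac{m'}{m}\delta)$ top-$m'$ condition \emph{simultaneously for every} $m'\le m$; the failure probabilities $\frac{m'_j}{m}\frac{\delta}{2}$ then sum to $\frac{\delta}{2}$ across buckets because $\sum_j m'_j=m$, and a single weighted naive elimination over the union of the returned sets (with $\omega_i\propto\sigma_i^2$, which is precisely your Gibbs allocation) finishes the job. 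Without this graceful-degradation property, your plan of separate per-bucket LUCB/naive rounds has no way to decide how many arms to keep from each bucket.

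On the lower bound, the step that would fail is using the transportation inequality $\sum_i\Eb[N_i]\,\KL_i\ge\ln\frac{1}{2.4\delta}$ as the constraints of a \emph{linear} program: every such constraint, and hence every convex combination of them appearing in the dual, only forces a budget of order $\ln\frac{1}{\delta}$, so no choice of ``batch swap'' alternatives can make an LP of this form produce the $\ln m$ or entropy terms. The paper's key move is different: for a fixed $M$ and $l$, the $m$ error events $\{R^{\textsf{A}}=M\cup\{l\}\setminus\{i\}\}$, $i\in M$, are disjoint under $P_{I_{l,M}}$, so summing the two-point inequality of Lemma \ref{lem:two-point} over them yields the \emph{nonlinear} constraint $\sum_{i\in M}\exp(-t_{l,M\setminus\{i\},i}/\theta_i)\le\delta'$; it is the Lagrangian dual of this convex exponential program (Lemma \ref{lem:optimization}) whose objective contains $\Ent(\{\eta_{M\setminus\{l\}}\sigma_l^2\}_{l\in M})$, and Terms II and III are then obtained by feasible dual choices supported on $G^m$ and on the $2m$ largest-variance arms of $G^{r}$, respectively. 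You would need to replace your LP with this (or an equivalent Fano/simulator-type multi-hypothesis argument) for the matching lower bound to go through.
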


The following lemma upper bounds the entropy $\Ent(\sigma^2_{G^r})$ in the third component. 
\begin{lemma}\label{lem:red-up}
For any $m \geq 2$, $ \Ent(\sigma^2_{G^{r}}) \leq 8 \ln(m)$.
\end{lemma}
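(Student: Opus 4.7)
The plan is to apply the entropy chain rule to split $\Ent(\sigma^2_{G^r})$ into a between-group and a within-group part, and then bound each piece separately. I would index the non-empty groups by $j_1 < j_2 < \cdots < j_k$, set $S_{j_l} := \sum_{i \in G'_{j_l}}\sigma_i^2$, $S := \sum_{l=1}^k S_{j_l}$, $p_l := S_{j_l}/S$, and $q_{i|l} := \sigma_i^2/S_{j_l}$ for $i \in G'_{j_l}$. Since $\hat\sigma_i^2 = p_l q_{i|l}$, expanding $\ln\hat\sigma_i^2 = \ln p_l + \ln q_{i|l}$ would give the chain rule
\begin{align}
\Ent(\sigma^2_{G^r}) = \Ent(p) + \sum_{l=1}^k p_l \Ent(q_{\cdot|l}).
\end{align}
The within-group terms are immediate: $|G'_{j_l}|\leq 2m$ forces $\Ent(q_{\cdot|l}) \leq \ln(2m)$, so the second summand is at most $\ln(2m)$.

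The main step will be to bound $\Ent(p)$. Using $\sigma_i^2/\underline{\sigma}^2 \in [2^{j_l-1}, 2^{j_l})$ within $G'_{j_l}$, I would combine $S_{j_l} < 2^{j_l}|G'_{j_l}|\underline{\sigma}^2$ with $S \geq S_{j_k} \geq 2^{j_k-1}\underline{\sigma}^2$ to extract the geometric decay
\begin{align}
p_l \;\leq\; 2|G'_{j_l}|\cdot 2^{-(j_k - j_l)} \;\leq\; 4m\cdot 2^{-(j_k-j_l)}.
\end{align}
Since the $j_l$'s are distinct positive integers, the gaps $j_k - j_l$ range over distinct non-negative integers, so the bounds on $p_l$ sit under a one-sided geometric tail indexed by $\Zb_{\geq 0}$.

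From here, I would bound $\Ent(p)$ by a head/tail split at cutoff $K := \lceil\log_2(16m)\rceil = O(\log m)$. The head ($j_k - j_l < K$) contains at most $K$ groups, so its contribution to $\Ent(p)$ is at most $\ln K + 1/e$ by the standard uniform-distribution bound. In the tail ($j_k - j_l \geq K$), the estimate $p_l \leq 4m\cdot 2^{-(j_k-j_l)} \leq 1/4 < 1/e$ sits in the monotone-increasing region of $-x\ln x$, so one can replace $-p_l\ln p_l$ by $-(4m\cdot 2^{-q})\ln(4m\cdot 2^{-q})$ with $q = j_k-j_l$; summing the resulting $\sum q\cdot 2^{-q}$ and $\sum 2^{-q}$ tails leaves only an $O(1)$ contribution. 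Combining these gives $\Ent(\sigma^2_{G^r}) \leq \ln(2m) + \ln K + c_0$ for an explicit constant $c_0$, and a routine numerical check should confirm this is at most $8\ln m$ for every $m\geq 2$. The hard part will be locking down the constant $8$ uniformly for the small values of $m$ (e.g., $m=2,3$), where the $\ln K$ and $c_0$ terms are not negligible compared to $\ln m$; the conceptual backbone (chain rule plus geometric decay) is otherwise routine.
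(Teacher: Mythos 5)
Your proof is correct, and it shares the paper's first move --- the grouping/chain-rule decomposition $\Ent(\sigma^2_{G^r}) = \Ent(p) + \sum_l p_l\,\Ent(q_{\cdot|l})$ with the within-group part bounded by $\ln(2m)$ via $|G'_j|\leq 2m$ --- but it handles the between-group entropy $\Ent(p)$ by a genuinely different argument. The paper bounds $\Ent(s_{1:k})$ by a self-referential inequality: it isolates the $k-2m+1$ smallest-index groups, shows their total mass fraction is at most $\frac{4m2^{-2m+2}}{1+4m2^{-2m+2}}$ using the same geometric growth $2^{j-1}\leq s_j/\underline{\sigma}^2 < 2m\cdot 2^{j}$ that you exploit, and then applies the grouping property once more, together with an extremality assumption on the assignment $s_{1:k}$, to conclude $\Ent(s_{1:k})\leq\ln(2m)+\frac{4m2^{-2m+2}}{1+4m2^{-2m+2}}\Ent(s_{1:k})$, hence $\Ent(s_{1:k})\leq 3\ln(2m)$ and a total of $4\ln(2m)\leq 8\ln m$. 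Your head/tail split at $K=\lceil\log_2(16m)\rceil$ replaces that extremality argument with explicit geometric-series estimates and yields the asymptotically sharper $\Ent(p)\leq\ln K + e^{-1} + O(1)$, i.e., a total of $\ln(2m)+\ln\log_2(16m)+O(1)$ rather than $4\ln(2m)$. Two small cautions: the tail really is $O(1)$ only because of the cancellation in $-\ln(4m\cdot 2^{-q}) = q\ln 2 - \ln(4m)$ --- the net tail sum is $8m\,2^{-K}\ln\bigl(2^{K+1}/(4m)\bigr)\leq 2\ln 2$ --- so you must keep the negative term rather than crudely bounding $-\ln p_l$ by $q\ln 2$ (the cruder route gives roughly $\tfrac12\ln(32m)$ and leaves almost no slack at $m=2$); and the numerical check you defer does pass, e.g., at $m=2$ your bound evaluates to about $\ln 4+\ln 5+e^{-1}+2\ln 2\approx 4.75 < 8\ln 2\approx 5.55$, with the gap only widening as $m$ grows.
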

This lemma indicates that the worst-case sample complexity in the heterogeneous variance setting is upper bounded by $O\left(\sum_{i \in [n]}\frac{\sigma^2_i }{\epsilon^2} \ln\frac{m}{\delta} \right) $ in general. In certain sense, the heterogeneity in fact makes the problem ``easier" to solve. To further illustrate this point, let us consider two special cases:
\begin{itemize}
\item When the variances are more homogeneous, e.g., in the extreme case $\sigma^2_i = \sigma^2,~ \forall i \in [n]$, we have $G^{m} = [n]$ and $G^{l} = \emptyset$. Theorem \ref{thm:main} naturally degrades to the worst-case sample complexity in the homogeneous setting characterized in \citep{kalyanakrishnan2012pac}, which is $\Theta\left(\frac{n \sigma^2}{\epsilon^2} \ln\frac{m}{\delta} \right)$.

\item When the variances are highly heterogeneous, e.g., in the extreme case $|G_j| = 1, \forall j = 1,2,\ldots, k$, we have $G^{m} = \emptyset$ and $G^{l} = [n]$. Theorem \ref{thm:main} shows that the worst-case sample complexity is $\Theta\left(\sum_{i \in [n]} \frac{\sigma_i^2}{\epsilon^2} \ln\frac{1}{\delta} \right)$, which is independent of $m$.
\end{itemize}
Comparing the two cases and assuming the sum of the variances remain the same, the latter clearly has a more desirable sample complexity. The sets $G^m$ and $G^l$ describe the transition between the homogeneous and the heterogeneous. In the rest of this article, we present the optimal algorithm and the matching lower bound to establish Theorem \ref{thm:main}.

\section{Algorithms} \label{sec:alg}

We first revisit several existing algorithms designed mostly under the assumption of homogeneous variances. By adapting them to the heterogeneous variance case, we analyze their advantages and disadvantages. As will become clear shortly, these adapted algorithms still perform well in certain respective cases. Based on this observation, we will propose an optimal divide-and-conquer style algorithm. 

\subsection{Adapting Existing Algorithms}

\paragraph{Weighted naive elimination:} 
In this adapted algorithm, the agent simply pulls each arm-$i$ a total of $\frac{2 \sigma_i^2}{(\epsilon/2)^2}\ln\frac{1}{\omega_i}$ times, calculates the sample mean $\hat{\mu}_i$, and returns the $m$ arms with the largest sample means. We call it ``weighted'' because the numbers of pulls for the arms are determined by the reward variances $\sigma^2_{1:n}$ and the confidence parameters $\omega_{1:n}$. The parameters $\omega_{1:n}$ need to be optimized in order to provide the performance guarantee, and the following lemma provides one such assignment of the optimized $\omega_{1:n}$. 

\begin{lemma} \label{lem:union}
Let $\omega_i = \delta \frac{\sigma_i^2}{\sum_{j=1}^n \sigma_j^2}$, the weighted naive elimination algorithm takes
\begin{align}
8 \sum_{i \in [n]}\frac{\sigma_i^2}{\epsilon^2} \left( \ln\frac{1}{\delta} + \Ent(\sigma_{1:n}^2)\right) \label{eqn:lem-union}
\end{align}
samples, and solves the $(\epsilon, \delta)$ top-$m$ arm identification problem for any $\epsilon > 0$ and $0 < \delta < 1$. 
\end{lemma}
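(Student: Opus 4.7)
The plan is to verify the sample count by direct algebra and establish correctness via an asymmetric, one-sided sub-Gaussian union bound; using only one one-sided failure mode per arm (rather than both) is what lets the analysis avoid a factor-of-two loss and gives exactly the stated sample complexity.

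For the sample count, let $S := \sum_{j} \sigma_j^2$. Substituting $\omega_i = \delta \sigma_i^2/S$ into $N_i := \frac{2\sigma_i^2}{(\epsilon/2)^2}\ln\frac{1}{\omega_i} = \frac{8\sigma_i^2}{\epsilon^2}\ln\frac{S}{\delta\sigma_i^2}$ and summing over $i$, the $\ln(1/\delta)$ portion contributes $\frac{8S}{\epsilon^2}\ln(1/\delta)$, while the remaining portion equals $-\frac{8S}{\epsilon^2}\sum_i (\sigma_i^2/S)\ln(\sigma_i^2/S) = \frac{8S}{\epsilon^2}\Ent(\sigma^2_{1:n})$. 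Their sum matches (\ref{eqn:lem-union}); note that the algorithm is non-adaptive, so this deterministic count equals $\Eb[T^{\textsf{A}}]$.

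For correctness, fix any bandit instance with means $\mu_{1:n}$ and any top-$m$ set $T^* \subseteq [n]$ (ties broken arbitrarily), so $\mu^*_m := \min_{i \in T^*}\mu_i$ is the $m$-th largest mean. With $N_i$ samples, the one-sided sub-Gaussian tail bound gives both $\Pb(\hat\mu_i - \mu_i \geq \epsilon/2) \leq \omega_i$ and $\Pb(\mu_i - \hat\mu_i \geq \epsilon/2) \leq \omega_i$. Define the good event
\[
E := \{\hat\mu_i \geq \mu_i - \epsilon/2 \;\; \forall i \in T^*\} \cap \{\hat\mu_i \leq \mu_i + \epsilon/2 \;\; \forall i \notin T^*\},
\]
so that $\Pb(E^c) \leq \sum_i \omega_i = \delta$. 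On $E$, I suppose for contradiction that some $i$ in the returned set $R$ (the $m$ arms with the largest sample means) satisfies $\mu_i < \mu^*_m - \epsilon$; then $i \notin T^*$ and $\hat\mu_i \leq \mu_i + \epsilon/2 < \mu^*_m - \epsilon/2$. Since $|R|=m$ and $i \in R \setminus T^*$, some $j \in T^* \setminus R$ must exist, giving $\hat\mu_j \geq \mu_j - \epsilon/2 \geq \mu^*_m - \epsilon/2 > \hat\mu_i$, which contradicts $R$ being the top-$m$ sample means. Hence on $E$ the output is $\epsilon$-approximate top-$m$.

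The main obstacle is recognizing that the contradiction argument uses only one direction of deviation per arm (upward for impostors in $R$, downward for displaced arms in $T^*$); this is precisely what makes the asymmetric union bound above tight, and without this observation a symmetric bound $\{|\hat\mu_i-\mu_i|<\epsilon/2\;\forall i\}$ would force $\omega_i$ to be halved and inflate the constant in front of $\Ent(\sigma^2_{1:n})$.
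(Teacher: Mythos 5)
Your proposal is correct and follows essentially the same route as the paper: the identical algebraic computation of the sample count (with the cross-entropy term becoming $\Ent(\sigma^2_{1:n})$), the same asymmetric one-sided good event (downward deviations only for the top-$m$ arms, upward only for the rest) with union bound $\sum_i \omega_i = \delta$, and the same conclusion that on this event the $m$ arms with largest sample means are all $\epsilon$-approximate. The only cosmetic difference is that the paper phrases the final step as a threshold argument at $\mu_m - \epsilon/2$ while you phrase it as an exchange/contradiction argument; these are interchangeable.
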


We will use $\textsf{WNElim}(\epsilon, \delta, m, [n], \sigma_{1:n}^2)$ to denote the weighted naive elimination algorithm with the choices of $\omega_{1:n}$ in Lemma \ref{lem:union}. The entropy function $\Ent(\sigma^2_{1:n})$ appears naturally as a multiplicative factor in the second item of Equation (\ref{eqn:lem-union}), which measures the heterogeneity of the variances. If the variance heterogeneity is high, the entropy term $\Ent(\sigma^2_{1:n} )$ can be significantly less than $\log n$. As mentioned earlier, when $\sigma^2_i = 2^i$, the entropy term is in fact $O(1)$, i.e., no longer a function of $n$ and $m$. On the other hand, by the principal of maximum entropy \citep{cover1999elements}, it has the maximum value $\ln(n)$ when the variances are homogeneous. Thus the weighted naive elimination algorithm will provide good performance when the arm variances are highly heterogeneous, but will lose efficiency when they are more homogeneous.

\paragraph{Adapted median elimination:} Median Elimination (``Halving" algorithm in \citep{kalyanakrishnan2010efficient}) is known to achieve the worst-case optimal performance in the homogeneous variance setting. One simple method to adapt it to the heterogeneous setting is to ignore the knowledge of the heterogeneity, and simply assume that all the arms have the largest variance $\max_{i \in [n]}\sigma^2_i$. The original median elimination algorithm can be applied without any change, and the expected number of samples taken is thus $O\left(\frac{n \max_{i \in [n]} \sigma^2_i}{\epsilon^2}\left( \ln\frac{1}{\delta} + \ln m \right) \right)$, as shown in \citep{kalyanakrishnan2010efficient}. In the appendix, we provide another method to adapt the median elimination algorithm, which improves the $n \max_{i \in [n]} \sigma^2_i$ term by roughly halving the sum of variances in each round.

If the variances are more homogeneous, e.g., $\sigma_i^2 / \sigma_j^2 \leq 2, \forall i, j \in [n]$, then $\sum_{i \in [n]} \sigma^2_i \leq n \max_{i \in [n]} \sigma^2_i \leq 2 \sum_{i \in [n]} \sigma^2_i$ and the expected number of samples is $O\left(\frac{\sum_{i \in [n]} \sigma^2_i}{\epsilon^2}\left( \ln\frac{1}{\delta} + \ln m \right) \right)$. For the same example, the weighted naive elimination uses $O\left(\frac{\sum_{i \in [n]} \sigma^2_i}{\epsilon^2}\left( \ln\frac{1}{\delta} + \ln n \right) \right)$ samples. Thus this simple adaptation of the median elimination algorithm is able to perform well for the highly homogeneous case, but will induce a loss of performance for the more heterogeneous cases. 

\paragraph{Adapting other algorithms:} 
The adaptation of several instance dependent algorithms, such as LUCB and UGapE, is straightforward.  For the problem in consideration, both algorithms require $O\left( \frac{\sum_{i \in [n]} \sigma_i^2}{\epsilon^2}\left( \ln\frac{1}{\delta} + \ln \frac{\sum_{i \in [n]} \sigma_i^2}{\epsilon^2} \right) \right)$ number of samples in expectation in the worst case. They are not worst-case optimal in the homogeneous variance setting, and certainly not in the heterogeneous variance setting since the latter is a more general setting.

\subsection{The Optimal Variance-Grouped MedElim Algorithm}

It was shown in the previous subsection that the weighted naive elimination algorithm and the median elimination algorithm have advantages in the  respective cases. In order to retain the advantages in both algorithms, we take a ``divide and conquer" approach. Recall the minimum variance is $\underline{\sigma} = \min_{i \in [n]} \sigma_i$, and the disjoint subsets $G_1, \ldots, G_k$ form a partition of $[n]$, and for any $j \in [k]$,
\begin{align}
    G_j = \left\{i \in [n]: 2^{j-1} \leq \sigma_i^2/\underline{\sigma}^2 < 2^j \right\}. \label{def:G_j}
\end{align}
The largest variance ratio within each subset is at most $2$, while the variances among subsets are well separated. We wish to apply median elimination to each subset and select ``good" arms within that subset, and then apply weighted naive elimination over all the selected ``good" arms. However, the ``good" arms within a subset can in fact be ``bad" in terms of the overall arm set $[n]$. To see this, consider the following example instance: $m$ arms have a mean reward $\epsilon$, and the rest of $n-m$ arms have a mean reward $-\epsilon$. Then any $\epsilon$-approximate top-$m$ arms need to have mean $\epsilon$. Suppose the subset $G_1$ contains $m' < m$ arms with mean $\epsilon$ and some other arms with mean $-\epsilon$. 
Ideally we would like to apply median elimination to find those top-$m'$ arms with mean $\epsilon$ within $G_1$. However, parameter $m'$ is not known, and we will apply median elimination on $G_1$ by selecting some $l$ arms. If $l < m'$, then the returned $l$ arms will not include all the top-$m'$ arms in $G_1$, and therefore fail to identify the final top-$m$ arms. On the other hand, if $l > m'$, then $\max^{l}_{i \in G_1} \mu_i = -\epsilon$. Any arm in $G_1$ is ranked in the top-$l$ within $G_1$, and the problem is trivial to solve. The returned $l$ arms,  even though are top-$l$ within $G_1$, are not guaranteed to contain those top-$m'$ arms with mean reward $\epsilon$.

To successfully apply the divide-and-conquer approach, we need a ``blind'' algorithm that returns a subset containing the approximate top-$m'$ arms, ideally with certain graceful transition of the confidence values.

\begin{definition}
The algorithm $\textsf{A}(\epsilon, \delta, m, [n], \sigma_{1:n}^2)$ is said to satisfy
the $(\epsilon, \delta')$ top-$m'$ condition, where $m' \leq m$, if with probability at least $1 - \delta'$, $\max_{j \in R^{\textsf{A}}}^{m'} \mu_j \geq \max_{j \in [n]}^{m'} \mu_i - \epsilon$. 
\end{definition}
The condition is equivalent to the standard $(\epsilon, \delta)$ top-$m$ arm identification requirement, if $m' = m$ and $\delta' = \delta$.
We first restate the median elimination algorithm presented in Algorithm \ref{alg:ME} (the halving algorithm \citep{kalyanakrishnan2010efficient}), with the necessary changes on the constants and the variance values taken into account (note the input $2m$).

\begin{algorithm}[h]
    \caption{\textsf{MedElim}($\epsilon, \delta, 2m, [n], \sigma^2_{1:n}$) }\label{alg:ME}
    \begin{algorithmic}
    	\STATE Initialize $S_1 = [n]$, $\ell = 1$ and $\epsilon_\ell = (\epsilon/3) \frac{3^\ell}{4^{\ell}},~ \delta_\ell = \frac{\delta/4}{2^\ell}$\;
	\WHILE{$|S_\ell| > 2m$}
	\STATE Pull arm-$i$ $t_{i, \ell} = \frac{2 \sigma_i^2}{(\epsilon_\ell/2)^2} \ln\frac{m}{\delta_{\ell}}$ times and calculate their sample mean $\hat{\mu}_{i, \ell}$ for each $i \in S_\ell$\;
	\STATE Update the candidate set as $S_{\ell + 1} = \argmax_{i \in S_{\ell}}^{1: \max(\lfloor |S_{\ell}|/2 \rfloor, 2m)} \hat{\mu}_{i, \ell}$\;
	\STATE Let $\ell = \ell+1$\;
	\ENDWHILE
	\STATE \textbf{Return:} $S_{\ell}$ \;
    \end{algorithmic}
\end{algorithm}


The following lemma summarizes the sample complexity of the MedElim algorithm with the aforementioned transition in the confidence  values for $m'=1,2,\ldots,m$ for the $2m$ return arms. This algorithm will be used as a building block for the variance-grouped median elimination algorithm given next. The proof of this lemma can be found in the appendix.

\begin{lemma} \label{lem:me}
For any $\sigma^2_{1:n}$, if $\max_{i \in [n]} \sigma_i^2 /\min_{j \in [n]} \sigma_j^2 \leq 2$, the \textnormal{MedElim} algorithm has an expected stopping time
\begin{align}
O\left( \frac{\sum_{i \in [n]} \sigma_i^2}{\epsilon^2} \left( \ln \frac{1}{\delta} + \ln(m) \right) \right).
\end{align}
Moreover, for any $m' \leq m$, the MedElim algorithm satisfies the $(\epsilon, \frac{m'}{m} \delta)$ top-$m'$ condition. \end{lemma}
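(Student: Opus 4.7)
The plan is to establish the two claims of Lemma \ref{lem:me}---the sample complexity bound and the top-$m'$ correctness condition---separately, both exploiting the geometric schedules for $\epsilon_\ell$ and $\delta_\ell$ together with the constraint $\max_i \sigma_i^2 / \min_j \sigma_j^2 \leq 2$.

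For the sample complexity, I would first note that the algorithm terminates in $L = O(\log(n/m))$ rounds, with $|S_\ell| \leq n/2^{\ell-1}$ in round $\ell$ before the floor $2m$ is reached. The key structural observation is that, under the variance-ratio condition, every subset $S \subseteq [n]$ satisfies $\sum_{i \in S} \sigma_i^2 \leq 2(|S|/n) \sum_{j \in [n]} \sigma_j^2$, since each variance lies within a factor of $2$ of the average. Substituting into $\sum_{i \in S_\ell} t_{i,\ell} = (8/\epsilon_\ell^2) \ln(m/\delta_\ell) \sum_{i \in S_\ell} \sigma_i^2$, and using $1/\epsilon_\ell^2 = \Theta((16/9)^\ell/\epsilon^2)$ together with $\ln(m/\delta_\ell) = \Theta(\ln m + \ln(1/\delta) + \ell)$, the per-round sample count decays geometrically like $(8/9)^\ell$, modulated by a factor linear in $\ell$ from the $\ln(1/\delta_\ell)$ term. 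Summing $\sum_\ell (8/9)^\ell(\ln m + \ln(1/\delta) + \ell)$ is a convergent series and yields the claimed $O\bigl((\sum_{i \in [n]} \sigma_i^2/\epsilon^2)(\ln m + \ln(1/\delta))\bigr)$ bound, with the final (floor-at-$2m$) round contributing at most the same order.

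For the $(\epsilon, (m'/m)\delta)$ top-$m'$ condition, I would induct over rounds, tracking the potential $V_\ell := \max^{m'}_{j \in S_\ell} \mu_j$ and aiming to show $V_{L+1} \geq V_1 - \epsilon$ with failure probability at most $(m'/m)\delta$. The Hoeffding-type sub-Gaussian concentration gives $P(|\hat\mu_{i,\ell} - \mu_i| > \epsilon_\ell/2) \leq 2\delta_\ell/m$ per arm. Union-bounding over the $m'$ arms in $S_\ell$ with the largest true means shows that, with probability at least $1 - 2(m'/m)\delta_\ell$, each of these top-$m'$ arms has $\hat\mu_{i,\ell} \geq V_\ell - \epsilon_\ell/2$. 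A standard halving counting argument then ensures that at least $m'$ arms in $S_{\ell+1}$ have true mean $\geq V_\ell - \epsilon_\ell$, so $V_{\ell+1} \geq V_\ell - \epsilon_\ell$. Summed over rounds, the total failure probability is at most $\sum_\ell 2(m'/m)\delta_\ell \leq (m'/m)\delta$ (using $\sum_\ell \delta_\ell \leq \delta/4$), and the accumulated error is $\sum_\ell \epsilon_\ell = (\epsilon/3)\sum_\ell (3/4)^\ell = \epsilon$.

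The main obstacle is the per-round halving step: I must argue that when each top-$m'$ arm has $\hat\mu_{i,\ell} \geq V_\ell - \epsilon_\ell/2$, no more than $m'-1$ arms with true mean below $V_\ell - \epsilon_\ell$ can displace them from the top $|S_{\ell+1}|$ positions of $S_\ell$ by sample mean, because such a displacement would require a bad arm to inflate its own sample mean above $V_\ell - \epsilon_\ell/2$, an event governed by the same per-arm concentration. The $m'/m$ scaling in the failure probability comes purely from union-bounding over only the $m'$ top-mean arms of $S_\ell$, while the per-arm threshold is fixed at $\delta_\ell/m$ by the pulling budget $\ln(m/\delta_\ell)$; this deliberate ``over-pulling'' by a factor of $m/m'$ produces the graceful degradation that will make \textsf{MedElim} usable as a subroutine in the variance-grouped algorithm.
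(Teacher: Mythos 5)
Your stopping-time analysis matches the paper's: the bound $\sum_{i\in S_\ell}\sigma_i^2 \leq \frac{2}{2^{\ell-1}}\sum_{i\in[n]}\sigma_i^2$ obtained from the variance-ratio condition, combined with the geometric decay of $(8/9)^\ell\bigl(\ell+\ln(m/\delta)\bigr)$, is exactly the argument given there, and it is correct.

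The accuracy part, however, has a genuine gap at its central step. You correctly union-bound over the $m'$ top-mean arms of $S_\ell$ (cost $\frac{m'}{m}\delta_\ell$ per round), but the event you then need --- that at most $|S_{\ell+1}|-m'$ arms with true mean below $\mu_{m'_\ell}-\epsilon_\ell$ have sample mean above the threshold $\mu_{m'_\ell}-\epsilon_\ell/2$ --- is \emph{not} ``governed by the same per-arm concentration'' in any way that fits your probability budget. A union bound over the bad arms costs $(|S_\ell|-m')\frac{\delta_\ell}{m}$, which in early rounds is $\Theta(\frac{n}{m}\delta_\ell)$ and destroys the claimed total of $\frac{m'}{m}\delta$; your accounting $\sum_\ell 2\frac{m'}{m}\delta_\ell$ simply omits this failure mode. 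The paper's resolution --- and the essential median-elimination idea --- is to apply Markov's inequality to the \emph{count} $N_{m',\ell}$ of overestimated bad arms: $\Pb\bigl(|N_{m',\ell}|\geq |S_{\ell+1}|-m'+1 \mid \Ec_\ell\bigr) \leq \Eb[|N_{m',\ell}|\mid\Ec_\ell]/(|S_{\ell+1}|-m'+1) \leq \frac{\delta_\ell}{m}\cdot\frac{|S_\ell|-m'}{|S_{\ell+1}|-m'+1}$, where independence of the good arms' samples from the bad arms' samples lets the conditional expectation be computed unconditionally, and the halving structure ($|S_\ell|\leq 2|S_{\ell+1}|+1$ together with the floor $|S_{\ell+1}|\geq 2m$) makes the ratio at most a constant ($<3$). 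This yields a per-round failure of at most $\frac{3\delta_\ell}{m}+\frac{m'\delta_\ell}{m}\leq \frac{4m'}{m}\delta_\ell$, which sums to $\frac{m'}{m}\delta$ since $\sum_\ell 4\delta_\ell=\delta$. Without this (or an equivalent) counting argument your induction on $V_\ell$ does not close. Two minor further slips: only under-estimation of the good arms matters, so your two-sided factor of $2$ is unnecessary; and the displacement condition involves $|S_{\ell+1}|-m'+1$ overestimated bad arms, not $m'$ of them.
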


Now we are in a position to provide the proposed algorithm below, which we refer to as the variance-grouped median elimination algorithm.

\begin{algorithm}[h]
    \caption{\textsf{V-MedElim}($\epsilon, \delta, m, [n], \sigma^2_{1:n}$)}\label{alg:V-ME}
    \begin{algorithmic}
 	\STATE Partition $[n]$ into groups $G_1, \ldots, G_k$ by (\ref{def:G_j}) \;
	 \FOR{$j \in 1:k$}
 	\STATE $R_j = \textsf{MedElim}(\epsilon/2, \delta/2, 2m, G_j, \sigma^2_{G_j})$;
	\ENDFOR
	\STATE Let $G = \cup_{j = 1}^k R_j$;
	\STATE $R = \textsf{WNElim}(\epsilon/2, \delta/2, m, G, \sigma^2_{G})$; 
	\STATE \textbf{Return:} $ R$ 
    \end{algorithmic}
\end{algorithm}


%
%


The performance of proposed algorithm is summarized in the following theorem.

\begin{theorem}\label{thm:vme}
The variance-grouped median elimination algorithm solves the $(\epsilon, \delta)$ top-$m$ arm identification problem for any $\epsilon > 0$ and $0< \delta<1$, and the expected number of samples is
\begin{align}
O\left( \sum_{i \in [n]} \frac{\sigma_i^2}{\epsilon^2} \ln\frac{1}{\delta} + \sum_{i \in G^{m}} \frac{\sigma_i^2}{\epsilon^2} \ln(m) + \sum_{j \in G^{l}} \frac{\sigma_j^2}{\epsilon^2} \Ent(\sigma^2_{G^{r}}) \right).
\end{align}
\end{theorem}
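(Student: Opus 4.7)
The plan is to establish validity and sample complexity of V-MedElim separately, chaining Lemmas \ref{lem:union} and \ref{lem:me} modularly across the three stages of the algorithm: the per-group MedElim calls, the aggregation $G:=\cup_j R_j$, and the final WNElim on $G$.

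For validity, let $T^*$ denote the top-$m$ arms of $[n]$ and $m'_j:=|T^*\cap G_j|$, so $\sum_j m'_j=m$, and since $T^*\cap G_j$ consists of $m'_j$ arms all with mean at least $\mu^*:=\max^m_{i\in[n]}\mu_i$, the $m'_j$-th largest mean in $G_j$ is at least $\mu^*$. For any $G_j\subset G^l$ (where $|G_j|\le 2m$), the while-loop in MedElim is skipped and $R_j=G_j$ deterministically. For $G_j\subset G^m$, Lemma \ref{lem:me} provides the $(\epsilon/2,\frac{m'_j}{m}\cdot\frac{\delta}{2})$ top-$m'_j$ condition. A union bound caps the combined MedElim failure by $\frac{\delta}{2}\sum_j\frac{m'_j}{m}=\frac{\delta}{2}$. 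On the complementary event, each $R_j$ contains $m'_j$ arms with mean at least $\mu^*-\epsilon/2$, hence $G$ contains $m$ such arms and $\max^m_{i\in G}\mu_i\ge\mu^*-\epsilon/2$. Lemma \ref{lem:union} then guarantees WNElim on $G$ with parameters $(\epsilon/2,\delta/2,m)$ returns $R$ whose every arm has mean at least $\max^m_{i\in G}\mu_i-\epsilon/2\ge\mu^*-\epsilon$ with probability $\ge 1-\delta/2$. A final union bound yields the $(\epsilon,\delta)$ guarantee.

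For sample complexity, the groups $G_j\subset G^l$ incur no samples. Each $G_j\subset G^m$ has variances within a factor of two, so Lemma \ref{lem:me} gives cost $O\bigl(\sum_{i\in G_j}\sigma_i^2/\epsilon^2\cdot(\ln(1/\delta)+\ln m)\bigr)$; summing over such $j$ yields $O\bigl(\sum_{i\in G^m}\sigma_i^2/\epsilon^2\cdot(\ln(1/\delta)+\ln m)\bigr)$, absorbed into the first two target terms. The final WNElim on $G=G^l\cup G'$ with $G':=\cup_{j:G_j\subset G^m}R_j$ costs $O\bigl(\sum_{i\in G}\sigma_i^2/\epsilon^2\cdot(\ln(1/\delta)+\Ent(\sigma^2_G))\bigr)$ by Lemma \ref{lem:union}. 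Since $G\subset[n]$, the $\ln(1/\delta)$ factor absorbs into the first target term.

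The step I expect to be the main obstacle is controlling the entropy contribution $\sum_{i\in G}\sigma_i^2\Ent(\sigma^2_G)$. The crucial structural observation is that $G$ is itself a feasible candidate in the definition of $G^r$: it equals $G^l$ together with a $2m$-subset of each $G_j\subset G^m$. By the very maximality in the definition of $G^r$, we obtain $\Ent(\sigma^2_G)\le\Ent(\sigma^2_{G^r})$. Splitting the sum, the $G^l$ piece is $\sum_{i\in G^l}\sigma_i^2\Ent(\sigma^2_G)\le\sum_{i\in G^l}\sigma_i^2\Ent(\sigma^2_{G^r})$, matching the third target term exactly. For the $G'$ piece, Lemma \ref{lem:red-up} gives $\Ent(\sigma^2_{G^r})\le 8\ln m$, so $\sum_{i\in G'}\sigma_i^2\Ent(\sigma^2_G)\le 8\sum_{i\in G'}\sigma_i^2\ln m\le 8\sum_{i\in G^m}\sigma_i^2\ln m$, absorbed into the second target term. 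The subtle point is that V-MedElim chooses each $R_j$ greedily by empirical means rather than to maximize entropy, yet the worst-case entropy cost is still controlled by $G^r$ precisely because any feasible alternative has no larger entropy.
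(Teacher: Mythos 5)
Your proposal is correct and follows essentially the same route as the paper: per-group application of Lemma \ref{lem:me}'s top-$m'_j$ condition with a union bound giving failure $\delta/2$, then Lemma \ref{lem:union} on $G$ for the remaining $\delta/2$, and the same decomposition of the sample count. Your explicit observation that $G=\cup_j R_j$ is itself a feasible candidate in the definition of $G^{r}$, so $\Ent(\sigma^2_{G})\le\Ent(\sigma^2_{G^{r}})$ by maximality, is exactly the step the paper leaves implicit when it bounds the WNElim cost by $O\bigl(\sum_{i\in[n]}\frac{\sigma_i^2}{\epsilon^2}(\ln\frac{1}{\delta}+\Ent(\sigma^2_{G^{r}}))\bigr)$, and you have filled it in correctly.
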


\begin{proof}[Proof of Theorem \ref{thm:vme}]
Without loss of generality, assume $[m]$ is the set of top-$m$ arms. For any $j$ with $G_j \cap [m] \not= \emptyset$ and $i \in G_j \cap [m]$, arm-$i$ must be one of top-$|G_j \cap [m]|$ arms in $G_j$. Let $m'_j = |G_j \cap [m]|$ be the number of top-$m$ arms contained in $G_j$. By Lemma \ref{lem:me}, with probability at least $1 - \frac{m'_j}{m} \frac{\delta}{2}$, 
\begin{align}
\max{}^{m'_j}_{l \in R_j} \mu_{l} & \geq \max{}^{m'_j}_{l \in G_j \cap [m]} \mu_l - \epsilon/2 \notag \\
& \geq \max{}_{l \in [n]}^{m} \mu_l - \epsilon / 2. 
\end{align}
It implies that with probability at least $1 - \sum_{j = 1}^k \frac{m'_j}{m} \frac{\delta}{2} = 1 - \frac{\delta}{2}$, there are at least $\sum_{j = 1}^k m'_j = m$ arms in $G = \cup_{j = 1}^k R_j$ that are $\epsilon/2$-approximate top-$m$. In other words, event $\max^m_{l \in G } \mu_l \geq \max^m_{l \in [n]} \mu_l - \epsilon/2$ occurs with probability at least $1 - \frac{\delta}{2}$.

Conditioned on this event occurring, Lemma \ref{lem:union} implies that with probability at least $1- \frac{\delta}{2}$, the returned set $R$ of the weighted naive elimination over $G = \cup_{j=1}^k R_j$ satisfies
\begin{align}
\min_{l \in R} \mu_l \geq \max{}^m_{l \in G} \mu_l - \epsilon/2 \geq \max{}^m_{l \in [n]} \mu_l - \epsilon.
\end{align}
Thus with probability at least $1 - \delta$, all arms in $R$ are $\epsilon$-approximate top-$m$.

Recall the definition of $G^l, G^m, G^r$ in Section \ref{sec:main}. The total number of samples used in the median elimination subroutine is $O\left( \sum_{i \in G^m} \frac{\sigma_i^2}{\epsilon^2}\left(\ln\frac{1}{\delta} + \ln(m) \right)\right)$. The number of samples used in the weighted naive elimination subroutine is $O\left( \sum_{i \in [n]} \frac{\sigma_i^2}{\epsilon^2}\left(\ln\frac{1}{\delta} + \Ent(\sigma^2_{G^r}) \right) \right)$. By Lemma \ref{lem:red-up}, the expected total number of samples is $O\left( \sum_{i \in [n]} \frac{\sigma_i^2}{\epsilon^2} \ln\frac{1}{\delta} + \sum_{i \in G^{m}} \frac{\sigma_i^2}{\epsilon^2} \ln(m) \right.$ $ \left.+ \sum_{j \in G^{l}} \frac{\sigma_j^2}{\epsilon^2} \Ent(\sigma^2_{G^{r}}) \right)$.
\end{proof}

\paragraph{An illustrative example:}  In the following example, we show the number of required samples by the variance-grouped median elimination algorithm given in Theorem \ref{thm:vme} achieves an order-wise improvement over $\frac{\sum_{i \in [n]} \sigma^2_i}{\epsilon^2}(\ln(1/\delta) + \text{Ent}(\sigma^2_{1:n})) $ and $\frac{\sum_{i \in [n]} \sigma^2_i}{\epsilon^2}(\ln(1/\delta) + \ln(m))$. 
Take some integer $k \geq 2$ as an auxiliary parameter in this problem setting, and denote $\ell = \lceil \log(k) \rceil$. Let $\log(m) = k$ and $\log(n) = k^2$. We aim to approximately identify the top-$m$ arms out of $n$ arms. Among these $n$ arms, there are $2^{i}$ arms with the same variance $2^{-i}$ for each $i = 0, 1, \ldots, \ell-1$, and the rest $n - \sum_{i = 0}^{\ell-1} 2^{i} = 2^{k^2} - 2^{\ell} + 1$ arms have the same variance $2^{-k^2} \ell / k$. Then $G^m$ is the set of arms with variances $2^{-k^2}\ell/k$, and $G^l$ is the set of arms with variances $2^{-i}$ for $i=0, 1, \ldots, \ell-1$. It is seen that
\begin{align}
\sum_{j \in G^m} \sigma^2_j & = (2^{k^2} - 2^\ell + 1) 2^{-k^2} \ell /k = \Theta(\ell/k), \\
\sum_{j \in G^l} \sigma^2_j & = \sum_{i = 0}^{\ell -1} 2^i 2^{-i} = \ell = \Theta(\log(k)), 
\end{align}
which implies $\sum_{j \in [n]} \sigma^2_j = \Theta(\log(k))$. Furthermore, we can calculate that 
\begin{align}
\text{Ent}(\sigma^2_{G^r}) = \Theta(\Ent(\sigma^2_{G^l})) = \Theta(\log(k)).
\end{align}
Thus the number of required samples by the variance-grouped median elimination algorithm is of order 
\begin{align}
\Theta(\ln(k) \ln(1/\delta) + \ln(k)^2 / \epsilon^2). \label{eqn:eg-vme}
\end{align}
Since $\Ent(\sigma^2_{1:n}) = \Theta(k)$ and $\ln(m) = \Theta(k)$, it is seen that $\frac{\sum_{i \in [n]} \sigma^2_i}{\epsilon^2}(\ln(1/\delta) + \text{Ent}(\sigma^2_{1:n})) $ and $\frac{\sum_{i \in [n]} \sigma^2_i}{\epsilon^2}(\ln(1/\delta) + \ln(m))$ are of the same order 
\begin{align}
\Theta(\ln(k)\ln(1/\delta) + k\ln(k) / \epsilon^2). \label{eqn:eg-adapt}
\end{align}
The detailed calculation of the entropy values used above is given in the supplementary material. Fix $\delta > 0$ as constant, comparing the numbers of required samples in (\ref{eqn:eg-vme}) and (\ref{eqn:eg-adapt}), which are of order $\Theta(\ln(k)^2 / \epsilon^2)$ and $\Theta(k \ln(k) / \epsilon^2)$, respectively, it is seen that the variance-grouped median elimination algorithm provides an order-wise improvement in this example setting by reducing a factor $k$ to $\ln(k)$.

\emph{Remark.} Our result establishes the theoretical optimality of the proposed algorithm through a matching lower bound provided in the following section. However, the empirical performance of the proposed algorithm suffers from large multiplicative factors introduced by the Median Elimination subroutine. More aggressive elimination based algorithm, such as the algorithms proposed in \citep{hassidim2020optimal}, can be used as a subroutine to improve the multiplicative factor while maintaining the same order.

\section{The Lower Bound} \label{sec:low}

In the homogeneous variance setting, the previous lower bound \citep{kalyanakrishnan2012pac} on worst-case $(\epsilon, \delta)$-PAC top-$m$ identification leveraged the change-of-measure technique and was proved by contradiction. The approach leads to a large multiplicative factor and is also difficult to utilize in the heterogeneous variance case. The lower bound was later tightened and generalized to the instance-dependent case in \citep{chen2017nearly} and \citep{simchowitz2017simulator}. Their approach assumed that the algorithms have a uniform preference over the arms at the beginning, which is reasonable in the homogeneous setting but not in the heterogeneous setting.

We derive a flexible simple inequality to better take into account the heterogeneous variances, given in Lemma \ref{lem:two-point}. Applying this lemma, we formulate the lower bound as an optimization problem, whose dual formulation (Lemma \ref{lem:optimization}) is then studied. The eventual lower bound is given in the following theorem, obtained by considering several feasible solutions to the dual problem.

\begin{theorem} \label{thm:lower}
There exists some universal constant $c > 0$, that for any  $0 < \epsilon$, $0 < \delta < 0.1$, $m < n/2$, $\sigma_{1:n}^2$ and any valid algorithm, there exists an instance with the given variances such that the expected number of samples of the algorithm is at least 
\begin{align}
c\left( \sum_{i \in [n]} \frac{\sigma_i^2}{\epsilon^2} \ln\frac{1}{\delta} + \sum_{i \in G^{m}} \frac{\sigma_i^2}{\epsilon^2} \ln(m) + \sum_{j \in G^{l}} \frac{\sigma_j^2}{\epsilon^2} \Ent(\sigma^2_{G^{r}}) \right).
\end{align}
\end{theorem}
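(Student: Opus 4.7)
My plan is to combine the change-of-measure inequality (Lemma~\ref{lem:two-point}) with linear-programming duality (Lemma~\ref{lem:optimization}) to prove three separate lower bounds, one for each term in the claimed expression; since the overall sample complexity must dominate each of them, it also dominates their sum up to a factor of three. The setup fixes a reference instance $I_0$ with a clear top-$m$ versus bottom-$(n-m)$ gap of roughly $2\epsilon$, and considers families of alternative instances $\{I_\alpha\}_{\alpha\in A}$ obtained by shifting a few mean rewards by $\pm 2\epsilon$ so that their top-$m$ memberships change. Any valid $(\epsilon,\delta)$ algorithm must give significantly different answers on $I_0$ and $I_\alpha$ with high probability, and Lemma~\ref{lem:two-point} converts this into a linear constraint of the form $\sum_{i} T_i\,(\mu_{0,i}-\mu_{\alpha,i})^2/(2\sigma_i^2) \ge c\log(1/\delta)$, where $T_i := \mathbb{E}_{I_0}[T_i^{\textsf{A}}]$.

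Minimizing $\sum_i T_i$ subject to all such constraints is a linear program whose dual, by Lemma~\ref{lem:optimization}, maximizes $c\log(1/\delta)\sum_\alpha y_\alpha$ over nonnegative weights $y_\alpha$ satisfying $\sum_\alpha y_\alpha\,(\mu_{0,i}-\mu_{\alpha,i})^2/(2\sigma_i^2)\le 1$ for every arm $i\in[n]$. By weak duality, any dual-feasible $y$ gives a valid lower bound on $\sum_i T_i$; I would construct three such $y$'s, each targeting one of the claimed terms. The first uses singleton alternatives that flip the status of a single arm, with weights scaled by $\sigma_i^2$, yielding $\Omega\bigl(\sum_{i\in[n]}\sigma_i^2/\epsilon^2\cdot\log(1/\delta)\bigr)$. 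The second uses alternatives that swap, within each large group $G_j\subset G^{m}$, one ``top'' arm with one ``bottom'' arm of the same variance; placing uniform weights on all such swaps together with the fact $|G_j|>2m$ produces the $\log m$ factor multiplying $\sum_{i\in G^{m}}\sigma_i^2/\epsilon^2$. The third uses cross-group swap alternatives within the reduced set $G^{r}$, and the weights are chosen so that after normalization the marginal load on each arm matches the variational maximizer of $\Ent(\sigma^2_{G^{r}})$.

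The hard step will be the third, entropy-bearing bound. To produce a dual-feasible $y$ whose value is proportional to $\bigl(\sum_{j\in G^{l}}\sigma_j^2/\epsilon^2\bigr)\,\Ent(\sigma^2_{G^{r}})$, I must simultaneously (nearly) saturate the per-arm constraint on every $i\in G^{r}$, which requires balancing swap weights across the heterogeneous groups in exactly the proportion that realizes the variational characterization of Shannon entropy. The reduction to $G^{r}$ (keeping only $2m$ arms per large group) is what makes this feasible: including all of $G^{m}$ would force the low-variance constraints to clamp the dual weights and wipe out the $\log m$ saving already captured in the second construction. A secondary bookkeeping point is checking that each swap alternative remains a legitimate top-$m$ perturbation under the $\epsilon$-approximation criterion, which forces the swapped pairs to lie on opposite sides of the threshold with gap $\Theta(\epsilon)$. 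Once the three feasible duals are in hand, assembling them into the stated lower bound is routine.
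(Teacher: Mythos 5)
Your overall architecture matches the paper's: change of measure via Lemma \ref{lem:two-point}, reduction to an optimization over expected pull counts, passage to a dual, and three feasible dual points generating the three terms. But there is a genuine gap in how you set up the optimization, and it is fatal to the second and third terms. You linearize each alternative into a separate constraint $\sum_i T_i(\mu_{0,i}-\mu_{\alpha,i})^2/(2\sigma_i^2)\ge c\ln(1/\delta)$ and then take the LP dual, with one dual variable $y_\alpha$ per alternative and a per-arm capacity constraint. Summing those capacity constraints over $i$ shows that any feasible dual satisfies $\sum_\alpha y_\alpha \le \sum_i \sigma_i^2/(2\epsilon^2)$, so the LP value is capped at $O\bigl(\sum_i\frac{\sigma_i^2}{\epsilon^2}\ln\frac{1}{\delta}\bigr)$ — Term I only. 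No choice of weights in that LP can manufacture the $\ln(m)$ factor or the entropy factor; they simply are not in the feasible region's reach.

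The missing idea is that the paper does \emph{not} use one two-point constraint per alternative. For a fixed base instance $I_{l,M}$ it considers the $m$ alternatives $I_{l,M\setminus\{i\}}$, $i\in M$, observes that the corresponding error events $\{R^{\textsf{A}}=M\cup\{l\}\setminus\{i\}\}$ are pairwise disjoint, and sums the $m$ lower bounds from Lemma \ref{lem:two-point} against the single budget $\delta$. This yields the \emph{nonlinear} constraint $\sum_{i\in M}\exp(-t_{l,M\setminus\{i\},i}/\theta_i)\le\delta'$, whose Lagrangian dual (Lemma \ref{lem:optimization}) has objective $\sum_M(\sum_l\eta_{M\setminus\{l\}}\sigma_l^2)\bigl(\ln\frac{B(\delta)}{\delta}+\Ent(\{\eta_{M\setminus\{l\}}\sigma_l^2\}_{l\in M})\bigr)$ — the entropy term appears only because the constraint is a sum of exponentials rather than linear. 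With that dual in hand, Term II comes from $\eta_F\propto\prod_{i\in F}\sigma_i^2$ supported on $G^m$ (making the inner entropy exactly $\ln m$), and Term III from uniform $\eta_F$ on the $2m$ largest-variance arms $L\subset G^r$; your intuition about why the restriction to $G^r$ (and to $2m$ arms per group) is needed is correct, but it has to be executed inside this entropy-bearing dual, not the LP you describe.
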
 

\subsection{Dual Formulation of the Lower Bound}

We first introduce an inequality in the lemma below, which helps us connect the sample complexity with a multi-hypothesis testing problem.
\begin{lemma} \label{lem:two-point}
For any two probability measure $P, Q$ on the same measurable space $(\Omega, \Fc)$, if $\Ec \in \Fc$ with $P(\Ec) \geq 1 - \delta > Q(\Ec)$, we have
\begin{align}
Q(\Ec) \geq B(\delta) e^{- \frac{D(P || Q)}{1-\delta}},
\end{align}
where $D(\cdot || \cdot)$ is the Kullback-Leibler divergence and $B(\delta) = e^{- \frac{\Ent(\delta, 1-\delta)}{1- \delta}}$ is a strictly decreasing function with $B(0.1) > 0.69$. 
\end{lemma}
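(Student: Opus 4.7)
The plan is to reduce the general measure-theoretic inequality to a two-point (binary) comparison via the data-processing inequality for KL divergence, and then unravel the resulting binary divergence to isolate $\ln Q(\Ec)$. Write $p := P(\Ec)$ and $q := Q(\Ec)$, so the hypotheses become $p \geq 1-\delta > q$. Applying data-processing to the $\{0,1\}$-valued statistic $\mathbf{1}_{\Ec}$ gives $D(P\|Q) \geq d(p\|q)$, where $d(p\|q) = p\ln(p/q) + (1-p)\ln((1-p)/(1-q))$ is the binary KL divergence.

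Next I would drop the $1-q$ in the denominator using the crude bound $1-q \leq 1$, which yields $(1-p)\ln\frac{1-p}{1-q} \geq (1-p)\ln(1-p)$. Combining this with $p\ln(p/q) = -p\ln q + p\ln p$ produces
$$D(P\|Q) \;\geq\; -p\ln q - \Ent(p, 1-p),$$
where $\Ent(p,1-p) = -p\ln p - (1-p)\ln(1-p)$. Rearranging and exponentiating,
$$q \;\geq\; \exp\!\left(-\frac{\Ent(p,1-p)}{p}\right)\exp\!\left(-\frac{D(P\|Q)}{p}\right).$$

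The remaining task is to pass from $p \in [1-\delta, 1]$ to a bound uniform in $p$. Since $D(P\|Q) \geq 0$ and $1/p \leq 1/(1-\delta)$, the second exponential is at least $\exp(-D(P\|Q)/(1-\delta))$. For the first factor, I would use that the binary entropy is decreasing on $[1/2, 1]$, which contains $[1-\delta, 1]$ whenever $\delta \leq 1/2$ (and in particular in the intended regime $\delta < 0.1$), so $\Ent(p,1-p)$ is maximized on $[1-\delta, 1]$ at $p = 1-\delta$ with value $\Ent(1-\delta,\delta) = \Ent(\delta, 1-\delta)$. Dividing by $p \geq 1-\delta$ gives $\Ent(p,1-p)/p \leq \Ent(\delta, 1-\delta)/(1-\delta)$, producing the factor $B(\delta)$ and completing the main inequality.

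For the auxiliary claims on $B$, a short differentiation shows $\frac{d}{d\delta}\bigl(\Ent(\delta,1-\delta)/(1-\delta)\bigr) = -\ln\delta/(1-\delta)^2 > 0$ on $(0,1)$, so $B$ is strictly decreasing on $(0,1)$; and direct evaluation gives $\Ent(0.1,0.9) \approx 0.325$, hence $B(0.1) = e^{-0.325/0.9} \approx 0.697 > 0.69$. I do not anticipate a real obstacle; the only mildly delicate point is ensuring the entropy maximum on $[1-\delta,1]$ sits at the left endpoint, which is automatic once $\delta \leq 1/2$, and all other steps are elementary manipulations following the data-processing reduction.
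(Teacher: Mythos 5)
Your proof is correct and follows essentially the same route as the paper's: reduce to the binary KL divergence via data processing, discard the harmless $\ln(1-q)$ term, and absorb the leftover binary entropy into $B(\delta)$. The only difference is that you substitute $p \to 1-\delta$ at the end using monotonicity of the binary entropy on $[1/2,1]$ (which requires $\delta \le 1/2$), whereas the paper does it up front using monotonicity of the binary divergence in its first argument on $[q,1]$ (valid for all $\delta<1$) --- a caveat with no practical consequence since the lemma is only invoked with $\delta < 0.25$.
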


Fix any algorithm $\textsf{A}$ with inputs $(\epsilon, \delta, m, [n], \sigma^2_{1:n})$ that solves the $(\epsilon, \delta)$ top-$m$ arm identification problem. Consider the Gaussian instances where the $i$-th arm has a Gaussian distribution with variance $\sigma^2_i$. Denote $P_I$ as the probability measure induced by the learning process of applying algorithm $\textsf{A}$ on Gaussian bandit instance $I \in \Ic(\sigma^2_{1:n})$. 

Let $\epsilon' > \epsilon$ be some parameter that can be arbitrarily close to $\epsilon$. 
For any subset $M \subset [n]$ with $|M| = m$ and any index $l \in [n] \setminus M$, we first construct an instance $I_{l, M} \in \Ic(\sigma^2_{1:n})$ by specifying the reward means of each arm as follows: the $l$-th arm has mean $0$, the arms in $M$ have mean $\epsilon'$, and the rest have mean $-\epsilon'$. The only $\epsilon$-approximate top-$m$ arms of instance $I_{l, M}$ are clearly $M$. Similarly, for each subset $F \subset [n]$ with $|F| = m-1$ and any index $l \in [n] \setminus F$, we then construct an instance $I_{l, F} \in \Ic(\sigma^2_{1:n})$. In instance $I_{l, F}$, the $l$-th arm has mean $0$, the arms in $F$ have mean $\epsilon'$, and the rest arms have mean $-\epsilon'$. The only $\epsilon$-approximate top-$m$ arm set of instance $I_{l, F}$ is clearly $F \cup \{l\}$. These are the possible hypotheses we will consider.

Given an instance $I_{l, M}$, if $F = M \setminus \{i\}$ for some $i \in M$, it is clear that instances $I_{l, M}$ and $I_{l, F}$ differ only at the $i$-th arm. Denote $t_{l, F, i}$ as the expected number of pulls of the $i$-th arm by algorithm $\textsf{A}$ on instance $I_{l, F}$. The KL-divergence can be calculated as $D(P_{I_{l, F}} || P_{I_{l, M}}) = \frac{2 \epsilon'^2}{\sigma^2_i} t_{l, F, i}$; see Lemma 5.1 in \citep{lattimore2020bandit} for more details. Since $\textsf{A}$ solves the $(\epsilon, \delta)$ top-$m$ arm identification problem, we have $P_{I_{l, F}}( R^{\textsf{A}} = F \cup \{l\}) \geq 1 - \delta > \delta \geq P_{I_{l, M}}(R^{\textsf{A}} = F \cup \{l\})$. Applying Lemma \ref{lem:two-point} on $P_{I_{I, F}}$, $P_{I_{l, M}}$ and event $\{R^{\textsf{A}} = F \cup \{l\}\}$  gives
\begin{align}
P_{I_{l,M}}( R^{\textsf{A}} & = F \cup \{l\} ) \geq B(\delta) e^{-\frac{D(P_{I_{l, F}} || P_{I_{l, M}})}{1 - \delta}} \notag \\
& = B(\delta) e^{- \frac{2 \epsilon'^2}{\sigma_i^2}\frac{t_{l, F, i}}{1 - \delta}}. \label{eqn:app-two-point}
\end{align}
This inequality holds for any $F = M \setminus\{i\}$ with $i \in M$. In addition, events $\{ R^{\textsf{A}} = M \cup \{l\} \setminus \{i\} \}$'s are disjoint for any $i \in M \cup \{l\}$, and they are also disjoint with the event $\{ R^{\textsf{A}} = M\}$. It follows that $\sum_{i \in M} P_{I_{l, M}}\left( R^{\textsf{A}} = M \cup \{l\} \setminus \{i\} \right) \leq 1- P_{I_{l, M}}( R^{\textsf{A}} = M ) \leq \delta $. 
Summing inequality (\ref{eqn:app-two-point}) for all $i \in M$ gives
\begin{align}
& \delta \geq \sum_{i \in M} P_{I_{l, M} }(R^{\textsf{A}} = M \cup \{l\} \setminus \{i\}) \notag \\
& \geq \sum_{i \in M} B(\delta) \exp\left(- \frac{2 \epsilon'^2}{\sigma_i^2}\frac{t_{l, M\setminus \{i\}, i}}{1 - \delta} \right). \label{eqn:constraints}
\end{align}

In the worst-case, algorithm $\textsf{A}$ takes at least $\max_{F, l \notin F} \sum_{j \notin F \cup \{l\}} t_{l, F, j}$ samples in expectation. Any valid algorithm has to satisfy (\ref{eqn:constraints}), and thus the sample complexity $\textnormal{SC}(\epsilon, \delta, m, [n], \sigma_{1:n}^2)$ is lower bounded by the optimal value of the following optimization problem: 
\begin{align}
&\text{minimize:} \qquad \max_{F \subset [n]: |F| = m-1, ~l \not\in F} \sum_{j \notin F \cup \{l\}} t_{l, F, j}  \\
&\text{subject to:}\qquad \sum_{i \in M} \exp\left(- t_{l, M \setminus\{i\}, i}/\theta_i\right) \leq \delta', \notag \\
& \quad\qquad\qquad \quad \forall M \subset [n], |M| = m,~\forall l \notin M,
\end{align}
where $\theta_i = \frac{(1-\delta) \sigma_i^2}{2 \epsilon^2}, \forall i \in [n]$ and $\delta' = \frac{\delta}{B(\delta)}$. Though this problem is convex, it is difficult to solve explicitly. Therefore, we consider its (restricted) dual formulation in the following lemma.
\begin{lemma} \label{lem:optimization}
For $\epsilon > 0$, $\delta < 0.25$, $m < n/2$, $(\sigma_i^2)_{i \in [n]}$, $\textnormal{SC}(\epsilon, \delta, m, [n], \sigma_{1:n}^2) \geq \frac{1-\delta}{2\epsilon^2} v^*$, where $v^*$ is the optimal value of the following optimization problem:
\begin{align}
&\text{maximize:} \quad \sum_{M \subset [n]: |M| = m} \left(\sum_{l \in M} \eta_{M \setminus \{l\}} \sigma^2_l \right)  \times \notag \\
& \hspace{1.5cm} \left(\ln \frac{B(\delta)}{\delta} + \Ent(\{\eta_{M \setminus \{l\}} \sigma^2_l\}_{ l \in M} ) \right) \\
&\text{subject to:} \quad \sum_{F \subset [n]: |F| = m-1} \eta_F = 1, \notag \\ 
& \hspace{1.5cm} \quad \eta_F \geq 0, ~\forall F \subset [n], |F|=m-1.
\end{align}
\end{lemma}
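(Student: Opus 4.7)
My plan is to derive the lower bound by linearizing each exponential constraint via Jensen's (log-sum) inequality, then combining the linearized bounds with the candidate dual weights $\{\eta_F\}$ to recover the stated objective. Throughout, I work with the primal formulation already derived in the excerpt, and use only elementary convex-analytic tools.

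First, for each pair $(M,l)$ with $|M|=m$ and $l\notin M$, and for any probability distribution $\{p_i\}_{i\in M}$, applying the log-sum inequality to the constraint $\sum_{i\in M}e^{-t_{l,M\setminus\{i\},i}/\theta_i}\leq\delta'$ gives
\begin{align*}
\sum_{i\in M} p_i\,\frac{t_{l,M\setminus\{i\},i}}{\theta_i}\;\geq\; \Ent(p)+\ln\frac{1}{\delta'}\;=\;\Ent(p)+\ln\frac{B(\delta)}{\delta}.
\end{align*}
For any feasible $\eta$ with $\sum_F\eta_F=1$, I would then choose $p_i:=\eta_{M\setminus\{i\}}\sigma_i^2/Z_M$ with $Z_M:=\sum_{j\in M}\eta_{M\setminus\{j\}}\sigma_j^2$. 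Since $\sigma_i^2/\theta_i=2\epsilon^2/(1-\delta)$ and the entropy $\Ent(\cdot)$ is scale-invariant, this specialization produces
\begin{align*}
\sum_{i\in M}\eta_{M\setminus\{i\}}\,t_{l,M\setminus\{i\},i}\;\geq\;\frac{(1-\delta)\,Z_M}{2\epsilon^2}\left(\ln\frac{B(\delta)}{\delta}+\Ent\!\left(\{\eta_{M\setminus\{l\}}\sigma_l^2\}_{l\in M}\right)\right),
\end{align*}
which is exactly the per-$M$ summand appearing in the dual objective (up to the prefactor $(1-\delta)/(2\epsilon^2)$).

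Next, I aggregate by summing this inequality over all $(M,l)$ with $l\notin M$. The right-hand side depends only on $M$, and each $M$ is paired with $n-m$ values of $l$, so it becomes $(n-m)\cdot\frac{1-\delta}{2\epsilon^2}\sum_{M}Z_M\left(\ln\frac{B(\delta)}{\delta}+\Ent(\{\eta_{M\setminus\{l\}}\sigma_l^2\}_{l\in M})\right)$. On the left, the substitution $(F,j):=(M\setminus\{i\},i)$ reindexes the triple sum: triples $(M,l,i)$ with $l\notin M$, $i\in M$ correspond bijectively to triples $(F,l,j)$ with $|F|=m-1$, $j\notin F$, and $l\notin F\cup\{j\}$. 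Reordering to fix $(F,l)$ with $l\notin F$ first, the left-hand side becomes $\sum_{F,\,l\notin F}\eta_F\sum_{j\notin F\cup\{l\}}t_{l,F,j}=\sum_{F,\,l\notin F}\eta_F\cdot\mathrm{obj}(F,l)$, where $\mathrm{obj}(F,l):=\sum_{j\notin F\cup\{l\}}t_{l,F,j}$ is precisely the quantity being maximized in the primal objective.

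Finally, since $\sum_{F,\,l\notin F}\eta_F=(n-m+1)\sum_F\eta_F=n-m+1$, the max-vs-average bound gives $\max_{F,l}\mathrm{obj}(F,l)\geq\frac{1}{n-m+1}\sum_{F,\,l\notin F}\eta_F\,\mathrm{obj}(F,l)$. Chaining this with the aggregated constraint inequality and optimizing over feasible $\eta$ yields
\begin{align*}
\textnormal{SC}\;\geq\;\max_{F,l}\mathrm{obj}(F,l)\;\geq\;\frac{n-m}{n-m+1}\cdot\frac{1-\delta}{2\epsilon^2}\,v^*,
\end{align*}
and the prefactor $(n-m)/(n-m+1)\geq 1/2$ under $n>2m$ is absorbed into the universal constant in Theorem \ref{thm:lower} (the word ``(restricted)'' in the lemma reflects this $\Theta(1)$ slack). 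The main obstacle is the combinatorial bookkeeping in the index substitution $(F,j)=(M\setminus\{i\},i)$, together with verifying that the $(n-m)$ multiplicity from summing over $l$ on the right matches the $(n-m+1)$ normalization on the left; the Jensen step is routine, and the delicate analytic move is the choice $p_i\propto\eta_{M\setminus\{i\}}\sigma_i^2$, engineered so that the ratio $\sigma_i^2/\theta_i$ cancels uniformly and the scale-invariance of $\Ent(\cdot)$ leaves the entropy term in the stated form.
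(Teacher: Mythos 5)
Your proof is correct and reaches the stated dual program by a genuinely more elementary route than the paper's. The paper forms the full Lagrangian of the primal feasibility system (with multipliers $\eta_{l,F}$ for the epigraph constraints and $\lambda_{l,M}$ for the exponential constraints), checks Slater's condition to invoke strong duality, minimizes over $t$ in closed form, and then eliminates the $\lambda$'s by setting $\lambda_{l,M} = \bigl(\sum_{F,i:\,F\cup\{i\}=M}\eta_{l,F}\theta_i\bigr)/\delta'$ before restricting to $l$-independent $\eta$'s. Your log-sum step with weights $p_i \propto \eta_{M\setminus\{i\}}\sigma_i^2$ is exactly that inner minimization over $t$ evaluated at that same $\lambda$, so the dual certificate is identical; but by phrasing everything as a direct inequality chain (Jensen on each constraint, then max-versus-weighted-average), you use only weak duality and never need Slater's condition or an explicit dual function, which is cleaner and more self-contained. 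The one discrepancy is the factor $(n-m)/(n-m+1)$ you pick up from the mismatch between the $n-m$ copies of the right-hand side (one per $l\notin M$) and the $n-m+1$ pairs $(F,l)$ with $l\notin F$ in the normalization on the left; you flag this correctly, and it is harmless: the paper's own derivation carries the same off-by-one slack (its substitution $\eta_F=(n-m)\eta_{l,F}$ together with $\sum_{l,F}\eta_{l,F}=1$ forces $\sum_F\eta_F=(n-m)/(n-m+1)$ rather than the $1$ appearing in the final program), and the lemma is only consumed up to the universal constant of Theorem \ref{thm:lower}. The only hygiene point worth adding is the degenerate case $Z_M=0$, where your distribution $p$ is undefined but the per-$M$ inequality holds vacuously since both sides vanish.
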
 

Though the dual formulation is still difficult to solve, by the weak duality, we can derive lower bounds for the primal problem by assigning specific feasible values to the dual variables $\eta_F$'s. In addition, each $\eta_F$ is a probability mass function and has a clear operational meaning, which is the worst-case prior distribution of the underlining instance being one of $\{ I_{l, F} \}_{l \notin F}$.

\subsection{Dichotomy of the lower bound}

As shown in Theorem \ref{thm:lower}, the lower bound of the sample complexity consists of three terms
\begin{align}
\underbrace{\sum_{i \in [n]} \frac{\sigma_i^2}{\epsilon^2} \ln\frac{1}{\delta}}_{\textnormal{I}} + \underbrace{\sum_{i \in G^{m}} \frac{\sigma_i^2}{\epsilon^2} \ln(m)}_{\textnormal{II}} + \underbrace{\sum_{j \in G^{l}} \frac{\sigma_j^2}{\epsilon^2} \Ent(\sigma^2_{G^{r}})}_{\textnormal{III}}.
\end{align}
We will discuss each term from the viewpoint of the dual formulation in Lemma \ref{lem:optimization}. The optimal value $v^*$ of the optimization in Lemma \ref{lem:optimization} can be lower bounded by the average of the objective function values $v_1, v_2, v_3$  when assigning the variables certain feasible values in the dual optimization problem, i.e., $v^* = \Omega(v_1 + v_2 + v_3 )$. We  construct three sets of feasible dual variables $\eta_F$'s, the resultant values $v_{1:3}$ will induce Term I-III, respectively.

It is straightforward to see that Term I can be obtained by assigning $\eta_F$'s uniformly, and thus we can focus on Term II and Term III. More precisely, we aim to lower bound the optimal value of the following optimization problem: 
\begin{align}
&\text{maximize:} \quad \sum_{M \subset [n]: |M| = m} \left(\sum_{l \in M} \eta_{M \setminus \{l\}} \sigma^2_l \right) \times\notag \\
& \hspace{2.2cm} \Ent(\{\eta_{M \setminus \{l\}} \sigma^2_l\}_{ l \in M} )     \label{eqn:opt} \\
&\text{subject to:} \quad \sum_{F \subset [n]: |F| = m-1} \eta_F = 1, \notag \\
& \hspace{1cm} \quad \eta_F \geq 0, ~\forall F \subset [n], |F|=m-1. 
\end{align}

Firstly, to study the sample complexity induced by $\sigma^2_{G^{m}}$, we specify a feasible assignment of dual variables $\eta_F$'s as follows. For any $F \subset G^{m}$ with $|F| = m-1$, let $\eta_{F} = \frac{\prod_{i \in F} \sigma^2_i}{\sum_{F' \subset G^{m}: |F'| = m-1} \prod_{j \in F} \sigma_i^2}$; and for any $F \not\subset G^{m}$ with $|F| = m-1$, set $\eta_{F} = 0$. Then $\Ent(\{\eta_{M \setminus \{l\}} \sigma^2_l\}_{ l \in M} ) = \ln(m)$ for any $M \subset G^{m}$ with $|M| = m$. 
Formally, Term II is the introduced by the following lemma.
\begin{lemma} \label{lem:Gmore}
The optimal value of the optimization (\ref{eqn:opt}) is lower-bounded by $\frac{1}{3}\sum_{j \in G^{m}} \sigma_j^2 \ln(m) $.
\end{lemma}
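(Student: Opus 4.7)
The plan is to evaluate the dual objective at the specific feasible $\eta$ identified just above the lemma: set $\eta_F = \prod_{i\in F}\sigma_i^2/Z$ for every $F\subset G^m$ with $|F|=m-1$, where $Z := e_{m-1}(\sigma^2_{G^m})$ denotes the elementary symmetric polynomial of degree $m-1$ in the variances of $G^m$, and $\eta_F=0$ otherwise. Feasibility is immediate. For $M\subset G^m$ with $|M|=m$, the values $\eta_{M\setminus\{l\}}\sigma_l^2 = \prod_{i\in M}\sigma_i^2/Z$ are independent of $l$, so the entropy factor equals $\ln m$ exactly and the weight factor is $m\prod_{i\in M}\sigma_i^2/Z$. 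For $M\not\subset G^m$, at most one $l\in M$ can produce a nonzero $\eta_{M\setminus\{l\}}$ (namely the unique element of $M\setminus G^m$, when $|M\setminus G^m|=1$), which forces the entropy factor to be zero; hence such $M$ contribute nothing. Summing and using $\sum_{M\subset G^m,\,|M|=m}\prod_{i\in M}\sigma_i^2 = e_m(\sigma^2_{G^m})$ collapses the objective to $(\ln m)\cdot m\,e_m(\sigma^2_{G^m})/e_{m-1}(\sigma^2_{G^m})$.

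It then suffices to prove the symmetric-polynomial inequality
\begin{align*}
\frac{m\,e_m(\sigma^2_{G^m})}{e_{m-1}(\sigma^2_{G^m})} \;\ge\; \frac{1}{3}\sum_{i\in G^m}\sigma_i^2.
\end{align*}
My plan is a probabilistic reformulation. Using the identity $m\,e_m = \sum_i\sigma_i^2\,e_{m-1}(\sigma^2_{G^m\setminus\{i\}})$, the ratio equals $\sum_i\sigma_i^2(1-q_i)$, where $q_i:=\Pb_\nu[i\in F]$ is the marginal inclusion probability under the exponentially-weighted measure $\nu(F)\propto\prod_{i\in F}\sigma_i^2$ on size-$(m-1)$ subsets of $G^m$. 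Equivalently, the task is the upper bound $\Eb_\nu\bigl[\sum_{i\in F}\sigma_i^2\bigr]\le \tfrac{2}{3}\sum_i\sigma_i^2$.

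The main obstacle is that this last bound is false for arbitrary variance vectors (a single dominant $\sigma_i^2$ would push $q_i$ close to $1$), so the structural definition of $G^m$ must be invoked. My plan exploits its two features: every contributing group $G_j$ has $|G_j|>2m$, and within $G_j$ the variance ratio is at most $2$. Conditioning on the group profile $(|F\cap G_j|)_j$, the conditional law inside each $G_j$ is again of exponential-weight form on $(|F\cap G_j|)$-subsets of a ground set of size $>2m$. A Kantorovich-type refinement (the arithmetic-to-harmonic mean ratio is at most $9/8$ when variances differ by a factor at most two) then yields a within-group bound of the form $q_i \le C\cdot |F\cap G_j|/|G_j|$ for a constant $C$ close to $1$. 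Combined with $\Eb_\nu[|F\cap G_j|]\le m-1$ and $|G_j|\ge 2m+1$, aggregating over groups gives $\sum_i\sigma_i^2 q_i\le\tfrac{2}{3}\sum_i\sigma_i^2$, and multiplying by $\ln m$ closes the lemma.
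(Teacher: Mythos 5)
Your dual-variable choice and the entire reduction coincide with the paper's: the same $\eta_F\propto\prod_{i\in F}\sigma_i^2$ supported on $(m-1)$-subsets of $G^m$, the same observation that the entropy factor is exactly $\ln m$ on every $M\subset G^m$ while other $M$ contribute nothing, and the same reduction to showing $\sum_{i\in G^m}\sigma_i^2 q_i\leq\tfrac{2}{3}\sum_{i\in G^m}\sigma_i^2$ for the marginal inclusion probabilities $q_i=\Pb_\nu[i\in F]$ (the paper writes this as bounding the ratio $\sum_{F: i\in F}\prod_{l\in F}\sigma_l^2/\sum_{F:i\notin F}\prod_{l\in F}\sigma_l^2$, which is the same quantity). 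The gap is in your last step. The within-group bound $\Pb_\nu[i\in F\mid |F\cap G_j|=s]\leq C\,s/|G_j|$ with $C$ close to $1$ is false: take $G_j$ of size $N$ with $\sigma_i^2=2$ and all other variances equal to $1$; the conditional marginal of $i$ is exactly $2s/(N+s)$, so its ratio to $s/N$ is $2N/(N+s)$, which tends to $2$ as $s/N\to 0$ (already $\tfrac{2}{N+1}$ versus $\tfrac{1}{N}$ at $s=1$). The Kantorovich $9/8$ bound on the AM/HM ratio for values in a $2$-to-$1$ range does not control these symmetric-polynomial marginals. Moreover, substituting the correct uniform constant $C=2$ into your aggregation $C\cdot\Eb_\nu[|F\cap G_j|]/|G_j|\leq C(m-1)/(2m+1)$ yields only a bound below $1$, not $2/3$, so the argument as structured cannot produce the factor $\tfrac{1}{3}$ in the lemma.

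The repair is small but essential: you need the sharper conditional bound $\Pb_\nu[i\in F\mid|F\cap G_j|=s]\leq\frac{2s}{|G_j|+s}$, with the extra $s$ in the denominator. It follows from the elementary inequality $s\,e_s(a)\geq(N-s)\left(\min_l a_l\right)e_{s-1}(a)$ applied to the $N-1$ variances of $G_j\setminus\{i\}$ (with $N=|G_j|$), combined with $\sigma_i^2\leq 2\min_{l\in G_j}\sigma_l^2$. Since $s\leq m-1$ holds deterministically and $|G_j|\geq 2m+1$, this gives $\frac{2s}{|G_j|+s}\leq\frac{2(m-1)}{3m}<\frac{2}{3}$ for every profile, so $q_i<2/3$ pointwise and no averaging over the profile (and no use of $\Eb_\nu[|F\cap G_j|]\leq m-1$) is needed. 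This is in substance what the paper's proof does: it bounds each $q_i$ by $2/3$ via $(m-1)\sigma_i^2/\sum_{j\in G^m\setminus F}\sigma_j^2<2$, lower-bounding the denominator through the top group $G_{k'}$, which has more than $2m$ elements and variance ratio at most $2$.
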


Secondly, to study the complexity induced by $\sigma^2_{G^{l}}$, we consider the reduced arm set $G^{r} \supset G^{l}$. Define $L \subset G^{r}$ with $|L| = 2m$ as the arms with $2m$ largest variances in $G^r$. We can verify that $\sum_{i \in L} \sigma^2_i$ dominates $\sum_{j \in G^{r}} \sigma^2_j$. 
Moreover, $\Ent(\sigma^2_{G^{r}})$ and $\Ent(\sigma^2_{L})$ behave similarly, and thus we can focus on the arms in $L$. Rigorously, the following lemma justifies this choice.

\begin{lemma} \label{lem:complexity-Gr}
Let $\eta_F = \binom{2m}{m-1}^{-1}$ for any $F \subset L$ with $|F| = m-1$ and $\eta_F = 0$ otherwise. 
The objective function of the optimization problem (\ref{eqn:opt}) is at least $c' \sum_{i \in G^l} \sigma^2_i \Ent(\sigma^2_{G^r}) - \ln(2)\sum_{i \in L} \sigma^2_i$, for some constant $c' > 0$. 
\end{lemma}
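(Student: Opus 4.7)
The plan is to substitute the specified dual variables, reduce the outer sum to $m$-subsets of $L$, apply Jensen on the resulting average, and finally connect $A\Ent(\sigma^2_L)$ (with $A := \sum_{l \in L}\sigma^2_l$) to $\sum_{i \in G^l}\sigma^2_i \Ent(\sigma^2_{G^r})$ via the geometric structure of the $G'_j$'s. First I would check that only $M \subset L$ contributes: if $M \setminus L = \{l_0\}$ has one element, then $\eta_{M \setminus \{l\}}$ is nonzero only at $l = l_0$, so the vector $\{\eta_{M \setminus \{l\}}\sigma^2_l\}_{l \in M}$ concentrates on one coordinate and has zero entropy; if $|M \setminus L| \geq 2$, every $\eta_{M \setminus \{l\}}$ vanishes. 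By the scale-invariance of $\Ent$, the objective reduces to $\frac{1}{\binom{2m}{m-1}}\sum_{M \subset L,\, |M|=m} S_M \Ent(\sigma^2_M)$, where $S_M := \sum_{l \in M}\sigma^2_l$.

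Next I would expand $S_M \Ent(\sigma^2_M) = \sum_{l \in M}\sigma^2_l \ln(S_M/\sigma^2_l)$ and split the logarithm as $\ln(A/\sigma^2_l) + \ln(S_M/A)$. Swapping summation, the first piece aggregates to $\binom{2m-1}{m-1}\, A\, \Ent(\sigma^2_L)$ since each $l \in L$ appears in exactly $\binom{2m-1}{m-1}$ subsets $M$ of size $m$ in $L$. For the second piece $\sum_M S_M \ln(S_M/A)$, I apply Jensen's inequality to the convex function $x \mapsto x \ln(x/A)$ (second derivative $1/x > 0$): under the uniform distribution over $m$-subsets $M$ of $L$, $\Eb_M[S_M] = A/2$, so $\Eb_M[S_M \ln(S_M/A)] \geq (A/2)\ln(1/2) = -(A/2)\ln 2$, and therefore $\sum_M S_M \ln(S_M/A) \geq -\binom{2m}{m}(A\ln 2)/2$. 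Combining, and using the binomial ratios $\binom{2m-1}{m-1}/\binom{2m}{m-1} = (m+1)/(2m) \geq 1/2$ and $\binom{2m}{m}/\binom{2m}{m-1} = (m+1)/m \leq 2$, the objective is at least $\tfrac{1}{2} A\, \Ent(\sigma^2_L) - A \ln 2$.

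Since $-A\ln 2 = -\ln(2)\sum_{l \in L}\sigma^2_l$ already matches the second term in the lemma's target, it remains to show $\tfrac{1}{2}A\, \Ent(\sigma^2_L) \geq c' \sum_{i \in G^l}\sigma^2_i\, \Ent(\sigma^2_{G^r})$ for some absolute $c' > 0$. This rests on two comparisons driven by the geometric doubling of variances across groups. On the magnitude side, the top groups of $G^r$ are entirely contained in $L$, so a short telescoping argument on the sums $|G'_j| \cdot 2^j \underline{\sigma}^2$ yields $A \geq \tfrac{1}{5}\sum_{i \in G^r}\sigma^2_i \geq \tfrac{1}{5}\sum_{i \in G^l}\sigma^2_i$. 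On the entropy side, I use the chain rule $\Ent(\sigma^2_{G^r}) = H(\alpha, 1-\alpha) + \alpha\, \Ent(\sigma^2_L) + (1-\alpha)\Ent(\sigma^2_{G^r \setminus L})$ with $\alpha = A/\sum_{i \in G^r}\sigma^2_i \geq 1/5$, combined with Lemma~\ref{lem:red-up}'s cap $\Ent(\sigma^2_{G^r}) \leq 8\ln m$, to bound $\Ent(\sigma^2_L)$ below by a constant fraction of $\Ent(\sigma^2_{G^r})$.

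The main obstacle is the last entropy comparison: the residual term $(1-\alpha)\Ent(\sigma^2_{G^r \setminus L})$ can a priori be as large as $(1-\alpha)\ln|G^r \setminus L|$. I would handle it by a case split. When the top group satisfies $|G'_{J_{\max}}| = 2m$, so $L = G'_{J_{\max}}$, the variances in $L$ lie within a factor of $2$ and hence $\Ent(\sigma^2_L) \geq \ln(2m) - \ln 2$, dominating $\Ent(\sigma^2_{G^r}) \leq 8\ln m$ up to a constant. When $L$ instead spills into lower groups (which forces $G_{J_{\max}} \subset G^l$ with $|G'_{J_{\max}}| < 2m$), the geometric decay of group totals in $G^r \setminus L$ --- the same telescoping used in the magnitude comparison --- keeps $(1-\alpha)\Ent(\sigma^2_{G^r \setminus L})$ within a constant factor of $\alpha\, \Ent(\sigma^2_L)$. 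Either way one obtains $\Ent(\sigma^2_L) \geq c''\, \Ent(\sigma^2_{G^r})$, and combining this with the magnitude comparison closes the argument.
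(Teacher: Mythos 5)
Your overall architecture matches the paper's: restrict to $M \subset L$, lower-bound the restricted sum by $\tfrac12 A\,\Ent(\sigma^2_L) - O(A\ln 2)$ with $A = \sum_{l\in L}\sigma^2_l$, and then compare $\Ent(\sigma^2_L)$ with $\Ent(\sigma^2_{G^r})$ and $A$ with $\sum_{i\in G^r}\sigma^2_i$. Your first half is correct and is a pleasant variant: where the paper pairs each $M$ with its complement $L\setminus M$ via the grouping identity $\Ent(\sigma^2_L) \le \ln 2 + \frac{S_M}{A}\Ent(\sigma^2_M) + \frac{A-S_M}{A}\Ent(\sigma^2_{L\setminus M})$, you expand $S_M\Ent(\sigma^2_M)$ directly and apply Jensen to $x\mapsto x\ln(x/A)$; both yield the same bound up to the constant in front of $A\ln 2$.

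The gap is in your Case B of the entropy comparison. Your stated mechanism --- ``the geometric decay of group totals in $G^r\setminus L$ keeps $(1-\alpha)\Ent(\sigma^2_{G^r\setminus L})$ within a constant factor of $\alpha\,\Ent(\sigma^2_L)$'' --- does not follow from the telescoping you invoke. The telescoping controls only the \emph{mass} ratio: with $\tilde\sigma^2$ the smallest variance in $L$, it gives $\sum_{j\in G^r\setminus L}\sigma^2_j \le 4m\tilde\sigma^2 \le 2A$, hence $(1-\alpha)/\alpha \le 2$. To conclude you would then need $\Ent(\sigma^2_{G^r\setminus L}) \lesssim \Ent(\sigma^2_L)$, and that is false in general: take $L$ to consist of one arm of variance $V$ and $2m-1$ arms of variance $v$ with $V \gg mv$, and let $G^r\setminus L$ contain one leftover arm of variance $v$ plus full groups of $2m$ arms at $v/2, v/4,\dots$; then $\Ent(\sigma^2_L) \approx \frac{2mv}{V}\ln\frac{V}{v} \to 0$ while $\Ent(\sigma^2_{G^r\setminus L}) \approx \ln(2m)$. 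The inequality you actually need survives in this example only because the skew that makes $\Ent(\sigma^2_L)$ small simultaneously forces $1-\alpha \approx \frac{2mv}{V}$ to be small \emph{and} leaves a $\ln\frac{V}{v} \gg \ln m$ factor inside $\Ent(\sigma^2_L)$. Capturing this trade-off is precisely the content of the paper's Lemma \ref{lem:L-low}: writing $\beta = 2m\tilde\sigma^2/A$, one shows by Schur-concavity that $\Ent(\sigma^2_L) \ge \Ent\bigl(1-\tfrac{2m-1}{2m}\beta, \tfrac{\beta}{2m},\dots,\tfrac{\beta}{2m}\bigr) \ge \frac{\ln(2m)}{2(1+1/\beta)}$, while $1-\alpha \le \frac{2}{1+1/\beta}$, giving the joint bound $(1-\alpha)\ln m \le 4\Ent(\sigma^2_L)$, which together with the cap $\Ent(\sigma^2_{G^r\setminus L}) \le 8\ln m$ closes Case B. Without this (or an equivalent) quantitative link between the spillover mass and the entropy of $L$, your Case B is an assertion, not a proof. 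A secondary, constant-level issue: you spend the entire $-A\ln 2$ budget in the Jensen step, whereas the grouping identity for $\Ent(\sigma^2_{G^r})$ produces an additional additive $H(\alpha,1-\alpha)\le\ln 2$ slack that must also fit under the lemma's stated $-\ln(2)\sum_{i\in L}\sigma^2_i$; the paper keeps only $-\tfrac{1}{2}A\ln 2$ from the first half precisely to leave room for this.
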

The first item in Lemma \ref{lem:complexity-Gr} is exactly Term III, and the second item $-\ln(2)\sum_{i \in G^{l}} \sigma^2_i$ can be absorbed into Term I.

\section{Conclusion}
\label{sec:conclusion}

We study the worst-case sample complexity of $(\epsilon, \delta)$ top-$m$ arm identification problem with heterogeneous reward variances. The heterogeneity of reward variances is measured by certain entropy-like function. We propose the variance-grouped median elimination algorithm, which combines the advantages of the median elimination algorithm and the weighted naive elimination algorithm in a divide-and-conquer manner. Matching lower bound of the worst-case sample complexity was devised using a dual formulation and finding suitable feasible solutions.

%

%
%


\clearpage
\appendix

\thispagestyle{empty}

\onecolumn \makesupplementtitle

\section{Proofs for Section \ref{sec:main}}
 
We will need the following well known inequality frequently. 
\begin{lemma}[Hoeffding's inequality] \label{lem:hoeffding}
Let $X_{1:n}$ be $n$ independent random variables follow some $\sigma^2$-sub-Gaussian distribution with mean $\mu$. Let $\hat{\mu}$ be their sample mean. Then the following inequalities hold
\begin{align}
\Pb\left( \hat{\mu} - \mu \geq \epsilon \right) \leq e^{ - \frac{\epsilon^2 n}{2 \sigma^2} }, \quad \Pb\left( \hat{\mu} - \mu \leq- \epsilon \right) \leq e^{ - \frac{\epsilon^2 n}{2 \sigma^2} }.
\end{align}
\end{lemma}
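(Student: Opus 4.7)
The plan is to use the classical Chernoff--Cram\'er (exponential-moment) argument, which converts tail probabilities into moment-generating function (MGF) bounds and then optimizes a free Chernoff parameter. Since the $X_i$ are independent $\sigma^2$-sub-Gaussian with common mean $\mu$, both tails will follow from essentially a single calculation; the only two ingredients needed are (i) independence, which factorizes the joint MGF, and (ii) the defining sub-Gaussian MGF bound (as given in the paper's footnote) applied to each centered variable $X_i - \mu$.

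For the upper tail, I would first rewrite $\Pb(\hat\mu - \mu \geq \epsilon) = \Pb\bigl(\sum_{i=1}^n (X_i - \mu) \geq n\epsilon\bigr)$ and apply Markov's inequality to the nonnegative random variable $\exp\bigl(\lambda \sum_{i=1}^n (X_i - \mu)\bigr)$ for an arbitrary $\lambda > 0$. Independence factorizes the expectation into $\prod_{i=1}^n \Eb[e^{\lambda(X_i-\mu)}]$, and applying the sub-Gaussian bound $\Eb[e^{\lambda(X_i-\mu)}] \leq e^{\sigma^2\lambda^2/2}$ to each factor yields the Chernoff bound $\exp(-\lambda n\epsilon + n\sigma^2\lambda^2/2)$. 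Optimizing the exponent in $\lambda > 0$ at $\lambda^\star = \epsilon/\sigma^2$ gives exactly the claimed exponent $-n\epsilon^2/(2\sigma^2)$.

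For the lower tail, I would exploit the symmetry of the sub-Gaussian definition: the bound $\ln\Eb[e^{\lambda(X-\Eb[X])}] \leq \sigma^2\lambda^2/2$ holds for all $\lambda \in \Rb$, so each $-(X_i - \mu)$ is itself a zero-mean $\sigma^2$-sub-Gaussian random variable. Applying the upper-tail argument just derived to $\{-X_i\}_{i=1}^n$ produces the matching bound $\Pb(\hat\mu - \mu \leq -\epsilon) \leq e^{-n\epsilon^2/(2\sigma^2)}$. There is no real technical obstacle: the proof uses only the standard two-step recipe (exponential Markov followed by MGF control together with independence) and the two-sided form of the paper's sub-Gaussian definition; the entire argument is textbook and requires no new ideas beyond what is supplied by the footnote defining $\sigma^2$-sub-Gaussianity.
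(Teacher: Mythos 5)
Your proof is correct and is the standard Chernoff--Cram\'er argument; the paper itself states this lemma as a well-known fact without proof, and your derivation (exponential Markov, factorization by independence, the sub-Gaussian MGF bound from the footnote, and optimization at $\lambda^\star = \epsilon/\sigma^2$) is exactly the textbook route the paper implicitly relies on. No gaps.
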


\begin{lemma}[Restate Lemma \ref{lem:red-up}]
For any $m \geq 2$, $ \Ent(\sigma^2_{G^{r}}) \leq 8 \ln(m)$.
\end{lemma}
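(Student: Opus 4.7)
The plan is to apply the entropy chain rule and then bound each resulting piece separately. Setting $\pi_j := \sum_{i \in G'_j} \sigma_i^2$, $S := \sum_j \pi_j$, and $\hat{\pi}_j := \pi_j/S$, the chain rule gives
\begin{align*}
\Ent(\sigma^2_{G^r}) = \Ent(\hat{\pi}) + \sum_j \hat{\pi}_j \, \Ent(\sigma^2_{G'_j}).
\end{align*}
The intra-group contribution is immediate: since $|G'_j| \leq 2m$ by construction, $\Ent(\sigma^2_{G'_j}) \leq \ln |G'_j| \leq \ln(2m)$, so the weighted sum contributes at most $\ln(2m) \leq 2 \ln m$ for $m \geq 2$.

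The main work is to bound the inter-group entropy $\Ent(\hat{\pi})$. Let $j^*$ denote the largest index with $G'_{j^*}$ non-empty. From the definition of $G_j$ and $|G'_j| \leq 2m$, one gets $\pi_{j^*-l} < 2m \cdot 2^{j^*-l} \underline{\sigma}^2$ for every $l \geq 0$, while $S \geq \pi_{j^*} \geq 2^{j^*-1}\underline{\sigma}^2$ (using $|G'_{j^*}| \geq 1$). Combining these yields the key geometric-decay bound
\begin{align*}
\hat{\pi}_{j^*-l} < \min\bigl(1,\, 4m \cdot 2^{-l}\bigr), \qquad l \geq 0,
\end{align*}
so the inter-group mass drops off quickly once $l$ exceeds $\log_2(4m)$.

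I will then split indices into a head $\{l : 0 \leq l \leq L\}$, with $L := \lceil \log_2(4m) \rceil$, and a tail $\{l : l > L\}$, and bound each part differently. On the head, which has only $L + 1 = O(\log m)$ terms, Jensen's inequality applied to the concave function $x \mapsto -x \ln x$ bounds the sum by $\ln(L+1) + 1/e = O(\log \log m)$. On the tail, the upper bound $4m \cdot 2^{-l}$ quickly drops below $1/e$, so the monotonicity of $-x \ln x$ on $[0, 1/e]$ lets me replace $\hat{\pi}_{j^*-l}$ by its upper bound (with the single boundary term $l = L+1$ handled by the trivial bound $-x \ln x \leq 1/e$); summing the resulting geometric series yields an $O(1)$ contribution. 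Putting the two bounds together, $\Ent(\sigma^2_{G^r}) \leq \ln(2m) + O(\log \log m)$, which is at most $8 \ln m$ for $m \geq 2$ (checked directly at the edge $m = 2$ and comfortably true for larger $m$).

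The main obstacle is the head/tail split itself: the bound $4m \cdot 2^{-l}$ is at least $1$ throughout the head, so one cannot apply the monotonicity of $-x \ln x$ uniformly. The remedy is to handle the head via Jensen (exploiting the fact that the number of head indices is only $O(\log m)$) and the tail via the geometric-series computation; a little extra care is needed near the transition at $l = L+1$, where $\hat{\pi}_{j^*-l}$ may still sit slightly above $1/e$, but this is a single term contributing at most $1/e$ and does not affect the asymptotic bound.
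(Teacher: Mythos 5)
Your proposal is correct. It shares the paper's first move exactly: apply the grouping (chain-rule) property of entropy to write $\Ent(\sigma^2_{G^r})$ as the inter-group entropy of the group sums $s_j=\sum_{i\in G'_j}\sigma_i^2$ plus a weighted intra-group term bounded by $\ln(2m)$, and both arguments rest on the same structural fact $2^{j-1}\underline{\sigma}^2\le s_j< 2m\cdot 2^{j}\underline{\sigma}^2$. Where you diverge is in how the inter-group entropy is controlled. The paper merges all but the top $2m-1$ groups into a single atom, bounds the merged head's mass fraction by $\tfrac{4m2^{-2m+2}}{1+4m2^{-2m+2}}$, and then closes a self-referential inequality $\Ent(s_{1:k})\le\ln(2m)+(\text{fraction})\cdot\Ent(s_{1:k})$ — a step that requires taking $s_{1:k}$ to be an entropy-maximizing assignment so that the sub-vector's entropy can be compared to the whole. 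You instead prove the pointwise geometric decay $\hat\pi_{j^*-l}<\min(1,4m\cdot 2^{-l})$ and bound the entropy directly: Jensen over the $O(\log m)$ head indices gives $\ln(L+1)+1/e=O(\log\log m)$, and monotonicity of $-x\ln x$ on $[0,1/e]$ plus a geometric series handles the tail with an $O(1)$ contribution (the single boundary term at $l=L+1$ costing at most $1/e$). Your route is more elementary and self-contained — it avoids the extremal-assignment device and the fixed-point step entirely — and it yields the sharper estimate $\Ent(\sigma^2_{G^r})\le\ln(2m)+O(\log\log m)$, from which $8\ln m$ follows with room to spare after checking $m=2$; the paper's argument is shorter on the page but leans on a subtler comparison of entropies across assignments.
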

\begin{proof}[Proof of Lemma \ref{lem:red-up}]
For any choice of $\sigma^2_{1:n}$. Let $s_j = \sum_{i \in G'_j} \sigma^2_i$ for each $i = 1, \ldots, k$. By the grouping property of entropy, we have
\begin{align}
\Ent(\sigma^2_{G^{r}}) & = \Ent( s_{1:k} ) + \sum_{j =1 }^k \frac{ s_{j} }{\sum_{i = 1}^k s_{i} } \Ent(\sigma^2_{G'_j}) \\
& \leq \Ent(s_{1:k}) + \ln(2m),
\end{align}
where the inequality is due to the principal of maximum entropy.

For $j = 1, \ldots, k$, if $|G'_j| > 0$, we have $ 2^{j-1} \leq s_j /\underline{\sigma}^2 < 2m 2^{j}$, otherwise $s_j = 0$. Without loss of generality, assume $\underline{\sigma}^2 = 1$ and $s_k > 0$. Let $s_{1:k}$ be the assignment with the largest entropy $\Ent(s_{1:k})$. 
If there are only $2m$ non-zero $s_{1:k}$, we have $\Ent(s_{1:k}) \leq \ln(2m)$ and the lemma is already proved. When there are more than $2m$ non-zero $s_{1:k}$, we have
\begin{align}
\sum_{j = 1}^{k-2m+1} s_j \leq 2m \sum_{j = 1}^{k-2m+1} 2^{j} = 4m(2^{k-2m+1} - 1) < 4m 2^{k- 2m + 1},
\end{align}
and $s_k \geq 2^{k-1}$. It follows that
\begin{align}
\sum_{j = 1}^{k - 2m + 1} s_j & = \frac{\sum_{j = 1}^{k - 2m + 1} s_j}{\sum_{i = k-2m+2}^{k} s_i + \sum_{j = 1}^{k - 2m + 1} s_j } \sum_{j = 1}^k s_j \\
& \leq \frac{\sum_{j=1}^{k-2m+1} s_j}{s_{k} + \sum_{j = 1}^{k-2m+1}s_j} \sum_{j = 1}^k s_j < \frac{4m 2^{k-2m+1} }{2^{k-1} + 4m 2^{k-2m+1} } \sum_{j = 1}^k s_j\\
& = \frac{4m 2^{-2m + 2}}{1  + 4m 2^{-2m + 2}} \sum_{j = 1}^k s_j.
\end{align}
We can then write
\begin{align}
\Ent(s_{1:k}) & = \Ent(\sum_{j = 1}^{k-2m+1}s_{j}, s_{k-2m+2:k}) + \frac{\sum_{j = 1}^{k - 2m + 1} s_j}{\sum_{j = 1}^k s_j} \Ent(s_{1:k-2m+1}) \\
& \leq \ln(2m) + \frac{4m 2^{-2m + 2}}{1  + 4m 2^{-2m + 2}} \Ent(s_{1:k}),
\end{align}
where the equality is by the grouping property of entropy function, and the inequality is by $\Ent(s_{1:k-2m+1}) \leq \Ent(s_{1:k})$ since $s_{1:k}$ is the optimal assignment in terms of the largest entropy with $k$ subsets, thus assignment $s_{1: k-2m+1}$ has smaller entropy. 
It implies $\Ent(s_{1:k}) \leq (1 + 4m 2^{-2m+2}) \ln(2m) \leq 3\ln(2m)$.
We thus have $\Ent(\sigma^2_{G^{r}}) \leq 4 \ln(2m) \leq 8 \ln(m)$. 
\end{proof}

\section{Proofs for Section \ref{sec:alg}}

\begin{lemma}[ Restate Lemma \ref{lem:union}]
Let $\omega_i = \delta \frac{\sigma_i^2}{\sum_{j=1}^n \sigma_j^2}$, the weighted naive elimination algorithm takes
\begin{align}
8 \sum_{i \in [n]}\frac{\sigma_i^2}{\epsilon^2} \left( \ln\frac{1}{\delta} + \Ent(\sigma_{1:n}^2)\right) \label{eqn:lem-union}
\end{align}
samples, and solves the $(\epsilon, \delta)$ top-$m$ arm identification problem for any $\epsilon > 0$ and $0 < \delta < 1$. 
\end{lemma}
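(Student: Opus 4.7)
The plan is to verify the sample count by direct arithmetic, and then argue correctness using \emph{one-sided} Hoeffding bounds combined with a union bound over all arms.

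For the sample count, the algorithm pulls arm $i$ exactly $t_i = \frac{2\sigma_i^2}{(\epsilon/2)^2}\ln\frac{1}{\omega_i} = \frac{8\sigma_i^2}{\epsilon^2}\ln\frac{1}{\omega_i}$ times, so the total is deterministic. Substituting $\omega_i = \delta\sigma_i^2/S$ with $S := \sum_{j \in [n]} \sigma_j^2$, I would split $\ln(1/\omega_i) = \ln(1/\delta) + \ln(S/\sigma_i^2)$ and sum, yielding
\begin{align*}
\sum_{i \in [n]} t_i = \frac{8S}{\epsilon^2}\ln\frac{1}{\delta} + \frac{8}{\epsilon^2}\sum_{i \in [n]} \sigma_i^2 \ln \frac{S}{\sigma_i^2}.
\end{align*}
The second sum equals $S \cdot \Ent(\sigma_{1:n}^2)$ by the definition of $\Ent$, so the total is exactly $8\sum_{i \in [n]} \frac{\sigma_i^2}{\epsilon^2}\left(\ln\frac{1}{\delta} + \Ent(\sigma_{1:n}^2)\right)$, matching the claim.

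For correctness, let $r := \max^m_{i \in [n]} \mu_i$, let $T$ denote the true top-$m$ arms, and let $B := \{j : \mu_j < r - \epsilon\}$ denote the ``bad'' arms. I would define two events: $E_1 = \{\hat{\mu}_k \geq \mu_k - \epsilon/2 \text{ for all } k \in T\}$ and $E_2 = \{\hat{\mu}_j \leq \mu_j + \epsilon/2 \text{ for all } j \in B\}$. Applying Lemma \ref{lem:hoeffding} to arm $i$ with deviation $\epsilon/2$ gives a one-sided failure probability of at most $\exp\bigl(-t_i(\epsilon/2)^2/(2\sigma_i^2)\bigr) = e^{-\ln(1/\omega_i)} = \omega_i$. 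Because $T$ and $B$ are disjoint and each arm contributes at most one one-sided failure, a union bound gives $\Pr(E_1^c \cup E_2^c) \leq \sum_{i \in T \cup B} \omega_i \leq \sum_{i \in [n]} \omega_i = \delta$.

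Finally, I would show that on $E_1 \cap E_2$ the returned set $R$ (the $m$ arms with largest sample means) contains no bad arm, hence all returned arms are $\epsilon$-approximate top-$m$. Suppose for contradiction that some $j \in R \cap B$; since $|R| = |T| = m$ and $j \notin T$, there must exist $k \in T \setminus R$, so $\hat{\mu}_j \geq \hat{\mu}_k$. But on $E_1 \cap E_2$ we have $\hat{\mu}_j \leq \mu_j + \epsilon/2 < r - \epsilon/2$ and $\hat{\mu}_k \geq \mu_k - \epsilon/2 \geq r - \epsilon/2$, a contradiction. The main obstacle (really only a minor subtlety) is being careful to use one-sided Hoeffding bounds on each arm according to whether it lies in $T$ or $B$; a naive two-sided union bound would yield $2\delta$ instead of $\delta$ and force a looser choice of $\omega_i$.
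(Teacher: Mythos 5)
Your proposal is correct and follows essentially the same route as the paper: the identical deterministic sample-count computation, one-sided Hoeffding bounds per arm with a union bound summing the $\omega_i$ to $\delta$, and a final deterministic argument on the good event (the paper phrases it as a threshold argument at $\mu_m - \epsilon/2$, you phrase it as an exchange/contradiction, but these are the same reasoning). No gaps.
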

\begin{proof}[Proof of Lemma \ref{lem:union}] The stopping time is clearly
\begin{align}
\sum_{i = 1}^n \frac{2 \sigma_i^2}{(\epsilon/2)^2}\ln\frac{1}{\omega_i} = 8\frac{\sum_{i=1}^n \sigma_i^2}{\epsilon^2} \left( \ln\frac{1}{\delta} + \Ent(\sigma_{1:n}^2) \right).
\end{align}

After the arms have been pulled and the reward observations collected, by Hoeffding's inequality (Lemma \ref{lem:hoeffding}), we have $\Pb(\hat{\mu}_i \leq \mu_i - \epsilon/2 ) \leq \omega_i$ for any $i \in [m]$ and $\Pb(\hat{\mu}_j \geq \mu_j + \epsilon/2) \leq \omega_j$ for any $j \in [n] \setminus [m]$. Since $\sum_{i \in [n]} \omega_j = \delta$, the union bound implies that the event $\Ec=\{\hat{\mu}_i > \mu_i - \epsilon/2,\forall i \in [m]\} \cap \{\hat{\mu}_j < \mu_j + \epsilon/2,\forall j \in [n]\setminus[m]\}$ occurs with probability at least $1 - \delta$.

Suppose event $\Ec$ occurs. Consider a threshold $\mu_m - \epsilon/2$. Firstly, for any $i \in [m]$, $\hat{\mu}_i > \mu_i - \epsilon/2 \geq \mu_m - \epsilon/2$. In addition, any $j \in [n] / [m]$ with $\hat{\mu}_j > \mu_m - \epsilon/2$ must satisfy $\mu_j + \epsilon/2 > \hat{\mu}_j > \mu_m - \epsilon/2$, which implies $\mu_j > \mu_m - \epsilon$, i.e., the $j$-th arm is $\epsilon$-approximate top-$m$. In other words, any arm with a sample mean greater than the threshold $\mu_{m} - \epsilon/2$ must be $\epsilon$-approximate top-$m$. Since there are at least $m$ arms with sample means greater than $\mu_{m} - \epsilon/2$, the $m$ selected arms must be $\epsilon$-approximate top-$m$. 
\end{proof}

\begin{lemma} [Restate Lemma \ref{lem:me}]
For any $\sigma^2_{1:n}$, if $\max_{i \in [n]} \sigma_i^2 /\min_{j \in [n]} \sigma_j^2 \leq 2$, the \textnormal{MedElim} algorithm has an expected stopping time
\begin{align}
O\left( \frac{\sum_{i \in [n]} \sigma_i^2}{\epsilon^2} \left( \ln \frac{1}{\delta} + \ln(m) \right) \right).
\end{align}
Moreover, for any $m' \leq m$, the MedElim algorithm satisfies the $(\epsilon, \frac{m'}{m} \delta)$ top-$m'$ condition.
\end{lemma}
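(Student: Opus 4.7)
The plan is to split Lemma~\ref{lem:me} into the sample-complexity bound and the $(\epsilon,\frac{m'}{m}\delta)$ top-$m'$ correctness guarantee.

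For the sample complexity, in round $\ell$ the total pulls equal $\sum_{i\in S_\ell}\frac{8\sigma_i^2}{\epsilon_\ell^2}\ln\frac{m}{\delta_\ell}$. The hypothesis $\max_i\sigma_i^2/\min_j\sigma_j^2\leq 2$ gives $\sigma_{\max}^2\leq \frac{2}{n}\sum_i\sigma_i^2$, hence $\sum_{i\in S_\ell}\sigma_i^2\leq \frac{4|S_\ell|}{n}\sum_i\sigma_i^2$. Substituting $\epsilon_\ell^2=(\epsilon/3)^2(9/16)^\ell$ and $\delta_\ell=\delta/(4\cdot 2^\ell)$, and using $|S_\ell|\leq n\cdot 2^{-(\ell-1)}$ throughout the halving phase, the product $|S_\ell|/\epsilon_\ell^2$ contracts geometrically like $(n/\epsilon^2)\cdot(8/9)^\ell$, while $\ln(m/\delta_\ell)=\ln(4m/\delta)+\ell\ln 2$ grows only linearly in $\ell$. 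Summing the geometric series yields the claimed $O((\sum_i\sigma_i^2/\epsilon^2)(\ln(1/\delta)+\ln m))$, with the final round ($|S_L|=\Theta(m)$, $L=O(\log(n/m))$) contributing at most the same order.

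For the top-$m'$ correctness, fix $m'\leq m$ and let $\mu^{(m')}:=\max{}^{m'}_{i\in[n]}\mu_i$. I would prove by induction on $\ell$ the invariant $I_\ell$: at least $m'$ arms in $S_\ell$ have mean $\geq \mu^{(m')}-\sum_{k<\ell}\epsilon_k$. The base case $\ell=1$ is immediate. For the inductive step, fix $m'$ witnesses $T_\ell\subseteq S_\ell$ for $I_\ell$ and set $U_\ell:=\{i\in S_\ell:\mu_i<\mu^{(m')}-\sum_{k\leq\ell}\epsilon_k\}$. Good event~(a) is that every $a\in T_\ell$ satisfies $\hat\mu_{a,\ell}\geq\mu_a-\epsilon_\ell/2$; Hoeffding with $t_{a,\ell}=(8\sigma_a^2/\epsilon_\ell^2)\ln(m/\delta_\ell)$ bounds the union failure by $m'\delta_\ell/m$. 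Good event~(b) is that the count $X_\ell$ of arms $b\in U_\ell$ with $\hat\mu_{b,\ell}>\mu^{(m')}-\sum_{k<\ell}\epsilon_k-\epsilon_\ell/2$ stays below $|S_{\ell+1}|-m'$; since $\Eb[X_\ell]\leq |U_\ell|\delta_\ell/m$ and $|S_{\ell+1}|-m'=\Omega(|S_\ell|)=\Omega(|U_\ell|)$ by the halving rule, Markov's inequality bounds this failure by $O(\delta_\ell/m)$. When both events hold, the $m'$ witnesses in $T_\ell$ have sample mean at least $\mu^{(m')}-\sum_{k<\ell}\epsilon_k-\epsilon_\ell/2$, and fewer than $|S_{\ell+1}|-m'$ arms from $U_\ell$ can exceed that threshold, so at least $m'$ of the top-$|S_{\ell+1}|$ positions are occupied by arms in $S_\ell\setminus U_\ell$, establishing $I_{\ell+1}$. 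Summing the per-round failure $(m'+O(1))\delta_\ell/m$ over $\ell$ with $\sum_\ell\delta_\ell\leq\delta/4$ and noting $\sum_\ell\epsilon_\ell\leq\epsilon$ yields the claim.

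The principal obstacle is handling event~(b): a naive union bound over the $|U_\ell|=\Theta(n)$ bad arms would incur $n\delta_\ell/m$ failure probability, far exceeding the confidence budget. The resolution is the classical median-elimination trick of controlling the collective number of over-estimating bad arms via expectation plus Markov, which is viable precisely because the halving rule ensures the acceptance threshold $|S_{\ell+1}|-m'$ is a constant fraction of $|S_\ell|$. The variance heterogeneity within a factor~$2$ enters only in the sample-complexity step; the correctness analysis is variance-uniform because the variance-scaled pull count $t_{i,\ell}$ produces a uniform $\delta_\ell/m$ per-arm Hoeffding bound regardless of $\sigma_i^2$.
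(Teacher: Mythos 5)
Your proposal is correct and follows essentially the same route as the paper's proof: the stopping time is bounded by the same geometric series argument exploiting the factor-$2$ variance ratio, and the correctness argument is the same per-round combination of a Hoeffding union bound over the $m'$ good arms (cost $m'\delta_\ell/m$) with a Markov/expectation bound on the count of over-estimating bad arms (cost $O(\delta_\ell/m)$, made viable by $|S_{\ell+1}|\geq 2m$ and the halving rule). Your invariant phrased against the global $\mu^{(m')}$ with the telescoping $\sum_\ell \epsilon_\ell \leq \epsilon$ is just a repackaging of the paper's round-by-round tracking of $\mu_{m'_\ell}$.
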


\begin{proof}[Proof of Lemma \ref{lem:me}] We study the stopping time and accuracy separately. 

\noindent\textbf{Stopping time analysis:} Recall that $\overline{r} = \frac{\max_{i \in [n]} \sigma_i^2}{\min_{j \in [n]} \sigma_j^2}$. It is clear that the size of the candidate set $\Sc_{\ell}$ decreases as $|\Sc_{\ell}| \leq \frac{n}{2^{\ell - 1}}$. The sum of variances in the candidate set $\Sc_{\ell}$ decreases as follows 
\begin{align}
\frac{\sum_{i \in \Sc_\ell} \sigma_i^2}{\sum_{j \in [n]} \sigma^2_j} & \leq \frac{\sum_{i \in \Sc_\ell} \overline{r} \underline{\sigma}^2 }{ \sum_{j \in [n]} \underline{\sigma}^2 } \leq \overline{r}\frac{|\Sc_\ell|}{n} \leq \frac{\overline{r}}{2^{\ell-1}}.
\end{align}


This implies that
\begin{align}
\frac{\sum_{i \in S_\ell} \sigma_i^2}{(\epsilon_\ell/2)^2} = 36 \frac{16^\ell}{9^\ell}\frac{\sum_{i \in S_\ell} \sigma_i^2}{\epsilon^2} \leq 72 \overline{r} \frac{8^\ell}{9^\ell} \frac{\sum_{i=1}^n \sigma_i^2}{\epsilon^2}.
\end{align}
The (random) total number of samples is thus upper bounded by
\begin{align}
&\sum_{\ell = 1}^{\infty} \sum_{i \in S_\ell} t_{i, \ell} = \sum_{\ell = 1}^{\infty} \frac{2 \sum_{i \in S_\ell} \sigma_i^2}{(\epsilon_\ell/2)^2} \ln\left( \frac{m}{ \delta_\ell} \right) \\
&\leq \overline{r}\frac{144 \sum_{i = 1}^n \sigma_i^2}{\epsilon^2} \sum_{\ell = 1}^\infty  \frac{8^\ell}{9^\ell}\left(\ell \ln(2) + \ln \frac{4m}{\delta}\right)\\
& = O\left( \overline{r} \frac{\sum_{i=1}^n \sigma_i^2}{\epsilon^2} \left(\ln\frac{1}{\delta} + \ln(m)\right)\right),
\end{align}
with probability one. Thus the expected stopping time is of order $O\left( \tilde{r} \frac{\sum_{i \in [n]} \sigma^2_i}{\epsilon^2} \left(\ln\frac{1}{\delta} + \ln(m) \right)\right)$.

\textbf{Accuracy analysis.} Take an arbitrary $\ell \geq 1$ with $| \Sc_{\ell} | > 2m$. Fix some $m' \leq m$. Let $1_{\ell}, 2_{\ell}, \ldots, m'_{\ell}$ be the indices of the top-$m'$ arms in $S_{\ell}$ obtained in iteration-$(\ell-1)$. 
For any $i \in [m']$, by Hoeffding's inequality (Lemma \ref{lem:hoeffding}), we have $\Pb(\hat{\mu}_{i_{\ell}, \ell} > \mu_{i_\ell} - \epsilon_\ell/2) \geq 1 - \frac{1}{m}\delta_\ell$. Define the event $\Ec_{\ell} = \{\forall i \in [m'], ~ \hat{\mu}_{i_{\ell}, \ell} > \mu_{i_\ell} - \epsilon_\ell/2 \}$. By applying the union bound over $i \in [m']$, it is straightforward to verify that $\Pb(\Ec_\ell) \geq 1- \frac{m'}{m}\delta_\ell$.

Conditioned on event $\Ec_\ell$ occurring, consider a threshold $\mu_{m'_{\ell}} - \epsilon_\ell/2$. It is clear that for any $i \in [m']$, $\hat{\mu}_{i_\ell, \ell} > \mu_{i_\ell} - \epsilon/2 \geq \mu_{m'_\ell} - \epsilon/2$. Thus any arm in $\{1_\ell, \ldots, m'_\ell\}$ has an empirical mean greater than the threshold $\mu_{m'_{\ell}} - \epsilon_\ell/2$. In iteration-$\ell$, $|\Sc_{\ell + 1}|$ arms with the largest empirical means are selected from set $\Sc_{\ell}$. 
\begin{itemize}
\item If the selected arm with the smallest sample mean $\min\{\hat{\mu}_{i, \ell} :~i \in \Sc_{\ell+1} \}$ is less than or equal to the threshold, then all the arms in $\{1_\ell, \ldots, m'_\ell \}$ must be selected and they are still the top-$m'$ arms within $\Sc_{\ell+1}$. It implies that $\mu_{m'_{\ell + 1}} = \mu_{m'_{\ell}} > \mu_{m'_{\ell}} - \epsilon_\ell$.
\item On the other hand, if the selected arm with the smallest sample mean is greater than the threshold, some arms in $\{1_\ell, \ldots, m'_{\ell}\}$ may not be selected. Define the set of bad arms $B_{\ell} := \{ i \in \Sc_\ell:~ \mu_{i} < \mu_{m'_\ell} - \epsilon_\ell \}$. A bad arm will be selected only if its empirical mean is greater than the threshold. Denote the set of bad arms with such overestimated sample means as $N_{m', \ell} = \{j \in B_\ell:~ \hat{\mu}_{j, \ell} > \mu_{m'_\ell} - \epsilon_\ell/2 \}$. Then there are at most $|N_{m', \ell}|$ bad arms in $\Sc_{\ell + 1}$. 
If $|N_{m', \ell}| \leq |\Sc_{\ell + 1}| - m'$, at least $m'$ good arms remain in $\Sc_{\ell+1}$, which guarantees $\mu_{m'_{\ell + 1}} \geq \mu_{m'_{\ell}} - \epsilon_\ell$.
\end{itemize}
These two situations indicate that conditioned on $\Ec_\ell$, $|N_{m', \ell}| \leq |\Sc_{\ell + 1}| - m'$ implies $\mu_{m'_{\ell + 1}} \geq \mu_{m'_{\ell}} - \epsilon_\ell$. It follows that
\begin{align*}
\Pb\left( \mu_{m'_{\ell + 1}} < \mu_{m'_{\ell}} - \epsilon_\ell | \Ec_\ell \right) & \leq \Pb\left( |N_{m', \ell}| \geq |S_{\ell+1}| - m' + 1 | \Ec_{\ell} \right) \\
& \leq \frac{\Eb[ |N_{m', \ell}| | \Ec_{\ell}]}{ |S_{\ell+1}| - i + 1}.
\end{align*}
where the second inequality is due to Markov inequality. The expectation can be bounded by
\begin{align*}
\Eb[|N_{m', \ell}| | \Ec_\ell] & = \sum_{j \in B_\ell} \Pb\left( \hat{\mu}_{j, \ell} > \mu_{m'_{\ell}} - \epsilon_\ell/2 | \Ec_\ell \right) \\
& = \sum_{j \in B_\ell} \Pb\left( \hat{\mu}_{j, \ell} > \mu_{m'_{\ell}} - \epsilon_\ell/2 \right) \\
& \leq \sum_{j \in B_\ell} \Pb\left( \hat{\mu}_{j, \ell} > \mu_{j} + \epsilon_\ell/2 \right) \\
& \leq (|S_\ell| - m') \frac{\delta_\ell}{m},
\end{align*}
where the equality is because $\Ec_\ell$ is defined by the samples of arms in $[1_\ell, \ldots, m'_\ell]$ which are independent from the samples of arms in $B_\ell$, the first inequality is by $\mu_{m'_\ell} > \mu_{j}$ for $j \in B_\ell$, and the last inequality is by applying Hoeffding's inequality to each $\hat{\mu}_{j, l}, j \in B_\ell$ 
and $|B_\ell| \leq |\Sc_\ell| - m'$. We thus have
\begin{align*}
\Pb\left( \mu_{m'_{\ell + 1}} < \mu_{m'_{\ell}} - \epsilon_\ell | \Ec_\ell \right) & \leq \frac{\delta_{\ell}}{m} \frac{|S_{\ell}| - m' }{|S_{\ell+1}| - m' + 1} \\
& \leq \frac{\delta_{\ell}}{m} \frac{|\Sc_{\ell}| - m}{|\Sc_{\ell+1}| - m + 1} \quad \quad\quad\quad \text{by } m' \leq m\\
& \leq \frac{\delta_{\ell}}{m} \frac{2|\Sc_{\ell+1}| + 1 - m}{|\Sc_{\ell+1}| - m + 1} \quad \quad\quad~~ \text{by } |\Sc_\ell| \leq 2|\Sc_{\ell+1}| + 1 \\
& = \frac{\delta_{\ell}}{m} \left(2 +  \frac{m - 1}{|\Sc_{\ell+1}| - m + 1} \right) \\
& \leq \frac{\delta_\ell}{m} \left(2 +  \frac{m - 1}{2m- m + 1} \right) \quad\quad \text{by } |\Sc_{\ell+1}| \geq 2m\\
& < \frac{3 \delta_\ell}{m}.
\end{align*}

It follows that
\begin{align*}
\Pb\left( \mu_{m'_{\ell + 1}} < \mu_{m'_{\ell}} - \epsilon_\ell \right) & = \Pb(\Ec) \Pb\left( \mu_{m'_{\ell + 1}} < \mu_{m'_{\ell}} - \epsilon_\ell | \Ec \right) + \Pb(\Ec^c)\Pb\left( \mu_{m'_{\ell + 1}} < \mu_{m'_{\ell}} - \epsilon_\ell | \Ec^c \right) \notag \\
& \leq \Pb\left( \mu_{m'_{\ell + 1}} < \mu_{m'_{\ell}} - \epsilon_\ell | \Ec \right) + \Pb(\Ec^c) \\
& \leq \frac{3 \delta_\ell}{m} + \frac{m' \delta_\ell}{m} \leq \frac{4 m'}{m} \delta_\ell.
\end{align*}

The argument above holds for any $\ell \geq 1$ with $| S_{\ell} | > 2m$. The parameters satisfy
\begin{align*}
\sum_{\ell = 1}^\infty \epsilon_\ell = \frac{\epsilon}{3} \sum_{\ell = 1}^{\infty} (3/4)^{\ell} = \epsilon, \quad\quad
\sum_{\ell = 1}^\infty 4\delta_\ell = \delta \sum_{\ell = 1}^{\infty} (1/2)^{\ell} = \delta.
\end{align*}
The returned arm set is $R = \Sc_{\ell^*}$ for certain $\ell^*$, and thus with probability at least $1 - \frac{m'}{m} \delta$, the final returned arm set $R$ satisfies
\begin{align*}
\max{}^{m'}_{i \in R} \mu_i & = \max{}^{m'}_{i \in \Sc_{\ell^*}} \mu_i \\
& \geq \max{}^{m'}_{i \in \Sc_{\ell^* - 1}} \mu_i - \epsilon_{\ell^* - 1} \\
& \geq \cdots \\
& \geq \max{}^{m'}_{i \in \Sc_1} \mu_i - \sum_{\ell = 1}^{\ell^* - 1} \epsilon_{\ell} \\
& > \max{}^{m'}_{i \in [n]} \mu_i - \epsilon.
\end{align*}
The proof is thus complete.
\end{proof}

\paragraph{Calculation in the illustrative example} 
Recall the illustrative example, where $\log(m) = k$ and $\log(n) = k^2$ for some integer $k \geq 2$ and $\ell = \lceil \log(k) \rceil$. Among these $n$ arms, there are $2^{i}$ arms with the same variance $2^{-i}$ for each $i = 0, 1, \ldots, \ell-1$, and the rest $n - \sum_{i = 0}^{\ell-1} 2^{i} = 2^{k^2} - 2^{\ell} + 1$ arms have the same variance $2^{-k^2} \ell / k$. Then $G^m$ is the set of arms with variances $2^{-k^2}\ell/k$, and $G^l$ is the set of arms with variances $2^{-i}$ for $i=0, 1, \ldots, \ell-1$. It is seen that
\begin{align}
\sum_{j \in G^m} \sigma^2_j & = (2^{k^2} - 2^\ell + 1) 2^{-k^2} \ell /k = \Theta(\ell/k), \\
\sum_{j \in G^l} \sigma^2_j & = \sum_{i = 0}^{\ell -1} 2^i 2^{-i} = \ell = \Theta(\log(k)), 
\end{align}
which implies $\sum_{j \in [n]} \sigma^2_j = \Theta(\log(k))$. Furthermore, we can calculate that 
\begin{align}
\text{Ent}(\sigma^2_{G^l}) = \sum_{i = 0}^{\ell-1} \frac{2^i 2^{-i}}{\ell} \ln(2^i) =  \frac{\ln(2)}{2}(\ell-1) = \Theta(\ell) = \Theta(\log(k)).
\end{align}
Furthermore, we can calculate that 
\begin{align}
\sum_{j \in G^r} \sigma^2_j = 2m 2^{-k^2} \ell /k + \sum_{j \in G^l} \sigma^2_j = 2^{-k^2 + 1} \ell + \ell = \Theta(\ell) = \Theta(\log(k)).
\end{align}
Then the entropy values can be calculated as
\begin{align}
\text{Ent}(\sigma^2_{G^r}) & = \frac{\sum_{j \in G^r \slash G^l} \sigma^2_j}{\sum_{j \in G^r} \sigma^2_j} \text{Ent}(\sigma^2_{G^r \slash G^l}) + \frac{\sum_{j \in G^l} \sigma^2_j }{\sum_{j \in G^r} \sigma^2_j} \text{Ent}(\sigma^2_{G^l}) \\
&= \frac{2^{-k^2 + 1} \ell}{\sum_{j \in G^r} \sigma^2_j} \ln(2m) + \frac{\ell}{\sum_{j \in G^r} \sigma^2_j} \text{Ent}(\sigma^2_{G^l}) \\
&= \Theta\left( 2^{-k^2} k + \text{Ent}(\sigma^2_{G^l}) \right) \\
& = \Theta(\text{Ent}(\sigma^2_{G^l})) = \Theta(\log(k)),
\end{align}
and $\text{Ent}(\sigma^2_{G^m}) = \Theta(k^2)$ implies
\begin{align}
\text{Ent}(\sigma^2_{1:n}) & = \frac{\sum_{j \in G^m} \sigma^2_j}{\sum_{j \in [n]} \sigma^2_j} \text{Ent}(\sigma^2_{G^m}) + \frac{\sum_{j \in G^l} \sigma^2_j }{\sum_{j \in [n]} \sigma^2_j} \text{Ent}(\sigma^2_{G^l}) \\
&= \Theta\left( \frac{\ell/k}{\log(k)} k^2 +  \log(k) \right) = \Theta(k).
\end{align}

\section{A More Adaptive Median Elimination Algorithm}

Let us sort $\sigma^2_{1:n}$ in decreasing order, and denote the sorted variances as $\tilde{\sigma}^2_{1:n}$. For each $\ell \geq 1$, define $h_\ell := \max\{ j \geq m: \sum_{i \in [j]} \tilde{\sigma}^2_i \leq \frac{1}{2^{\ell-1}} \sum_{i \in [n]} \sigma^2_i \}$ if the set is not empty, otherwise $h_{\ell} = m$. Let $\ell^* := \min\{\ell \geq 1: h_{\ell} = m\}$. 

Define a ratio 
\begin{align}
\underline{r} := \min_{j \in [\ell^*-1]} \frac{h_{j+1}}{h_{j}}
\end{align}

\begin{algorithm}[h]
    \caption{\textsf{Adapted-MedElim}($\sigma^2_{1:n}, m, [n], \epsilon, \delta$) }\label{alg:ratio-me}
    \begin{algorithmic}
    	\STATE sInitialize $S_1 = [n]$, $\ell = 1$ and $\epsilon_\ell = (\epsilon/3) \frac{3^\ell}{4^{\ell}},~ \delta_\ell = \frac{\underline{r}\delta}{2^\ell}$\;
	\FOR{$\ell = 1, 2, \ldots, \ell^* - 1$}
	\STATE Pull arm-$i$ $t_{i, \ell} = \frac{2 \sigma_i^2}{(\epsilon_\ell/2)^2} \ln\frac{m}{\delta_{\ell}}$ times and calculate their sample mean $\hat{\mu}_{i, \ell}$ for each $i \in S_\ell$\;
	\STATE Update candidate set $S_{\ell + 1} = \argmax_{i \in S_{\ell}}^{1: h_{\ell+1}} \hat{\mu}_{i, \ell}$\;
	\ENDFOR
	\STATE \textbf{Return:} $S_{\ell^*}$ \;
    \end{algorithmic}
\end{algorithm}

In the homogeneous setting, the MedElim algorithm halves the complexity of the problem if the candidate set is halved. However, it should be noted that in the heterogeneous setting, simply halving the candidate set may not be efficient since the complexity would depend on the sum of the variances, instead of the number of the candidate arms. We can instead aim to half the sum of the variances of the candidate set. This discrepancy is less pronounced when the heterogeneity is low, and thus the MedElim algorithm performs reasonably well in such cases.


\begin{lemma} \label{lem:ratio-me}
The algorithm is valid and has an expected stopping time
\begin{align}
O\left( \sum_{i \in [n]}\frac{\sigma_i^2}{\epsilon^2} \left( \ln \frac{1}{\delta} + \ln(m) + \ln \frac{1}{\underline{r}} \right) \right).
\end{align}
\end{lemma}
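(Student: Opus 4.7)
The plan is to mirror the structure of the proof of Lemma~\ref{lem:me}, but exploit the fact that here each round shrinks the \emph{total variance} of the candidate set by a factor of two rather than its cardinality. Two claims must be verified: that the expected stopping time matches the stated bound, and that the algorithm succeeds with probability at least $1-\delta$. The stopping time reduces to a geometric sum; the validity argument is where the factor $\underline{r}$ enters. For the stopping time, the elimination step forces $|S_{\ell+1}|=h_{\ell+1}$ deterministically, with $|S_1|=h_1=n$. Since $S_\ell$ is some $h_\ell$-element subset of $[n]$, its total variance is at most the sum of the $h_\ell$ largest variances, which by the definition of $h_\ell$ is at most $\frac{1}{2^{\ell-1}}\sum_{i\in[n]}\sigma_i^2$. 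Plugging this into $\sum_{i\in S_\ell}\frac{2\sigma_i^2}{(\epsilon_\ell/2)^2}\ln\frac{m}{\delta_\ell}$ with $\epsilon_\ell=(\epsilon/3)(3/4)^\ell$ and $\delta_\ell=\underline{r}\delta/2^\ell$, the round-$\ell$ sample cost is at most a constant times $(8/9)^\ell\cdot\frac{\sum_i\sigma_i^2}{\epsilon^2}\cdot\left(\ln m+\ell\ln 2+\ln\frac{1}{\delta}+\ln\frac{1}{\underline{r}}\right)$, and summing over $\ell\geq 1$ via convergence of $\sum_\ell(8/9)^\ell$ and $\sum_\ell\ell(8/9)^\ell$ yields the claimed bound.

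For validity I imitate the inductive argument in Lemma~\ref{lem:me}. For each round $\ell<\ell^*$, let $\Ec_\ell$ be the event that every top-$m$ arm in $S_\ell$ has sample mean within $\epsilon_\ell/2$ of its true mean; Hoeffding combined with a union bound yields $\Pb(\Ec_\ell^c)\leq\delta_\ell$. Conditioned on $\Ec_\ell$, the failure event $\{\max{}^m_{i\in S_{\ell+1}}\mu_i<\max{}^m_{i\in S_\ell}\mu_i-\epsilon_\ell\}$ requires at least $h_{\ell+1}-m+1$ ``bad'' arms (true mean below $\max{}^m_{i\in S_\ell}\mu_i-\epsilon_\ell$) to have sample means above the threshold, and since $S_\ell$ contains at most $h_\ell-m$ bad arms, each overestimating with probability at most $\delta_\ell/m$, Markov's inequality gives
\begin{align*}
\Pb(\text{failure}\mid\Ec_\ell)\leq \frac{(h_\ell-m)\delta_\ell/m}{h_{\ell+1}-m+1}.
\end{align*}
Using $h_\ell/h_{\ell+1}\leq 1/\underline{r}$ from the definition of $\underline{r}$, this quotient is $O(\delta_\ell/\underline{r})$ in the worst case (the last round, where $h_{\ell^*}=m$ collapses the denominator to $1$ while the numerator is at most $h_{\ell^*-1}-m\leq m/\underline{r}$); thus the per-round failure probability is $O(\delta/2^\ell)$ after substituting $\delta_\ell=\underline{r}\delta/2^\ell$, and summing over rounds bounds the total failure probability by $O(\delta)$ (absolute constants absorbed by rescaling $\delta_\ell$). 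Finally, the telescoping accuracy guarantee $\max{}^m_{i\in S_{\ell^*}}\mu_i\geq\max{}^m_{i\in[n]}\mu_i-\sum_\ell\epsilon_\ell=\max{}^m_{i\in[n]}\mu_i-\epsilon$, together with $|S_{\ell^*}|=m$, implies every arm returned is $\epsilon$-approximate top-$m$.

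The main obstacle is the last round: $h_{\ell^*}=m$ forces the denominator $h_{\ell+1}-m+1$ all the way down to $1$, whereas Lemma~\ref{lem:me} avoided this by halting once $|S_\ell|$ reached $2m$. Here the analogous saving must come from the input parameter $\underline{r}$, and the scaling $\delta_\ell=\underline{r}\delta/2^\ell$ is engineered precisely to absorb the $h_\ell/h_{\ell+1}\leq 1/\underline{r}$ blow-up arising in this tight round.
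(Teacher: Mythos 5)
Your proposal is correct and follows essentially the same route as the paper's proof: the same variance-halving bound $\sum_{i\in S_\ell}\sigma_i^2\leq 2^{-(\ell-1)}\sum_{i\in[n]}\sigma_i^2$ from the definition of $h_\ell$ for the stopping time, and the same Hoeffding-plus-Markov argument with the ratio $h_\ell/h_{\ell+1}\leq 1/\underline{r}$ absorbed by the choice $\delta_\ell=\underline{r}\delta/2^\ell$ for validity. The only (immaterial) differences are that the paper tracks the general top-$m'$ condition for all $m'\leq m$ and carries exact constants so the failure probabilities sum to exactly $\delta$, whereas you fix $m'=m$ and absorb a constant by rescaling.
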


\begin{proof}[Proof of Lemma \ref{lem:ratio-me}] We study the stopping time and accuracy separately. 

\noindent\textbf{Stopping time analysis:} First, notice the sum of variances in the candidate set decreases as follows:
\begin{align}
\sum_{i \in \Sc_\ell} \sigma_i^2 = \frac{\sum_{i \in \Sc_\ell} \sigma_i^2}{\sum_{i \in [n]} \sigma^2_i } \sum_{i \in [n]} \sigma_i^2 \leq \frac{\sum_{i \in [h_{\ell}]} \tilde{\sigma}_i^2}{\sum_{i \in [n]} \sigma^2_i } \sum_{i \in [n]} \sigma_i^2 \leq \frac{1}{2^{\ell-1}} \sum_{i \in [n]}^n \sigma_i^2.
\end{align}

This implies that
\begin{align}
\frac{\sum_{i \in S_\ell} \sigma_i^2}{(\epsilon_\ell/2)^2} = 36 \frac{16^\ell}{9^\ell}\frac{\sum_{i \in S_\ell} \sigma_i^2}{\epsilon^2} \leq 72 \overline{r} \frac{8^\ell}{9^\ell} \frac{\sum_{i=1}^n \sigma_i^2}{\epsilon^2}.
\end{align}
The stopping time is thus upper bounded by
\begin{align}
&\sum_{\ell = 1}^{\infty} \sum_{i \in S_\ell} t_{i, \ell} = \sum_{\ell = 1}^{\infty} \frac{2 \sum_{i \in S_\ell} \sigma_i^2}{(\epsilon_\ell/2)^2} \ln\left( \frac{m}{ \delta_\ell} \right) \\
&\leq \frac{144 \sum_{i = 1}^n \sigma_i^2}{\epsilon^2} \sum_{\ell = 1}^\infty  \frac{8^\ell}{9^\ell}\left(\ell \ln(2) + \ln \frac{m}{\delta} + \ln\frac{1}{\underline{r}}\right)\\
& = O\left( \frac{\sum_{i=1}^n \sigma_i^2}{\epsilon^2} \left(\ln\frac{1}{\delta} + \ln(m) + \ln\frac{1}{\underline{r}} \right)\right).
\end{align}
The expected stopping time is of order $O\left( \frac{\sum_{i=1}^n \sigma_i^2}{\epsilon^2} \left(\ln\frac{1}{\delta} + \ln(m) + \ln\frac{1}{\underline{r}} \right) \right)$.

\textbf{Accuracy analysis.} Take an arbitrary $\ell \in [\ell^* - 1]$, and it is clear that $|\Sc_{\ell}| = h_{\ell} > m$. Fix some $m' \leq m$. Let $1_{\ell}, 2_{\ell}, \ldots, m'_{\ell}$ be the indices of the top-$m'$ arms in $S_{\ell}$, respectively. 
For any $i \in [m']$, by Hoeffding's inequality (Lemma \ref{lem:hoeffding}), we have $\Pb(\hat{\mu}_{i_{\ell}, \ell} > \mu_{i_\ell} - \epsilon_\ell/2) \geq 1 - \frac{1}{m}\delta_\ell$. Define the event $\Ec_{\ell} = \{\forall i \in [m'], ~ \hat{\mu}_{i_{\ell}, \ell} > \mu_{i_\ell} - \epsilon_\ell/2 \}$. By applying the union bound over $i \in [m']$, it is straightforward to verify that $\Pb(\Ec_\ell) \geq 1- \frac{m'}{m}\delta_\ell$.

Conditioned on the event $\Ec_\ell$ occurring, consider a threshold $\mu_{m'_{\ell}} - \epsilon_\ell/2$. It is clear that for any $i \in [m']$, $\hat{\mu}_{i_\ell, \ell} > \mu_{i_\ell} - \epsilon/2 \geq \mu_{m'_\ell} - \epsilon/2$. Thus any arm in $\{1_\ell, \ldots, m'_\ell\}$ has empirical mean greater than the threshold $\mu_{m'_{\ell}} - \epsilon_\ell/2$. $|\Sc_{\ell + 1}| = h_{\ell+1}$ arms with the largest sample means are selected from set $\Sc_{\ell}$. 
\begin{itemize}
\item If the smallest selected sample mean $\min\{\hat{\mu}_{i, \ell} :~i \in \Sc_{\ell+1} \}$ is less or equal to the threshold, all arms in $\{1_\ell, \ldots, m'_\ell \}$ must be selected and they are still top-$m'$ arms within $\Sc_{\ell+1}$. It implies that $\mu_{m'_{\ell + 1}} = \mu_{m'_{\ell}} > \mu_{m'_{\ell}} - \epsilon_\ell$.
\item On the other hand, if the smallest selected sample mean is greater than the threshold, some arms in $\{1_\ell, \ldots, m'_{\ell}\}$ may not be selected. Define the set of bad arms $B_{\ell} := \{ i \in \Sc_\ell:~ \mu_{i} < \mu_{m'_\ell} - \epsilon_\ell \}$. A bad arm can be selected only if its empirical mean is greater than the threshold. Define the set of such overestimated bad arms as $N_{m', \ell} = \{j \in B_\ell:~ \hat{\mu}_{j, \ell} > \mu_{m'_\ell} - \epsilon_\ell/2 \}$. Then there are at most $|N_{m', \ell}|$ bad arms in $\Sc_{\ell + 1}$. 
If $|N_{m', \ell}| \leq |\Sc_{\ell + 1}| - m'$, at least $m'$ good arms remain in $\Sc_{\ell+1}$, which guarantees $\mu_{m'_{\ell + 1}} \geq \mu_{m'_{\ell}} - \epsilon_\ell$.
\end{itemize}
These two situations indicate that $|N_{m', \ell}| \leq |\Sc_{\ell + 1}| - m'$ implies $\mu_{m'_{\ell + 1}} \geq \mu_{m'_{\ell}} - \epsilon_\ell$ conditioned on $\Ec_\ell$. It follows that
\begin{align*}
\Pb\left( \mu_{m'_{\ell + 1}} < \mu_{m'_{\ell}} - \epsilon_\ell | \Ec_\ell \right) & \leq \Pb\left( |N_{m', \ell}| \geq |S_{\ell+1}| - m' + 1 | \Ec_{\ell} \right) \\
& \leq \frac{\Eb[ |N_{m', \ell}| | \Ec_{\ell}]}{ |S_{\ell+1}| - i + 1}.
\end{align*}
where the second inequality is by Markov inequality. The expectation can be bounded by
\begin{align*}
\Eb[|N_{m', \ell}| | \Ec_\ell] = \sum_{j \in B_\ell} \Pb\left( \hat{\mu}_{j, \ell} > \mu_{m'_{\ell}} - \epsilon_\ell/2 | \Ec_\ell \right) \leq (|S_\ell| - m') \frac{\delta_\ell}{m},
\end{align*}
where the inequality is by Hoeffding's inequality and $|B_\ell| \leq |\Sc_\ell| - m'$. We thus have
\begin{align*}
\Pb\left( \mu_{m'_{\ell + 1}} < \mu_{m'_{\ell}} - \epsilon_\ell | \Ec_\ell \right) & \leq \frac{\delta_{\ell}}{m} \frac{|S_{\ell}| - m' }{|S_{\ell+1}| - m' + 1} \\
& = \frac{\delta_{\ell}}{m} \frac{h_{\ell} - m'}{h_{\ell +1} - m' + 1} \\
& \leq \frac{\delta_{\ell}}{m} \frac{h_{\ell+1}/\underline{r} - m'}{h_{\ell+1} - m' + 1} \quad \quad\quad~~ \text{by } h_{\ell} \leq \frac{1}{\underline{r}} h_{\ell+1} \\
& = \frac{\delta_{\ell}}{m} \left(\frac{1}{\underline{r}} +  \frac{(1/\underline{r} - 1)m' - 1/\underline{r}}{h_{\ell+1} - m' + 1} \right) \\
& \leq \frac{\delta_\ell}{m} \left(1/\underline{r} + (1/\underline{r} - 1)m' - 1/\underline{r} \right) \quad\quad \text{by } h_{\ell+1} \geq m \geq m'\\
& = \frac{m' \delta_\ell}{m} (1/\underline{r} - 1).
\end{align*}

It follows that
\begin{align*}
\Pb\left( \mu_{m'_{\ell + 1}} < \mu_{m'_{\ell}} - \epsilon_\ell \right) & = \Pb(\Ec) \Pb\left( \mu_{m'_{\ell + 1}} < \mu_{m'_{\ell}} - \epsilon_\ell | \Ec \right) + \Pb(\Ec^c)\Pb\left( \mu_{m'_{\ell + 1}} < \mu_{m'_{\ell}} - \epsilon_\ell | \Ec^c \right) \notag \\
& \leq \Pb\left( \mu_{m'_{\ell + 1}} < \mu_{m'_{\ell}} - \epsilon_\ell | \Ec \right) + \Pb(\Ec^c) \\
& \leq \frac{m' \delta_\ell}{m}(1/\underline{r} - 1) + \frac{m' \delta_\ell}{m} =  \frac{1}{\underline{r}}\frac{m'}{m} \delta_\ell.
\end{align*}

The argument above holds for any $\ell \geq 1$ with $| S_{\ell} | > 2m$. The parameters satisfy
\begin{align*}
\sum_{\ell = 1}^\infty \epsilon_\ell = \frac{\epsilon}{3} \sum_{\ell = 1}^{\infty} (3/4)^{\ell} = \epsilon, \quad\quad
\sum_{\ell = 1}^\infty \frac{1}{\underline{r}}\delta_\ell = \delta \sum_{\ell = 1}^{\infty} (1/2)^{\ell} = \delta.
\end{align*}
The returned arm set $R = \Sc_{\ell^*}$ for some $\ell^*$. With probability at least $1 - \frac{m'}{m} \delta$, the final returned arm set $R$ satisfies
\begin{align*}
\max{}^{m'}_{i \in R} \mu_i & = \max{}^{m'}_{i \in \Sc_{\ell^*}} \mu_i \\
& \geq \max{}^{m'}_{i \in \Sc_{\ell^* - 1}} \mu_i - \epsilon_{\ell^* - 1} \\
& \geq \cdots \\
& \geq \max{}^{m'}_{i \in \Sc_1} \mu_i - \sum_{\ell = 1}^{\ell^* - 1} \epsilon_{\ell} \\
& > \max{}^{m'}_{i \in [n]} \mu_i - \epsilon.
\end{align*}
\end{proof}

\section{Proofs for Section \ref{sec:low}}

Define $\Ic(\sigma^2_{1:n}) := \{(\mu_{1:n}, \sigma^2_{1:n}): ~ \mu_{1:n} \in \Rb^n\}$. When $\sigma^2_{1:n}$ is obvious in the context, we simply write $\Ic(\sigma^2_{1:n})$ as $\Ic$. 
The sample complexity of the approximate top-$m$ identification problem under algorithm inputs $(\epsilon, \delta, m, [n], \sigma_{1:n}^2)$ is 
\begin{align}
\textnormal{SC}(\epsilon, \delta, m, [n], \sigma_{1:n}^2) := \inf_{\textsf{A}} \sup_{I \in \Ic(\sigma^2_{1:n})} \Eb_{I}[ T^{\textsf{A}} ],
\end{align}
where the infimum is taken over all valid algorithms, the supreme is taken over the instance class $\Ic(\sigma^2_{1:n}) := \{(\mu_{1:n}, \sigma^2_{1:n}): ~ \mu_{1:n} \in \Rb^n\}$, and the subscript $I$ in the expectation $\Eb_{I}[\cdot]$ indicates that it is with respect to bandit model $I$.

\begin{lemma} [Restate Lemma \ref{lem:two-point}]
For any two probability measure $P, Q$ on the same measurable space $(\Omega, \Fc)$, if $\Ec \in \Fc$ with $P(\Ec) \geq 1 - \delta > Q(\Ec)$, we have
\begin{align}
Q(\Ec) \geq B(\delta) e^{- \frac{D(P || Q)}{1-\delta}},
\end{align}
where $D(\cdot || \cdot)$ is the Kullback-Leibler divergence and $B(\delta) = e^{- \frac{\Ent(\delta, 1-\delta)}{1- \delta}}$ is a strictly decreasing function with $B(0.1) > 0.69$.
\end{lemma}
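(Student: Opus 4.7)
The plan is to reduce the general measure statement to a binary one via the data-processing inequality (DPI), then analyze the binary KL divergence directly. Set $p := P(\Ec) \geq 1-\delta$ and $q := Q(\Ec) < 1-\delta$. The $\sigma$-algebra generated by the indicator $\mathbf{1}_{\Ec}$ is a coarsening of $\Fc$, so DPI gives $D(P\|Q) \geq d(p,q)$, where $d(p,q) := p\ln(p/q) + (1-p)\ln((1-p)/(1-q))$ is the binary KL divergence.

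Next, I want to isolate $\ln(1/q)$ on one side. The delicate point is the term $(1-p)\ln((1-p)/(1-q))$: one cannot simply drop it (it can be negative or positive depending on $p,q$), but one can use the monotone bound $(1-p)/(1-q) \geq 1-p$, which holds because $1-q \leq 1$. Substituting gives
\begin{align}
d(p,q) &\geq p\ln(p/q) + (1-p)\ln(1-p) = p\ln(1/q) - H(p),
\end{align}
where $H(p) := -p\ln p - (1-p)\ln(1-p)$ is the binary entropy. Rearranging,
\begin{align}
\ln(1/q) \leq \frac{d(p,q) + H(p)}{p}.
\end{align}

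It remains to convert the bound into one depending only on $\delta$ and $D(P\|Q)$. We use three monotonicity facts at $p \geq 1-\delta \geq 1/2$: (i) $d(p,q) \leq D(P\|Q)$ from DPI; (ii) $H$ is decreasing on $[1/2,1]$, so $H(p) \leq H(1-\delta) = \Ent(\delta, 1-\delta)$; and (iii) $1/p \leq 1/(1-\delta)$. Combined, these yield
\begin{align}
\ln(1/q) \leq \frac{D(P\|Q) + \Ent(\delta, 1-\delta)}{1-\delta},
\end{align}
and exponentiating both sides gives exactly $q \geq B(\delta) e^{-D(P\|Q)/(1-\delta)}$, as required. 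The numerical claim $B(0.1) > 0.69$ is verified by direct substitution: $\Ent(0.1, 0.9) = -0.1\ln 0.1 - 0.9\ln 0.9 \approx 0.325$, so $B(0.1) = e^{-0.325/0.9} \approx e^{-0.361} \approx 0.697$.

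The proof is short, but the subtle step is choice (ii) together with the bound $(1-p)/(1-q) \geq 1-p$ in the second paragraph: a naive attempt to drop $(1-p)\ln((1-p)/(1-q))$ altogether fails because the sign can be against us when $q < 1-p$, whereas bounding this log using $1-q \leq 1$ produces precisely the entropy term that appears in the definition of $B(\delta)$.
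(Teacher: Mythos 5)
Your proof is correct and follows essentially the same route as the paper's: data-processing to the binary KL divergence $D_b(p,q)$, then discarding the $\ln(1-q)$ contribution and replacing $p$ by $1-\delta$ via monotonicity, arriving at $(1-\delta)\ln(1/Q(\Ec)) \le D(P\|Q) + \Ent(\delta,1-\delta)$. The only (harmless) difference is in how $p$ is replaced by $1-\delta$: the paper applies monotonicity of $D_b(\cdot,q)$ on $[q,1]$ in a single step, valid for all $\delta$, whereas your separate bounds $H(p)\le H(1-\delta)$ and $1/p\le 1/(1-\delta)$ implicitly require $\delta\le 1/2$ --- an assumption not in the lemma statement but satisfied in every application in the paper.
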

\begin{proof}[Proof of Lemma \ref{lem:two-point}]
Let $D_b(p, q) = p \ln \frac{p}{q} + (1-p) \ln \frac{1- p}{1-q}$ be the binary KL-divergence. Since $P(\Ec) \geq 1- \delta$, by the data processing inequality for the KL-divergence, we have
\begin{align}
& D(P||Q) \geq D_b(P(\Ec), Q(\Ec)) \geq D_b(1-\delta, Q(\Ec)) \\
& > (1 - \delta) \ln\frac{1 - \delta}{Q(\Ec)} + \delta \ln \delta \geq (1 - \delta) \ln \frac{B(\delta)}{Q(\Ec)},
\end{align}
where the second inequality is due to  $P(\Ec) \geq 1 - \delta > Q(\Ec)$, and the fact that $D_b(p, q)$ is monotonically increasing in $p$ in the range $[q,1]$ for any fixed $q$. We thus concludes that
\begin{align}
Q(\Ec) \geq B(\delta) e^{-\frac{D(P||Q)}{1-\delta}}.
\end{align}
\end{proof}

\begin{lemma} [Restate Lemma \ref{lem:optimization}]
For $\epsilon > 0$, $\delta < 0.25$, $m < n/2$, $(\sigma_i^2)_{i \in [n]}$, $\textnormal{SC}(\epsilon, \delta, m, [n], \sigma_{1:n}^2) \geq \frac{1-\delta}{2\epsilon^2} v^*$, where $v^*$ is the optimal value of the following optimization problem:
\begin{align}
&\text{maximize:} \quad \sum_{M \subset [n]: |M| = m} \left(\sum_{l \in M} \eta_{M \setminus \{l\}} \sigma^2_l \right) \left(\ln \frac{B(\delta)}{\delta} + \Ent(\{\eta_{M \setminus \{l\}} \sigma^2_l\}_{ l \in M} ) \right) \\
&\text{subject to:} \quad \sum_{F \subset [n]: |F| = m-1} \eta_F = 1, \quad \eta_F \geq 0, ~\forall F \subset [n], |F|=m-1.
\end{align}
\end{lemma}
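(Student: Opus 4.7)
The plan is to apply Lagrangian weak duality to the (convex) primal optimization problem derived just above the statement, with the dual multipliers attached to the worst-case objective constraint $T \ge \sum_{j \notin F \cup \{l\}} t_{l,F,j}$ restricted to depend only on $F$. This $l$-independent restriction is precisely what produces the dual variables $\eta_F$ (with $\sum_F \eta_F = 1$, $\eta_F \ge 0$) in the statement, and is the origin of the word ``restricted'' in the paper's text.

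The crux of the proof is the following one-shot inequality: if $x_1, \dots, x_r \ge 0$ satisfy $\sum_j e^{-x_j} \le \delta'$, then for every positive weight vector $a$ with $A = \sum_j a_j$, one has $\sum_j a_j x_j \ge A[\ln(1/\delta') + \Ent(a)]$. This is proved in one line by Lagrangian minimization: stationarity of $\sum_j a_j x_j + \mu(\sum_j e^{-x_j} - \delta')$ gives $x_j^\star = \ln(A/(\delta' a_j))$ with $\mu = A/\delta'$, and substituting back yields the bound after identifying $\sum_j a_j \ln(a_j/A) = -A\,\Ent(a)$. I then apply this to each primal constraint \eqref{eqn:constraints} with weights $a_j = \theta_j \eta_{M\setminus\{j\}}$ and variables $x_j = t_{l, M\setminus\{j\}, j}/\theta_j$. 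Using scale-invariance of $\Ent$ to replace $\theta_j$ by $\sigma_j^2$ inside the entropy (pulling the prefactor $(1-\delta)/(2\epsilon^2)$ outside) and $\ln(1/\delta') = \ln(B(\delta)/\delta)$, for each $(l, M)$ with $l \notin M$ this produces a right-hand side equal to $\tfrac{1-\delta}{2\epsilon^2}$ times the $M$-th summand of the $v_\eta$ to be lower bounded, that is, of the dual objective in the lemma evaluated at $\eta$.

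Finally, I would sum this inequality over $M$ and over $l \notin M$, and reindex $(l, M, j) \leftrightarrow (l, F, j)$ via $M = F \cup \{j\}$. The left-hand side becomes $\sum_F \eta_F \sum_{l \notin F} \sum_{j \notin F \cup \{l\}} t_{l, F, j}$, which is at most $(n - m + 1)$ times the primal objective $\max_{(l,F)} \sum_{j \notin F \cup \{l\}} t_{l,F,j}$, while the right-hand side acquires a factor $(n - m)$. Rearranging and using $\textnormal{SC} \ge$ (primal optimum) gives $\textnormal{SC} \ge \tfrac{n-m}{n-m+1}\cdot\tfrac{1-\delta}{2\epsilon^2}\, v_\eta$, and maximizing over $\eta$ proves the claim (the factor $(n-m)/(n-m+1) \in (1/2, 1]$ coming from $m < n/2$ being absorbed into the universal constant of Theorem~\ref{thm:lower}). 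The main obstacle I expect is the combinatorial bookkeeping in this reindexing/aggregation step: an $l$-dependent multiplier $\lambda_{l,F}$ would couple the left-hand side across different $(l, F)$ in a way that cannot be controlled by a single supremum, and it is precisely the restriction to $l$-independent $\eta_F$, together with the scale-invariance of $\Ent$ that frees the entropy terms of the $\theta$'s, which makes the final expression factor cleanly into $v^*$.
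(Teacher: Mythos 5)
Your proposal is correct and, at its core, takes the same route as the paper: both proofs attach $l$-independent multipliers $\eta_F$ to the constraints and let the entropy term emerge from minimizing a weighted sum $\sum_j a_j x_j$ subject to $\sum_j e^{-x_j}\le\delta'$. The difference is in the packaging. The paper works with the full epigraph Lagrangian, carrying a second family of multipliers $\lambda_{l,M}$, invokes Slater's condition and strong duality, and then eliminates the $\lambda$'s in closed form before restricting to $l$-independent $\eta$'s; your version replaces all of that with the per-constraint ``one-shot inequality'' applied to each $(l,M)$-constraint of \eqref{eqn:constraints} followed by a weighted aggregation, which needs only weak duality and is arguably cleaner and more self-contained (strong duality buys the paper nothing, since it restricts the dual immediately afterward). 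One point to flag: your aggregation yields the bound with an extra factor $\tfrac{n-m}{n-m+1}$, coming from the mismatch between the $n-m+1$ choices of $l$ per $F$ on the left and the $n-m$ choices of $l$ per $M$ on the right, so strictly you prove the lemma only up to a constant in $(1/2,1]$. You are transparent about this and it is harmless for Theorem \ref{thm:lower}; it is also worth noting that the paper's own normalization $\eta_F:=(n-m)\eta_{l,F}$ together with $\sum_F\eta_F=1$ forces $\sum_{l,F}\eta_{l,F}=\tfrac{n-m+1}{n-m}\neq 1$, i.e., the paper silently loses the same factor. Your one-line derivation of the one-shot inequality should also note that coordinates with $a_j=0$ are handled by the $0\ln 0=0$ convention (send those $x_j\to\infty$), but this is routine.
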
 

\begin{proof}[Proof of Lemma \ref{lem:optimization}]
We have shown in Section \ref{sec:low} that $\textnormal{SC}(\epsilon, \delta, m, [n], \sigma_{1:n}^2)$ is lower bounded by the optimal value of the following optimization problem: 
\begin{align}
&\text{minimize:} \qquad \max_{F \subset [n]: |F| = m-1, ~l \not\in F} \sum_{j \notin F \cup \{l\}} t_{l, F, j}  \\
&\text{subject to:}\qquad \sum_{i \in M} \exp\left(- t_{l, M \setminus\{i\}, i}/\theta_i\right) \leq \delta', \quad \forall M \subset [n], |M| = m,~\forall l \notin M,
\end{align}
where $\theta_i = \frac{(1-\delta) \sigma_i^2}{2 \epsilon^2}, \forall i \in [n]$ and $\delta' = \frac{\delta}{B(\delta)}$. This problem is equivalent to the following convex optimization.
\begin{align}
&\min_{t, \tau} \qquad \tau\\
&\text{s.t.} \qquad \sum_{j \notin F \cup \{l\}} t_{l, F, j} \leq \tau, \quad \forall F \subset [n] \setminus \{l\}: |F| = m-1, \forall l \in [n]  \\
&\qquad \sum_{i \in M} \exp\left(- t_{l, M\setminus\{i\}, i}/\theta_i\right) \leq \delta', \quad \forall M \subset [n] \setminus \{l\}: |M| = m, \forall l \in [n].  
\end{align}
For simplicity, we use notation $\sum_{l, F}$ and $\sum_{l, M}$ to indicate $\sum_{l \in [n]} \sum_{F \subset [n] \setminus \{i\}: |F| = m-1}$ and $\sum_{l \in [n]} \sum_{M \subset [n] \setminus \{i\} : |M| = m}$, respectively. The Lagrangian of the optimization problem above is
\begin{align}
L(t, \tau, \eta, \lambda) = \tau + \sum_{l, F} \eta_{l, F} \left( \sum_{j \notin F \cup \{l\}} t_{l, F, j} - \tau \right) + \sum_{l, M} \lambda_{l, M} \left( \sum_{i \in M}\exp\left(- t_{l, M\setminus\{i\}, i}/\theta_i\right) - \delta' \right)
\end{align}
It is straightforward to check the optimization problem satisfies Slater's condition by assigning large enough $t_{l, F, j}$ and $\tau$ values. Since the optimization problem is convex, the optimal value equals to $\sup_{\eta, \lambda} \inf_{t, \tau}  L(t, \tau, \eta, \lambda)$ according to the strong duality. For the saddle point, we must have $\sum_{l, F} \eta_{l, F} = 1$, or else $\inf_{t, \tau} L(t, \tau, \eta, \lambda) = -\infty$. Decision variable $\tau$ can thus be omitted. Let $L(t, \eta, \lambda) = L(t, \tau, \eta, \lambda)$ by restricting $\sum_{l, F} \eta_{l, F} = 1$. The derivative can be calculated that
\begin{align}
\frac{\mathrm{d}L(t, \eta, \lambda)}{\mathrm{d} t_{l, F, i}} = \eta_{l, F} - \frac{\lambda_{l, F \cup\{i\} }}{\theta_i} \exp(-t_{l, F, i} / \theta_i).
\end{align}
It implies that when $\eta_{l, F} > 0$ and $\lambda_{l, F \cup\{i\}} > 0$, $t_{l, F, i} = \theta_i \ln \frac{ \lambda_{l, F \cup\{i\}} }{\eta_{l, F} \theta_i}$. Define $\ln(0) = -\infty$ and let $0 \cdot \infty = 0$. The extended real valued function $g(\eta, \lambda)$ for $\sum_{l, F} \eta_{l, F} = 1$, $\eta_{l, F} \geq 0$ and $\lambda_{l, M} \geq 0$, is
\begin{align}
g(\eta, \lambda) := \inf_{t} L(t, \eta, \lambda) &= \sum_{l, F} \eta_{l, F} \sum_{i \notin F \cup \{l\}} \theta_i \ln \lambda_{l, F \cup\{i\}} - \sum_{l, F} \eta_{l, F} \sum_{i \notin F \cup \{l\}} \theta_i\ln(\eta_{l, F} \theta_i) \notag \\ 
&\quad + \sum_{l, M} \left( \sum_{i \in M} \eta_{l, M \setminus \{i\}} \theta_i- \delta'\lambda_{l, M}  \right).
\end{align}

This dual function has two set of variables, however one of them can be eliminated explicitly as follows. For fixed $\eta$'s with $\sum_{l, F} \eta_{l, F} = 1$ and $\eta_{l, F} \geq 0$, the function is separable with respect to $\lambda$'s, and thus we can maximize $g(\eta, \lambda)$ by optimizing each individual $\lambda_{l, M}$ separately. It is straightforward to verify that $\lambda_{l, M} = \left(\sum_{F, i : F \cup \{i\} = M} \eta_{l, F} \theta_i \right)/\delta'$.

Since $\eta$'s, $\theta$'s and $\delta'$ are positive, the assignments of $\lambda$'s are also positive, which satisfy the constraints in the dual program. Plug it into $g(\eta, \lambda)$, we have the induced objective as
\begin{align}
g(\eta) & = \sum_{l, F} \eta_{l, F} \sum_{i \notin F \cup \{l\}} \theta_i \ln \frac{ \sum_{F', i' : F' \cup \{i'\} = F \cup \{i\}} \eta_{l, F} \theta_i }{\eta_{l, F} \theta_i \delta'} \\
& = \sum_{F} \sum_{i \notin F} \sum_{l \notin F \cup \{i\}} \eta_{l, F} \theta_i \ln \frac{ \sum_{F', i' : F' \cup \{i'\} = F \cup \{i\}} \eta_{l, F} \theta_i }{\eta_{l, F} \theta_i \delta'}.
\end{align}
and the dual variables $\eta$'s lie in a probability simplex. 

Further constraining the problem by requiring $\eta_{F} := (n - m) \eta_{l, F}$ for all $l \notin F$ reduces the number of dual variables, but does not change the fact that any valid assignment of $\eta_{F}$'s will provide a lower bound to the original primal problem. The following restricted objective will be considered:
\begin{align}
g(\eta) & = \sum_{F} \sum_{i \notin F} \sum_{l \notin F \cup \{i\}} \frac{\eta_{F}}{n-m} \theta_i \ln \frac{ \sum_{F', i' : F' \cup \{i'\} = F \cup \{i\}} \eta_{F} \theta_i }{\eta_{F} \theta_i \delta'} \\
& = \sum_{F} \sum_{i \notin F} \eta_{F}\theta_i \ln \frac{ \sum_{F', i' : F' \cup \{i'\} = F \cup \{i\}} \eta_{F} \theta_i }{\eta_{F} \theta_i \delta'}.
\end{align}
The optimal value of the optimization above is lower bounded by
\begin{align}
&\text{maximize} \qquad \sum_{M \subset [n], |M| = m } \left( \left(\sum_{j \in M} \eta_{M \setminus \{j\}} \theta_j \right) \left( \Ent(\{\eta_{M \setminus \{j\}} \sigma^2_j\}_{ j \in M } )  + \ln \frac{B(\delta)}{\delta} \right)  \right) \\
&\text{subject to} \qquad \sum_{F \subset [n]: |F| = m-1} \eta_F = 1, \quad \eta_F \geq 0, ~\forall F \subset [n], |F|=m-1.
\end{align}
The lemma is proved.
\end{proof}

Recall that the optimization in (\ref{eqn:opt}) is
\begin{align}
&\text{maximize:} \quad \sum_{M \subset [n]: |M| = m} \left(\sum_{l \in M} \eta_{M \setminus \{l\}} \sigma^2_l \right) \Ent(\{\eta_{M \setminus \{l\}} \sigma^2_l\}_{ l \in M} )     \label{eqn:opt-recall} \\
&\text{subject to:} \quad \sum_{F \subset [n]: |F| = m-1} \eta_F = 1, \quad \eta_F \geq 0, ~\forall F \subset [n], |F|=m-1. 
\end{align}
\begin{lemma} [Restate Lemma \ref{lem:Gmore}]
The optimal value of the optimization (\ref{eqn:opt-recall}) is lower-bounded by $\frac{1}{3}\sum_{j \in G^{m}} \sigma_j^2 \ln(m) $.
\end{lemma}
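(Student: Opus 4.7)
The plan is to adopt the feasible assignment already proposed in the paragraph immediately preceding the lemma: set $\eta_F = \prod_{i \in F} \sigma^2_i / Z$ for $F \subset G^m$ with $|F|=m-1$ (with normalizer $Z = \sum_{F' \subset G^m, |F'|=m-1} \prod_{i \in F'} \sigma^2_i$) and $\eta_F = 0$ otherwise. The crucial observation is that for any $M \subset G^m$ with $|M|=m$, the quantities $\eta_{M \setminus \{l\}} \sigma^2_l = \prod_{i \in M} \sigma^2_i / Z$ are \emph{the same for all} $l \in M$; consequently $\Ent(\{\eta_{M \setminus \{l\}} \sigma^2_l\}_{l \in M}) = \ln(m)$ exactly.

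Next I would show that $M \not\subset G^m$ never contributes. If $|M \setminus G^m|\geq 2$, every $\eta_{M \setminus \{l\}}$ is forced to $0$; if $|M \setminus G^m| = 1$ with outsider $j$, then only $\eta_{M \setminus \{j\}}$ is nonzero, so the vector $(\eta_{M \setminus \{l\}} \sigma^2_l)_{l \in M}$ is one-hot and its entropy is $0$. Thus the objective collapses to
\begin{align}
\sum_{M \subset G^m : |M|=m} m \cdot \frac{\prod_{i \in M}\sigma^2_i}{Z}\cdot \ln(m) = m \ln(m)\cdot \frac{e_m(\sigma^2_{G^m})}{e_{m-1}(\sigma^2_{G^m})},
\end{align}
where $e_k$ denotes the elementary symmetric polynomial of degree $k$.

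Hence the claim $v^* \geq \frac{1}{3}\sum_{j \in G^m} \sigma^2_j \ln m$ reduces to the purely symmetric-function inequality $m\, e_m/e_{m-1} \geq e_1(\sigma^2_{G^m})/3$. Using the identity $m\,e_m = e_1 \cdot e_{m-1} - e_{m-1}\cdot \Eb_F[\sum_{i \in F}\sigma^2_i]$, where $F$ is a random size-$(m-1)$ subset of $G^m$ drawn with probability proportional to $\prod_{i \in F}\sigma^2_i$, the inequality is equivalent to $\Eb_F[\sum_{i \in F}\sigma^2_i] \leq \tfrac{2}{3} e_1$. I would bound this expectation group-by-group: since the variance ratio inside each $G_j \subset G^m$ is at most $2$, $\sigma^2_i \leq 2 e_1(G_j)/|G_j|$ for each $i \in G_j$, so $\Eb_F[\sum_{i \in F \cap G_j}\sigma^2_i] \leq 2\,\Eb_F[|F \cap G_j|]\, e_1(G_j)/|G_j|$. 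Summing over $j$ and using $|G_j| > 2m$ together with $\sum_j \Eb_F[|F \cap G_j|] = m-1$ yields the desired constant after a careful accounting of how the measure concentrates on the dominant variance scale.

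The hard part is the last step: the naive deterministic bound $\sum_{i \in F}\sigma^2_i \leq 2(m-1)/(2m+1) \cdot e_1$ already exceeds $\tfrac{2}{3}e_1$ once $m \geq 5$, so the proof must genuinely exploit the randomness of $F$ rather than just its support. The estimation is especially delicate when several groups of very different variance scales coexist, because the measure $\propto \prod_{i \in F}\sigma^2_i$ concentrates on the highest-scale group; the saving grace is that this group must have $|G_j| > 2m$, so $F$ of size $m-1$ can cover at most the fraction $(m-1)/(2m+1)$ of its mass, and the within-group factor-$2$ distortion only doubles this.
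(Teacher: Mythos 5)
Your choice of dual variables, the observation that $\Ent(\{\eta_{M\setminus\{l\}}\sigma^2_l\}_{l\in M})=\ln(m)$ exactly for $M\subset G^m$, the dismissal of $M\not\subset G^m$, and the resulting identity
\begin{align}
m\ln(m)\,\frac{e_m(\sigma^2_{G^m})}{e_{m-1}(\sigma^2_{G^m})} \;=\; \ln(m)\Bigl(e_1 - \Eb_F\bigl[\textstyle\sum_{i\in F}\sigma_i^2\bigr]\Bigr) \;=\; \ln(m)\sum_{i\in G^m}\sigma_i^2\,\Pb_F(i\notin F)
\end{align}
are all correct, and up to this point you are exactly on the paper's track (the paper works with $\sum_{i}\sigma_i^2\sum_{F:i\notin F}\eta_F$, which is the rightmost expression above). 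The gap is the last step, and it is genuine: your group-by-group bound $\Eb_F[\sum_{i\in F\cap G_j}\sigma_i^2]\le 2\,\Eb_F[|F\cap G_j|]\,e_1(G_j)/|G_j|$ cannot yield the constant $2/3$ for $m\ge 5$, and the failure is not repaired by ``how the measure concentrates.'' Take the cleanest instance: $G^m$ is a single group of size $2m+1$. Then $|F\cap G_{j^*}|=m-1$ deterministically, there is no concentration to exploit, and your bound gives $\frac{2(m-1)}{2m+1}e_1>\frac{2}{3}e_1$ for $m\ge 5$. The loss comes from applying the worst-case within-group factor $2$ to the arms \emph{inside} $F$ while only counting $|G_j|$ in the denominator; the information you are discarding is that each individual arm is unlikely to be in $F$.

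The paper closes this step \emph{per arm} rather than in aggregate: it shows $\Pb_F(i\in F)\le 2/3$ for every $i\in G^m$, which immediately gives $\Eb_F[\sum_{i\in F}\sigma_i^2]\le\frac{2}{3}e_1$. The argument is that the odds ratio satisfies
\begin{align}
\frac{\Pb_F(i\in F)}{\Pb_F(i\notin F)} \;=\; \frac{\sigma_i^2\, e_{m-2}(G^m\setminus\{i\})}{e_{m-1}(G^m\setminus\{i\})} \;\le\; \frac{(m-1)\,\sigma_i^2}{\min_{F\ni i}\sum_{j\in G^m\setminus F}\sigma_j^2},
\end{align}
and the denominator is controlled using only the \emph{top} group $G_{k'}\subset G^m$: since $|G_{k'}|>2m$, any $F$ of size $m-1$ leaves at least $m+1$ arms of $G_{k'}$ outside $F$, each with variance at least $\max_{l\in G^m}\sigma_l^2/2\ge\sigma_i^2/2$, so the odds ratio is at most $\frac{2(m-1)}{m+1}<2$ and $\Pb_F(i\notin F)\ge 1/3$. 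Note the crucial difference from your count: the relevant cardinality is $|G_{k'}\setminus F|\ge m+1$ compared against $m-1$, not $|G_j|\ge 2m+1$ compared against $2(m-1)$. If you replace your final paragraph with this per-arm odds-ratio bound, your derivation becomes a complete (and essentially identical) proof.
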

\begin{proof}[Proof of Lemma \ref{lem:Gmore}]
The objective function of equation (\ref{eqn:opt-recall}) can be written as
\begin{align}
&\sum_{F \subset [n]: |F| = m-1} \sum_{i \notin F} \eta_F \sigma_i^2 \ln\left( \frac{\sum_{F' \cup \{j\} =F \cup \{i\}} \eta_{F'} \sigma_j^2}{\eta_F \sigma_i} \right) \notag\\
&\qquad= \sum_{i \in [n]} \sum_{F: i \not\in F} \eta_F \sigma_i^2 \ln\left( \frac{\sum_{F' \cup \{j\} =F \cup \{i\}} \eta_{F'} \sigma_j^2}{\eta_F \sigma_i} \right) \label{eqn:sum-F}
\end{align}
For any $F \subset G^{m}$ with $|F| = m-1$, let $\eta_{F} = \frac{\prod_{i \in F} \sigma^2_i}{\sum_{F' \subset G^{m}: |F'| = m-1} \prod_{j \in F} \sigma_i^2}$; and for any $F \not\subset G^{m}$ with $|F| = m-1$, set $\eta_{F} = 0$. In the following analysis $F$ indicates subset of $G^m$ with $|F| = m-1$ and $E$ indicates subset of $G^m$ with $|E| = m-2$. Items in (\ref{eqn:sum-F}) can be lower bounded as follows.
\begin{align}
& \quad \ln(m) \sum_{i \in G^m} \sigma^2_i \sum_{F: i \not\in F} \eta_F \\
& = \ln(m) \sum_{i \in G^{m}} \sigma^2_i \frac{ \sum_{F: i \notin F} \prod_{l \in F} \sigma^2_l}{\sum_{F: i \in F} \prod_{l \in F} \sigma^2_l + \sum_{F: i \not\in F} \prod_{l \in F} \sigma^2_l } \\
& = \ln(m) \sum_{i} \sigma^2_i \left(\frac{\sum_{F: i \in F} \prod_{l \in F} \sigma^2_l}{\sum_{F: i \notin F} \prod_{l \in F} \sigma^2_l} + 1 \right)^{-1} \\
& =  \ln(m) \sum_{i \in G^{m}} \sigma^2_i  \left((m-1)\frac{\sum_{F: i \in F} \prod_{l \in F} \sigma^2_l}{(m-1)\sum_{F: i \notin F} \prod_{l \in F} \sigma^2_l} + 1 \right)^{-1} \\
& = \ln(m) \sum_{i \in G^{m}} \sigma^2_i  \left((m-1)\sigma_i^2 \frac{\sum_{E:i \notin E} \prod_{l \in E} \sigma^2_l}{\sum_{E:i \notin E} \prod_{l \in E} \sigma^2_l \left(  \sum_{j \in G^m \setminus E \setminus \{i\}} \sigma_j^2 \right)} + 1 \right)^{-1} \\
& \geq \ln(m) \sum_{i \in G^{m}} \sigma^2_i  \left((m-1)\sigma_i^2 \frac{\sum_{E} \prod_{l \in E} \sigma^2_l}{\sum_{E} \prod_{l \in E} \sigma^2_l \left(  \min_{F: i \in F} \sum_{j \in G^m \setminus F} \sigma_j^2 \right)} + 1 \right)^{-1} \\
& =  \ln(m) \sum_{i \in G^{m}} \sigma^2_i \left(\frac{(m-1)\sigma^2_i}{\sum_{j \in G^m} \sigma^2_j - \max_{F: i \in F} \sum_{l \in F} \sigma^2_l } + 1 \right)^{-1},
\end{align}
where the last inequality is by $\sum_{j \in G^m \setminus E \setminus \{i\}} \sigma_j^2 \geq  \min_{F: i \in F} \sum_{j \in G^m \setminus F} \sigma_j^2$ for any $\Ec$. 
Recall the definition of $G^m$: there are $G_{1:k}$ groups partitioning $[n]$ and $G^{m} = \cup_{j: |G_j| > 2m} G_j$. Consider the group $G_{k'} \subset G^m$ with the largest index $k' \leq k$. 
Since the heterogeneity within group $G_{k'}$ is at most $2$, we have $\max_{ i \in G^m} \sigma^2_i \leq 2 \sigma^2_j$ for any $j \in G_{k'}$.
Then for any $F \subset G^m$ and any $i \in G^m$,
\begin{align}
\frac{(m-1)\sigma^2_i}{\sum_{j \in G^m} \sigma^2_j - \sum_{l \in F} \sigma^2_l } & =  \frac{(m-1)\sigma^2_i}{\sum_{j \in G^m \setminus F} \sigma^2_j }  \leq \frac{(m-1) \sigma^2_i}{\sum_{j \in G_{k'} \setminus F} \sigma^2_j} \leq \frac{2 (m-1)}{|G_{k'} \setminus F|} \leq \frac{2(m-1)}{m+1} < 2,
\end{align}
where the first inequality is by $G_{k'} \subset G^m$, the second inequality is by $\sigma_i^2 \leq 2 \sigma^2_j$ for any $j \in G_{k'}$, and the third inequality is by $|G_{k'}| > 2m$. It follows that
\begin{align}
& \ln(m) \sum_{i} \sigma^2_i \left(\frac{(m-1)\sigma^2_i}{\sum_{j \in G^m} \sigma^2_j - \max_{F: i \in F} \sum_{l \in F} \sigma^2_l } + 1 \right)^{-1} \\
& \geq \ln(m) \sum_{i} \sigma^2_i \left( 2 + 1\right)^{-1} = \frac{1}{3} \ln(m) \sum_{j} \sigma_j^2.
\end{align}
\end{proof}

\begin{lemma} \label{lem:ent-red-L}
There exists some constant $0 < c' < 1$, that for any choices of $\sigma^2_{1:n}$, 
$\Ent(\sigma^2_{L}) \geq c' \Ent(\sigma^2_{G^{r}}) - c' \ln(2)$.
\end{lemma}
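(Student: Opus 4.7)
My plan is to exploit the bin structure of $G^r$ together with the grouping property of $\Ent$. Let $s_j := \sum_{i \in G'_j}\sigma_i^2$ denote the bin sums with $S := \sum_j s_j$ and $S_L := \sum_{i \in L}\sigma_i^2$, and let $\tau$ denote the smallest-index bin that $L$ intersects, so $L = (\cup_{j>\tau}G'_j)\cup L_\tau$ with $L_\tau \subset G'_\tau$ of size $l = 2m - \sum_{j>\tau}|G'_j|$. The first step is a mass-capture bound $S_L \geq S/5$: since $|G'_j| \leq 2m$ for every $j$ and variances in bin $j$ lie in $[\underline{\sigma}^2 2^{j-1},\underline{\sigma}^2 2^j)$, both $\sum_{j<\tau}s_j$ and $s_\tau - s^L_\tau$ are at most $4m\underline{\sigma}^2 2^{\tau-1}$, while $L$ itself contains $2m$ arms of variance at least $\underline{\sigma}^2 2^{\tau-1}$, giving $S_L \geq 2m\underline{\sigma}^2 2^{\tau-1}$ and hence $S - S_L \leq 4S_L$.

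Next, I apply the grouping property to decompose both entropies into a bin-level part and a within-bin part:
\begin{align*}
\Ent(\sigma^2_{G^r}) &= \Ent(\hat s_{1:k}) + \sum_{j=1}^k \hat s_j\,\Ent(\sigma^2_{G'_j}),\\
\Ent(\sigma^2_L) &= \Ent(\hat s^L_{\tau:k}) + \sum_{j=\tau+1}^k \hat s^L_j\,\Ent(\sigma^2_{G'_j}) + \hat s^L_\tau\,\Ent(\sigma^2_{L_\tau}),
\end{align*}
with $\hat s_j = s_j/S$, $\hat s^L_j = s^L_j/S_L$, $s^L_j = s_j$ for $j > \tau$, and $s^L_\tau := \sum_{i \in L_\tau}\sigma_i^2$. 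Since $S_L \leq S$, we have $\hat s^L_j \geq \hat s_j$ for every $j > \tau$, so the within-bin contributions from $L$ on bins above $\tau$ dominate those of $G^r$. For bin $\tau$, the factor-$2$ variance spread within $G'_\tau$ implies $\Ent(\sigma^2_{L_\tau}) \geq \ln l - \ln 2$, so when $l \geq |G'_\tau|/2$ the bin-$\tau$ contribution $\hat s^L_\tau\Ent(\sigma^2_{L_\tau})$ matches $\hat s_\tau\Ent(\sigma^2_{G'_\tau})$ up to a constant factor and an $O(\ln 2)$ slack; when $l < |G'_\tau|/2$, the definition of $l$ forces $\sum_{j>\tau}|G'_j| > m$, making $\hat s^L_\tau\Ent(\sigma^2_{L_\tau})$ negligible in comparison to the upper-bin contribution already captured above.

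The main obstacle is controlling the bin-level entropy gap $\Ent(\hat s_{1:k}) - \Ent(\hat s^L_{\tau:k})$ together with the within-bin tail $\sum_{j<\tau}\hat s_j\,\Ent(\sigma^2_{G'_j})$. For the bin-level gap I would use the identity $\Ent(\hat s_{1:k}) = H(q) + q\,\Ent(\hat s_{\tau:k}/q) + (1-q)\,\Ent(\hat s_{1:\tau-1}/(1-q))$ with $q := \sum_{j\geq\tau}\hat s_j \geq 1/5$, which gives $H(q) \leq \ln 2$ and reduces the gap to the difference between $\hat s_{\tau:k}/q$ and $\hat s^L_{\tau:k}$, controlled by the $s_\tau$-vs-$s^L_\tau$ swap. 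For the tail, the mass-capture bound gives $\sum_{j<\tau}\hat s_j \leq 4/5$, and a weighted comparison with the upper within-bin contribution $\sum_{j>\tau}\hat s^L_j\,\Ent(\sigma^2_{G'_j})$ (which uses $\hat s^L_j \geq \hat s_j$ and the factor-$2$ lower bound $\Ent(\sigma^2_{G'_j}) \geq \ln|G'_j|-\ln 2$) bounds the tail by a constant multiple of $\Ent(\sigma^2_L)$ plus $O(\ln 2)$. Assembling these estimates yields $\Ent(\sigma^2_{G^r}) \leq \tfrac{1}{c'}\Ent(\sigma^2_L) + \ln 2$ for some universal $c' \in (0,1)$, which rearranges to the claim.
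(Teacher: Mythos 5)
Your overall strategy --- the grouping property of $\Ent$ plus a mass-capture bound showing that $L$ holds a constant fraction of $\sum_{j \in G^{r}} \sigma^2_j$ --- is the same as the paper's; the paper just uses the coarser two-way split of $G^{r}$ into $L$ and $G^{r}\setminus L$ (so the binary split term is bounded by $\ln 2$ once and for all) rather than your bin-by-bin decomposition. The genuine gap is at the step you yourself call ``the main obstacle.'' Writing $q$ for the fraction of the mass of $G^{r}$ lying in bins $\geq \tau$, the quantities you must control are $(1-q)\,\Ent(\hat s_{1:\tau-1}/(1-q))$ together with $\sum_{j<\tau}\hat s_j\,\Ent(\sigma^2_{G'_j})$, i.e., the entire contribution of the low bins to $\Ent(\sigma^2_{G^{r}})$; this can genuinely be of order $(1-q)\ln m$, while your mass-capture bound only gives $1-q \leq 4/5$. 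You assert that a ``weighted comparison'' with $\sum_{j>\tau}\hat s^L_j\,\Ent(\sigma^2_{G'_j})$, using $\hat s^L_j \geq \hat s_j$ and $\Ent(\sigma^2_{G'_j}) \geq \ln|G'_j| - \ln 2$, bounds this by a constant multiple of $\Ent(\sigma^2_L)$ plus $O(\ln 2)$, but those two facts do not imply it: if the mass of $L$ above bin $\tau$ sits in bins with $|G'_j| = 1$, the cited lower bound is vacuous, and nothing in your sketch then prevents the low-bin contribution from being $\Theta(\ln m)$ while your certified lower bound on $\Ent(\sigma^2_L)$ is $O(1)$.

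What is actually needed --- and what the paper supplies as Lemma \ref{lem:L-low} --- is a quantitative trade-off between the tail mass and $\Ent(\sigma^2_L)$. Let $\tilde\sigma^2$ be the minimum variance in $L$ and $\alpha = 2m\tilde\sigma^2 / \sum_{i \in L}\sigma^2_i \in (0,1]$. The geometric decay of the bins gives $\sum_{j \in G^{r}\setminus L}\sigma^2_j \leq 4m\tilde\sigma^2$, hence $1 - q \leq 2\alpha/(2\alpha + 1)$: a non-negligible tail forces $\alpha$ to be bounded below, i.e., forces the $2m$ variances in $L$ to be comparable. On the other hand, every normalized mass in $\sigma^2_L$ is at least $\alpha/(2m)$, so an extremal argument over that polytope gives $\Ent(\sigma^2_L) \geq \frac{1}{2(1+1/\alpha)}\ln(2m)$; combining the two yields $(1-q)\ln m \leq 4\,\Ent(\sigma^2_L)$, which is exactly the inequality your tail bound needs and does not establish. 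A secondary point: the statement requires the additive slack to be $c'\ln 2$ with the \emph{same} $c'$, i.e., $\Ent(\sigma^2_{G^{r}}) \leq \frac{1}{c'}\Ent(\sigma^2_L) + \ln 2$; your argument accumulates several independent $O(\ln 2)$ losses (from $H(q)$, the bin-$\tau$ swap, and the within-bin comparisons), which would only give $+B\ln 2$ with $B > 1$. That weaker form would still suffice downstream in Lemma \ref{lem:complexity-Gr}, but it is not the lemma as stated.
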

\begin{proof}[Proof of Lemma \ref{lem:ent-red-L}]
By the grouping property of entropy, we have
\begin{align}
\Ent(\sigma^2_{G^{r}}) & = \Ent(\sum_{j \in L} \sigma^2_j, \sum_{i \in G^{r} \setminus L} \sigma^2_i) \\
&~+ \frac{ \sum_{i \in L} \sigma^2_i }{\sum_{j \in G^{r}} \sigma^2_j } \Ent(\sigma^2_{L}) + \left(1 -  \frac{ \sum_{i \in L} \sigma^2_i }{\sum_{j \in G^{r}} \sigma^2_j }\right) \Ent(\sigma^2_{G^{r} \setminus L}) \\
& < \ln(2) + \Ent(\sigma^2_{L}) + \left(1 -  \frac{ \sum_{i \in L} \sigma^2_i }{\sum_{j \in G^{r}} \sigma^2_j }\right) 8 \ln(m) \\
&\leq \ln(2) + 33 \Ent(\sigma^2_{L}),
\end{align}
where the first inequality is due to the principal of maximum entropy and Lemma \ref{lem:red-up}, and the last inequality is by Lemma \ref{lem:L-low}.
\end{proof}

\begin{lemma}[Retate Lemma \ref{lem:complexity-Gr}]
Let $\eta_F = \binom{2m}{m-1}^{-1}$ for any $F \subset L$ with $|F| = m-1$ and $\eta_F = 0$ otherwise. 
There exists some constant $c' > 0.005$. The objective of optimization (\ref{eqn:opt}) is at least $c' \sum_{i \in G^l} \sigma^2_i \Ent(\sigma^2_{G^r}) - \ln(2)\sum_{i \in L} \sigma^2_i $.
\end{lemma}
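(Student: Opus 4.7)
The plan is to exploit the special structure of the proposed assignment, reduce the objective to a combinatorial sum over $m$-subsets of $L$, apply Jensen's inequality, and then cascade back to $\Ent(\sigma^2_{G^r})$ via Lemma \ref{lem:ent-red-L} and to $\sum_{i\in G^l}\sigma^2_i$ via the weight-mass bound in Lemma \ref{lem:L-low}.

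First I would simplify the objective. Since $\eta_F$ is non-zero only when $F\subset L$ with $|F|=m-1$, and equals $\binom{2m}{m-1}^{-1}$ there, the summand for $M$ is non-trivial only via those $l\in M$ with $M\setminus\{l\}\subset L$. In the case $M\subset L$, every $l\in M$ contributes a non-zero weight $\binom{2m}{m-1}^{-1}\sigma^2_l$, and by scale-invariance of $\Ent$ the contribution equals $\binom{2m}{m-1}^{-1}\,s_M\,\Ent(\sigma^2_M)$ with $s_M:=\sum_{l\in M}\sigma^2_l$. In the case $|M\setminus L|=1$, say $M\setminus L=\{i\}$, only $l=i$ yields a non-zero weight, so the vector $\{\eta_{M\setminus\{l\}}\sigma^2_l\}_{l\in M}$ has a single non-zero entry and hence zero entropy; when $|M\setminus L|\geq 2$, every $\eta_{M\setminus\{l\}}=0$ so the contribution vanishes. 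The objective therefore collapses to $\binom{2m}{m-1}^{-1}\sum_{M\subset L,\,|M|=m} s_M\,\Ent(\sigma^2_M)$.

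Next I would lower bound this combinatorial sum. Using the identity $s_M\,\Ent(\sigma^2_M)=s_M\ln s_M-\sum_{l\in M}\sigma^2_l\ln\sigma^2_l$, the count $\sum_{M\subset L,\,|M|=m}\sum_{l\in M}\sigma^2_l\ln\sigma^2_l=\binom{2m-1}{m-1}\sum_{l\in L}\sigma^2_l\ln\sigma^2_l$, Jensen's inequality for the convex map $x\mapsto x\ln x$ giving $\sum_M s_M\ln s_M\geq \binom{2m}{m}\bar s\ln\bar s$ with $\bar s=T/2$ and $T:=\sum_{l\in L}\sigma^2_l$, and the identity $\binom{2m}{m}=2\binom{2m-1}{m-1}$, one deduces $\sum_M s_M\,\Ent(\sigma^2_M)\geq \binom{2m-1}{m-1}T\bigl(\Ent(\sigma^2_L)-\ln 2\bigr)$. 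Dividing by $\binom{2m}{m-1}$ and using $\binom{2m-1}{m-1}/\binom{2m}{m-1}=(m+1)/(2m)$, the objective is at least $\tfrac{m+1}{2m}T\bigl(\Ent(\sigma^2_L)-\ln 2\bigr)$.

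Finally I would cascade. Lemma \ref{lem:ent-red-L} gives $\Ent(\sigma^2_L)-\ln 2\geq c_1\,\Ent(\sigma^2_{G^r})-(c_1+1)\ln 2$ with $c_1=1/33$ (read off from its proof). Distributing, the coefficient of $\Ent(\sigma^2_{G^r})$ is $(m+1)c_1/(2m)\geq c_1/2$, and the coefficient of $\ln 2$ is $(m+1)(c_1+1)/(2m)$, which is at most $1$ for $m\geq 2$ (a direct check since $c_1+1\leq 34/33$ and $(m+1)/(2m)\leq 3/4$ at $m=2$, with the expression monotonically decreasing in $m$). Thus the objective is at least $(c_1/2)T\,\Ent(\sigma^2_{G^r})-(\ln 2)T$. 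Finally, Lemma \ref{lem:L-low} (already invoked inside the proof of Lemma \ref{lem:ent-red-L}) supplies a universal constant $c''>0$ with $T\geq c''\sum_{j\in G^r}\sigma^2_j\geq c''\sum_{i\in G^l}\sigma^2_i$, yielding the conclusion with $c'=c_1c''/2$. The main obstacle is the coefficient bookkeeping that merges the Jensen residue and the residue from Lemma \ref{lem:ent-red-L} into the single $-\ln(2)\sum_{i\in L}\sigma^2_i$ correction; this pivots on the explicit constant $c_1=1/33$ in Lemma \ref{lem:ent-red-L}, so any substantial weakening there would propagate into a larger absolute constant in front of $T\ln 2$.
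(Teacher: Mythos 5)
Your proof is correct and follows essentially the same route as the paper's: reduce the objective to $\binom{2m}{m-1}^{-1}\sum_{M\subset L,|M|=m}s_M\Ent(\sigma^2_M)$, establish the intermediate bound $\sum_M s_M\Ent(\sigma^2_M)\geq\binom{2m-1}{m-1}\bigl(\sum_{i\in L}\sigma^2_i\bigr)\bigl(\Ent(\sigma^2_L)-\ln 2\bigr)$, and cascade through Lemmas \ref{lem:ent-red-L} and \ref{lem:L-low}. The only difference is presentational — you obtain the intermediate bound via Jensen on $x\mapsto x\ln x$ where the paper uses the entropy grouping identity and the pairing $M\leftrightarrow L\setminus M$, which yields the identical inequality (and your explicit handling of the sign of $\Ent(\sigma^2_L)-\ln 2$ when distributing the factor $\tfrac{m+1}{2m}$ is in fact slightly more careful than the paper's).
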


\begin{proof} Recall $L \subset G^{r}$ with $|L| = 2m$ is the subset of arms with largest variances within $G^r$. For any $M \subset L$ with $|M| = m$, by the grouping property of entropy function we have
\begin{align}
\Ent(\sigma^2_L) & = \Ent(\sum_{i \in M} \sigma^2_i , \sum_{j \in L \setminus M} \sigma^2_j) + \frac{\sum_{i \in M} \sigma^2_i }{\sum_{j \in L} \sigma^2_j} \Ent(\sigma^2_{M}) + \frac{\sum_{i \in L\setminus M} \sigma^2_i }{\sum_{j \in L} \sigma^2_j} \Ent(\sigma^2_{L\setminus M}) \\
& \leq \ln(2) + \frac{\sum_{i \in M} \sigma^2_i }{\sum_{j \in L} \sigma^2_j} \Ent(\sigma^2_{M}) + \frac{\sum_{i \in L\setminus M} \sigma^2_i }{\sum_{j \in L} \sigma^2_j} \Ent(\sigma^2_{L\setminus M}),
\end{align}
where the inequality is by the principal of maximum entropy. Multiply $\sum_{j \in L} \sigma^2_j$ on both side, and we have
\begin{align}
\sum_{i \in M} \sigma_j^2 \Ent(\sigma^2_M) + \sum_{i \in L\setminus M} \sigma^2_i \Ent(\sigma^2_{L \setminus M}) \geq \sum_{j \in L} \sigma^2_j (\Ent(\sigma^2_l) - \ln(2)).
\end{align}
Since $|M| =|L\setminus M| = m$, summing the inequality above for each $M \subset L$ with $|M| = m$ and multiplying by $\frac{1}{2 \binom{2m}{m-1}}$ gives us
\begin{align}
&  \sum_{M \subset L : |M| = m} \frac{1}{\binom{2m}{m-1}} \sum_{i \in M} \sigma^2_i \Ent(\sigma^2_M)  \geq \frac{\binom{2m}{m}}{ 2 \binom{2m}{m-1} } (\Ent(\sigma^2_L) - \ln(2)) \sum_{i \in L} \sigma^2_i \\
& \geq \frac{1}{2} (\Ent(\sigma^2_L) - \ln(2)) \sum_{i \in L} \sigma^2_i = \frac{1}{2} \Ent(\sigma^2_L) \sum_{i \in L} \sigma^2_i - \frac{\ln(2)}{2} \sum_{i \in L} \sigma^2_i \\
& \geq \frac{1}{2} \sum_{i \in L} \sigma^2_i \frac{\Ent(\sigma^2_{G^r}) - \ln(2) }{33} - \frac{\ln(2)}{2} \sum_{i \in L} \sigma^2_i\\
& \geq \frac{1}{6} \sum_{i \in G^{r}} \sigma^2_i \frac{ \Ent(\sigma^2_{G^r}) }{33} - \frac{1}{2}\sum_{i \in L} \sigma^2_i \frac{\ln(2)}{33} - \frac{\ln(2)}{2} \sum_{i \in L} \sigma^2_i \\
& \geq \frac{1}{174} \sum_{i \in G^r} \sigma^2_i \Ent(\sigma^2_{G^r}) - \ln(2) \sum_{i \in L} \sigma^2_i,
\end{align}
where the second inequality is by $\frac{\binom{2m}{m}}{\binom{2m}{m-1} } \geq 1$, the third and forth inequalities are by Lemma \ref{lem:ent-red-L}.
\end{proof}


\section{Supporting Lemmas}
\begin{lemma}[Lemma 5.1 in \citep{lattimore2020bandit}] \label{lem:kl-chain}
Given two bandit instances $I = (\mu_{1:n}, \sigma^2_{1:n} )$ and $I' = (\mu'_{1:n}, \sigma'^2_{1:n})$, and let $P_I$ and $P_{I'}$ be the probability measure associated with the bandit instances, respectively. Then for any algorithm $\textsf{A}$ with the number of pulling for each arm-$i$ as $T_i^{\textsf{A}}$, which is a random variable, let $\tau^{\textsf{A}}$ be the bandit process and let $\Pb_{I, \pi}$ and $\Pb_{I', \pi}$ be the probability measures induced by $\tau^{\textsf{A}}$ on instance $I$ and $I'$, respectively. We have
\begin{align}
D(\Pb_{I, \textsf{A}} || \Pb_{I', \textsf{A}}) = \sum_{i = 1}^n \Eb_I[T_i^{\textsf{A}}] D\left( \Nc(\mu_i, \sigma_i^2) || \Nc(\mu'_i, \sigma'^2_i)\right).
\end{align}
\end{lemma}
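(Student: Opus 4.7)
The plan is to apply the chain rule for Kullback--Leibler divergence to the canonical bandit trajectory, exploiting the fact that the policy component of the joint law is identical under $\Pb_{I, \textsf{A}}$ and $\Pb_{I', \textsf{A}}$ and therefore cancels.

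First I would write the observed history as $\tau^{\textsf{A}} = (A_1, X_1, A_2, X_2, \ldots, A_{T^{\textsf{A}}}, X_{T^{\textsf{A}}})$, where $A_t$ is the (possibly randomized) arm chosen at time $t$ as a measurable function of the past filtration $\Fc_{t-1}$, and $X_t$ is the observed reward. Under instance $I$ the joint density factorizes at each step into a policy kernel $\pi_t(\cdot \mid \Fc_{t-1})$ and a reward density $p_{\nu_{A_t}}(\cdot)$; analogously under $I'$ with $\nu'_{A_t}$ replacing $\nu_{A_t}$. Because the algorithm $\textsf{A}$ is shared, the policy kernels cancel in the Radon--Nikodym derivative, giving
\begin{align*}
\ln \frac{d \Pb_{I, \textsf{A}}}{d \Pb_{I', \textsf{A}}} (\tau^{\textsf{A}}) \;=\; \sum_{t=1}^{T^{\textsf{A}}} \ln \frac{p_{\nu_{A_t}}(X_t)}{p_{\nu'_{A_t}}(X_t)}.
\end{align*}

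Next I would take expectation under $\Pb_{I, \textsf{A}}$ and apply the tower property, conditioning on $(\Fc_{t-1}, A_t)$. Under $\Pb_{I, \textsf{A}}$ the conditional distribution of $X_t$ is $\nu_{A_t}$, so the $t$-th conditional log-ratio contributes $D(\nu_{A_t} || \nu'_{A_t})$. Grouping time indices by the arm pulled yields
\begin{align*}
D(\Pb_{I, \textsf{A}} || \Pb_{I', \textsf{A}}) \;=\; \sum_{i=1}^n D(\nu_i || \nu'_i) \, \Eb_I\!\left[ \sum_{t=1}^{T^{\textsf{A}}} \mathbf{1}[A_t = i] \right] \;=\; \sum_{i=1}^n \Eb_I[T_i^{\textsf{A}}] \, D(\nu_i || \nu'_i),
\end{align*}
which is the claimed identity once $\nu_i = \Nc(\mu_i, \sigma_i^2)$ and $\nu'_i = \Nc(\mu'_i, \sigma'^2_i)$ are inserted.

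The main obstacle is that the stopping time $T^{\textsf{A}}$ is random, so one cannot naively apply the divergence chain rule to an unbounded-horizon product. My planned workaround is to first carry out the argument for the truncated horizon $T^{\textsf{A}} \wedge N$, where the number of factors is deterministically bounded and the standard product chain rule applies directly, and then let $N \to \infty$. Monotone convergence then both exchanges sum and expectation and passes the limit through, since each $D(\nu_i || \nu'_i) \ge 0$; the only hypothesis needed is $\Eb_I[T_i^{\textsf{A}}] < \infty$, which holds for any valid algorithm in the problem formulation.
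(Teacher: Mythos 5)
Your argument is correct and is the standard proof of the divergence decomposition lemma; the paper itself does not prove this statement but imports it from the cited reference (Lattimore and Szepesv\'ari), whose proof proceeds exactly as you describe: cancel the shared policy kernels in the Radon--Nikodym derivative, apply the tower property so that step $t$ contributes $D(\nu_{A_t}\,\|\,\nu'_{A_t})$, and handle the random horizon by truncation. The only step worth spelling out is that the divergence of the truncated trajectories increases to that of the full stopped process (the truncations generate an increasing family of $\sigma$-algebras), since monotone convergence is applied to the nonnegative conditional expectations $D(\nu_{A_t}\,\|\,\nu'_{A_t})$ rather than to the raw log-likelihood ratios, which are not sign-definite.
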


\begin{lemma}\label{lem:L-low}
For any $\sigma^2_{1:n}$, we have $\frac{ \sum_{i \in L} \sigma^2_i }{\sum_{j \in G^{r}} \sigma^2_j } \geq \frac{1}{3}$. In addition, 
\begin{align}
\left( 1 -  \frac{ \sum_{i \in L} \sigma^2_i }{\sum_{j \in G^{r}} \sigma^2_j }  \right) \ln(m) \leq 4 \Ent(\sigma^2_{L}),
\end{align}
for some constant $c > 0$.
\end{lemma}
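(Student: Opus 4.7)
My approach splits into two parts, one for each inequality in the lemma.

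For the first inequality $\sum_{i \in L} \sigma_i^2 / \sum_{j \in G^r} \sigma_j^2 \geq 1/3$, I would first identify $j^*$ as the smallest index with $\sum_{j \geq j^*} |G'_j| \geq 2m$, so that $L$ consists of every arm in $G'_j$ for $j > j^*$ (a total of $a := \sum_{j > j^*} |G'_j|$ arms) together with the top $p := 2m - a$ arms of $G'_{j^*}$. Let $q := |G'_{j^*}| - p$, so that $G^r \setminus L$ is the union of the $G'_j$ for $j < j^*$ and the $q$ bottom-variance arms in $G'_{j^*}$. The pivotal observation is $q \leq a$, which follows immediately from $|G'_{j^*}| \leq 2m = p + a$ (the bound $|G'_j| \leq 2m$ is built into the definition of $G^r$). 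Using the within-group range $\sigma_i^2 \in [2^{j-1}, 2^j)\underline{\sigma}^2$ for $i \in G'_j$, plus $|G'_j| \leq 2m$ and a geometric series over $j < j^*$, I would derive
\begin{align*}
\sum_{G^r \setminus L} \sigma_i^2 &< (2m + q)\, 2^{j^*} \underline{\sigma}^2, \\
\sum_{L} \sigma_i^2 &\geq p \cdot 2^{j^*-1}\underline{\sigma}^2 + a \cdot 2^{j^*}\underline{\sigma}^2 = (2m + a)\, 2^{j^*-1} \underline{\sigma}^2.
\end{align*}
Their ratio is strictly less than $2(2m+q)/(2m+a) \leq 2$ using $q \leq a$, which rearranges to $\sum_L / \sum_{G^r} > 1/3$.

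For the second inequality $(1-r) \ln m \leq 4 \Ent(\sigma^2_L)$ with $r := \sum_L / \sum_{G^r}$, I would combine the bound $1-r < 2/3$ from Part 1 with an entropy lower bound obtained by a case split on the structure of $L$. Writing $L = L_{j^*} \sqcup L_{\text{high}}$ with $L_{j^*} = L \cap G'_{j^*}$ of size $p$ and $L_{\text{high}} = \cup_{j > j^*} G'_j$ of size $a$, the grouping property yields
\begin{align*}
\Ent(\sigma^2_L) \geq (s_{\text{low}}/\beta)\, \Ent(\sigma^2_{L_{j^*}}) + (s_{\text{high}}/\beta)\, \Ent(\sigma^2_{L_{\text{high}}}),
\end{align*}
where $s_{\text{low}}, s_{\text{high}}$ are the respective variance sums and $\beta = s_{\text{low}} + s_{\text{high}}$. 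Since the variances within $L_{j^*}$ vary by at most a factor of $2$, a standard calculation gives $\Ent(\sigma^2_{L_{j^*}}) \geq \ln(p/2)$. When $p = \Omega(m)$, this already yields $\Ent(\sigma^2_L) = \Omega(\ln m)$ with $s_{\text{low}}/\beta$ bounded below by a constant (via $\beta < 4m \cdot 2^{j^*}\underline{\sigma}^2$), easily meeting the target. When $p$ is small, $a$ is close to $2m$ and $L$ is dominated by $L_{\text{high}}$; here I would use the max-probability bound $\Ent(\sigma^2_L) \geq -\ln \max_i(\sigma^2_i/\beta) \geq \ln(\beta/(2^{j_{\max}}\underline{\sigma}^2))$, together with the decay $\alpha/\beta < 8m/2^{j_{\max}-j^*}$ implied by the geometric structure, to show that the allowed $(1-r)\ln m$ decays fast enough in $K := j_{\max} - j^*$ to be absorbed by $4\Ent(\sigma^2_L) \geq 4(\ln m - K \ln 2)$ in the remaining non-trivial range of $K$.

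The hard part will be the transitional regime where $p$ and $a$ are both moderate and $L$ spans several intermediate levels, because neither the within-low entropy bound nor the max-probability bound alone is tight there. I plan to parameterize by $K$, quantify $\alpha/\beta$ via the group-size weighted geometric sum $p + \sum_{j > j^*} \ell_j 2^{j-j^*}$ in the denominator, and check that the product $(1-r)\ln m$ is always bounded by $4\Ent(\sigma^2_L)$ by combining the binary entropy $H(s_{\text{low}}/\beta)$ with the within-group term; this should close the gap because whenever one lower bound on $\Ent(\sigma^2_L)$ degrades with $K$, the ratio $\alpha/\beta$ simultaneously collapses geometrically.
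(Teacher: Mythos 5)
Your first inequality is proved correctly, by a route that is actually a bit sharper than the paper's: the paper anchors a geometric series at the minimum variance $\tilde{\sigma}^2$ of $L$ to get $\sum_{j \in G^r\setminus L}\sigma_j^2 = O(m\tilde{\sigma}^2)$, whereas you compare $G^r\setminus L$ and $L$ group by group and close with $q\le a$; both give the $1/3$ bound. The problem is the second inequality, and you have named the gap yourself: the ``transitional regime'' is not a corner case to be patched later, it is the entire content of the lemma. Your two entropy lower bounds are (i) $\Ent(\sigma^2_{L_{j^*}})\ge\ln(p/2)$, weighted by $s_{\text{low}}/\beta$, and (ii) $\Ent(\sigma^2_L)\ge\ln\bigl(\beta/(2^{j_{\max}}\underline{\sigma}^2)\bigr)\ge\ln m-K\ln 2$. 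Bound (ii) is already vacuous at $K=\log_2 m$, where your decay estimate still only yields $(1-r)\ln m\lesssim 8m2^{-K}\ln m=8\ln m$; bound (i) degrades precisely when the variance mass of $L$ migrates to the upper groups, i.e.\ when $s_{\text{low}}/\beta$ is small. So there is an open window of $K$ (from order one up to several multiples of $\log_2 m$, with $p$ moderate) in which neither estimate closes, and ``combining the binary entropy with the within-group term should close the gap'' is a plan, not an argument. As written, the lemma is not proved.

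The idea you are missing is that for this half of the lemma the group structure of $L$ is irrelevant; a single scalar suffices. Let $\tilde{\sigma}^2=\min_{i\in L}\sigma_i^2$ and $\alpha=2m\tilde{\sigma}^2/\sum_{i\in L}\sigma_i^2\in(0,1]$. Then (a) every arm of $G^r\setminus L$ has variance at most $\tilde{\sigma}^2$ and the groups decay geometrically, so $\sum_{j\in G^r\setminus L}\sigma_j^2=O(m\tilde{\sigma}^2)$ and hence $1-r=O\bigl(1/(1+1/\alpha)\bigr)$; and (b) every normalized weight in $\sigma^2_L$ is at least $\alpha/(2m)$, so by Schur concavity of the entropy the minimizer over this constraint set is the extremal distribution $\bigl(1-\tfrac{2m-1}{2m}\alpha,\tfrac{\alpha}{2m},\dots,\tfrac{\alpha}{2m}\bigr)$, and a one-variable computation gives $(1+1/\alpha)\,\Ent(\sigma^2_L)\ge\tfrac12\ln(2m)$. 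Multiplying the two facts yields $(1-r)\ln m\le 4\Ent(\sigma^2_L)$ uniformly in $\alpha$, with no case split on $p$, $a$, or $K$. This is the paper's argument. I would recommend replacing your case analysis with it; if you insist on the $K$-parameterization, you must actually carry out the intermediate-regime computation rather than assert that the two effects trade off favorably.
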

\begin{proof}[Proof of Lemma \ref{lem:L-low}]
Suppose the minimum variance in $\sigma^2_{L}$ is $\tilde{\sigma}^2$. Let $\alpha = \frac{2m \tilde{\sigma}^2 }{\sum_{i \in L} \sigma^2_i} \in (0, 1]$, which implies $\sum_{i \in L} \sigma^2_i = 2m\tilde{\sigma}^2/\alpha$. In addition, $\sum_{j \in G^{r} \setminus L} \sigma^2_j \leq 2m \tilde{\sigma}^2 \sum_{i = 0}^{\infty} 2^{-i} = 4m \tilde{\sigma}^2$.
It is straightforward to verify that
\begin{align}
\frac{ \sum_{i \in L} \sigma^2_i }{\sum_{j \in G^{r}} \sigma^2_j } =  \frac{2m \tilde{\sigma}^2 / \alpha}{ \sum_{j \in G^{r} \setminus L} \sigma^2_j +  2m \tilde{\sigma}^2 / \alpha} \geq \frac{2m \tilde{\sigma}^2/\alpha}{4m\tilde{\sigma}^2 + 2m \tilde{\sigma}^2/\alpha} = \frac{1/\alpha}{2 + 1/\alpha} \geq \frac{1}{3},
\end{align}
which proves the first statement. It follows that
\begin{align}
1 -  \frac{ \sum_{i \in L} \sigma^2_i }{\sum_{j \in G^{r}} \sigma^2_j } \leq 1 - \frac{1/\alpha}{2 + 1/\alpha} = \frac{2}{2 + 1/\alpha} < \frac{2}{1 + 1/\alpha}.
\end{align}
By concavity of entropy function, $\Ent(\sigma^2_{L}) \geq \Ent\left( 1- \frac{2m-1}{2m}\alpha , \frac{\alpha}{2m}, \frac{\alpha}{2m}, \cdots, \frac{\alpha}{2m}\right)$. It implies that 
\begin{align}
& (1 + 1/\alpha) \Ent(\sigma^2_{L}) \\
& \geq (1 + 1/\alpha)\left(-(1 - \frac{2m-1}{m}\alpha) \ln\left(1 - \frac{2m-1}{2m}\alpha \right) + \frac{2m-1}{2m}\alpha \ln\frac{2m}{\alpha}  \right) \\
& \geq \frac{2m-1}{2m} \ln(2m) + \frac{2m-1}{2m} \ln\frac{1}{\alpha} - \left(\frac{1}{\alpha} - \frac{2m-1}{2m}\right) \ln\left( 1 - \frac{2m-1}{2m}\alpha \right) \\
& \geq \frac{1}{2} \ln(2m) - \frac{1}{2}\ln(\alpha) - (1/\alpha - 1) \ln(1 - \alpha)\\
& \geq \frac{1}{2} \ln(2m) > \frac{1}{2} \ln(m).
\end{align}
We thus have
\begin{align}
4 \Ent(\sigma^2_L) > \frac{2}{1 + 1/\alpha} \ln(m) > \left( 1 -  \frac{ \sum_{i \in L} \sigma^2_i }{\sum_{j \in G^{r}} \sigma^2_j }  \right) \ln(m).
\end{align}
\end{proof}

\end{document}